\documentclass[lettersize,journal]{IEEEtran}

\usepackage{orcidlink}
\hypersetup{hidelinks} % hide the hyperlink box
\usepackage{hyperref}
\usepackage{amsmath,amsfonts}
\usepackage{diagbox}
% if you use cleveref..
\usepackage[capitalize,noabbrev]{cleveref}
\usepackage{amssymb}

%\pdfoutput=1
\usepackage{mathtools}
\usepackage{multirow}
\usepackage{float}
\usepackage{graphicx}
\usepackage[figuresright]{rotating}
\usepackage{multicol}
\usepackage{paralist}
\usepackage{bm}
\usepackage{multirow}
\usepackage{color}
\usepackage{theorem}
\newtheorem{theorem}{Theorem}
\newtheorem{lemma}{Lemma}
\newtheorem{definition}{Definition}
\newtheorem{proof}{Proof}
\usepackage{algorithm}
\usepackage{graphicx}
\usepackage{algorithmic}

\hyphenation{op-tical net-works semi-conduc-tor}
\usepackage{ifpdf}

\usepackage{booktabs}

\usepackage{array}
\usepackage[caption=false,font=normalsize,labelfont=sf,textfont=sf]{subfig}
\usepackage{textcomp}
\usepackage{stfloats}
\usepackage{url}
\usepackage{verbatim}

\hyphenation{op-tical net-works semi-conduc-tor IEEE-Xplore}
\def\BibTeX{{\rm B\kern-.05em{\sc i\kern-.025em b}\kern-.08em
    T\kern-.1667em\lower.7ex\hbox{E}\kern-.125emX}}
\usepackage{balance}

\begin{document}

\pdfoutput=1
\ifpdf
\title{Neural Operator Variational Inference based on Regularized Stein Discrepancy for Deep Gaussian Processes }
\author{Jian Xu\orcidlink{0000-0002-0350-5528}, Shian Du\orcidlink{0000-0001-9392-6846}, Junmei Yang\orcidlink{0000-0002-9677-0768}, Qianli Ma\orcidlink{0000-0002-9356-2883}, ~\IEEEmembership{Member,~IEEE}, \\ and Delu Zeng\orcidlink{0000-0001-7322-1873}, ~\IEEEmembership{Member,~IEEE}
\thanks{ Manuscript received April 10, 2023; revised December 17, 2023 and
February 29, 2024; accepted May 22, 2024. The work is supported by the Fundamental Research Program of Guangdong, China, under Grant 2023A1515011281; and in part by the National Natural Science Foundation of China under Grant 61571005. (\emph{Jian Xu and Shian Du contributed equally to this work.}) (\emph{Corresponding authors: Qianli Ma and Delu Zeng.})}
\thanks{Jian Xu (e-mail: 2713091379@qq.com) is with School
of Mathematics, South China University of Technology, Guangzhou, China. Shian Du (e-mail: dsa1458470007@gmail.com) is with Shenzhen International Graduate School, Tsinghua University, Shenzhen, China.}
\thanks{Junmei Yang (e-mail: yjunmei@scut.edu.cn), Delu Zeng (e-mail: dlzeng@scut.edu.cn) are affiliated with  School of Electronic and Information Engineering at South China University of Technology, Guangzhou, China. Qianli Ma (e-mail: qianlima@scut.edu.cn) is affiliated with  School of Computer Science  and Engineering at South China University of Technology, Guangzhou, China.}}

\maketitle

\begin{abstract}
Deep Gaussian Process (DGP) models offer a powerful nonparametric approach for Bayesian inference, but exact inference is typically intractable, motivating the use of various approximations. However, existing approaches, such as mean-field Gaussian assumptions, limit the expressiveness and efficacy of DGP models, while stochastic approximation can be computationally expensive. To tackle these challenges, we introduce Neural Operator Variational Inference (NOVI) for Deep Gaussian Processes. NOVI uses a neural generator to obtain a sampler and minimizes the Regularized Stein Discrepancy (RSD)  between  the generated distribution  and true posterior in $\mathcal{L}_2$ space. We solve the minimax problem using Monte Carlo estimation and subsampling stochastic optimization techniques, and demonstrate that the bias introduced by our method can be controlled by multiplying the Fisher divergence with a constant, which leads to robust error control and ensures the stability and precision of the algorithm. Our experiments on datasets ranging from hundreds to millions demonstrate the effectiveness and the faster convergence rate of the proposed method. We achieve a classification accuracy of 93.56 on the CIFAR10 dataset, outperforming state-of-the-art (SOTA) Gaussian process methods. We are optimistic that NOVI possesses the potential to enhance the performance of deep Bayesian nonparametric models and could have significant implications for various practical applications.
\end{abstract}

\begin{IEEEkeywords}
Deep Gaussian Processes, Operator Variational Inference, Neural Network Generator
\end{IEEEkeywords}

\section{Introduction}
\label{sec: introduction}
\IEEEPARstart{G}{aussian} 
 processes (GPs) \cite{Rasmussen06} are widely used in statistical inference and machine learning due to their effectiveness in modeling the relationship between inputs and outputs. For example, they have been successfully applied to modeling the dynamics of complex systems, such as robots or autonomous vehicles, for tasks such as trajectory planning \cite{cheng2022real}, adaptive control \cite{chowdhary2014bayesian}, and anomaly detection \cite{cho2019hierarchical}. The assumption that the latent function values follow a joint Gaussian distribution may not always hold, and in some scenarios, it can be overly restrictive \cite{dutordoir2021deep}. For example, when dealing with non-Gaussian and non-stationary processes \cite{hebbal2018efficient,lu2020interpretable}, such as those found in financial time series or climate modeling, the Gaussian assumption may not be appropriate. Therefore, Deep Gaussian processes (DGPs) have been proposed as an alternative approach to address these limitations in GP models.

A Deep Gaussian Process (DGP) model is a hierarchical composition of GP models that offers a probabilistic nonparametric approach with robust uncertainty quantification \cite{ober2021global}. The non-Gaussian distribution over composition functions provides expressive capacity but also presents challenges for inference \cite{dunlop2018deep}. Previous research on DGP models has used variational inference with a combination of sparse Gaussian processes \cite{Snelson06,candela05,gibbs2000variational,mao2020multiview} and mean-field Gaussian assumptions \cite{pmlr-v38-hensman15,deisenroth2015distributed,Yarin14,Lawrence13,NghiaICML16,HoangICML16,Titsias09a,liu2020gaussian} to approximate the posterior distribution.Stochastic optimization techniques have been used to scale up DGPs to handle large datasets, such as Doubly Stochastic Variational Inference (DSVI) 
\cite{salimbeni2017doubly}.These strategies often incorporate a collection of inducing points ($M\ll N$) whose position is learned alongside the other model hyperparameters, reduicng the training cost to $\mathcal{O} \left( NM^2 \right)$.

The mean-field Gaussian assumptions in approximate posterior distributions simplify computations, but can impose overly stringent constraints on DGP models, potentially limiting their expressiveness and effectiveness. Stochastic approximation approaches, such as SGHMC \cite{havasi2018inference}, draw unbiased samples from the posterior distribution, but their sequential sampling method can be computationally expensive for both training and prediction. Additionally, evaluating their convergence in finite time can be challenging \cite{gao2021global}.

While previous literature has explored various methods to approximate the non-mean-field Gaussian posterior, to the best of our knowledge, none has fully addressed the important problem of inducing point distribution in DGP inference. For instance, previous approaches, such as those proposed in \cite{wu2021hierarchical, lindinger2020beyond, shi2020sparse}, attempted to design grids, orthogonal structures, or other special structures among inducing points. However, such structures may be handcrafted and may introduce bias by not fully capturing the information of the inducing points from the data. Other approaches, such as those based on normalizing flows \cite{rezende2015variational, yu2021convolutional}, face invertibility constraints that limit the flexibility of the transformation form of neural networks \cite{dinh2016density}. In addition, implicit distributions variational inference  \cite{mescheder2017adversarial, ma2019variational, yu2019implicit, sun2019functional, rodriguez2022adversarial, rodriguez2022function} attempts to estimate the difficult-to-handle non-Gaussian posterior variational lower bound through adversarial networks. However, controlling the bias and variance of the density ratio in high-dimensional space becomes exceedingly difficult, hindering the scalability and effectiveness of this approach \cite{sugiyama2012density, titsias2019unbiased}.

Therefore, in the context of high-dimensional non-mean-field scenarios, we propose a new variational inference framework for DGP based on Stein discrepancy (SD) \cite{liu2016stein,wang2020stabilizing}. This is because SD provides accurate and efficient measures of distance between probability distributions, alleviating various computational issues in computing KL divergence. Unlike KL divergence, SD does not require computing the normalization constant of the distribution, and its gradient form often contains high-order information and geometric properties such as the curvature of the distribution, making it more effective for optimizing DGP high-dimensional non-mean-field posterior distributions and providing advantages in terms of error control and convergence rate \cite{goodfellow2016deep,huggins2018practical}.

In this work, we introduce a novel inference framework for DGP models called Neural Operator Variational Inference (NOVI), which utilizes operators to optimize a regularized Stein Discrepancy with data subsampling.Specifically, we use Gaussian noise to transform a simple low-dimensional distribution into a high-dimensional complex distribution through a neural network generator, and then minimize the regularized Stein discrepancy  between the generated distribution and the true posterior distribution using a SGD-based approach to obtain the gradients of the generator. The NOVI approach solves a minimax problem by Monte Carlo estimation, and offers a black-box algorithmic solution that can handle complex posterior distributions for DGP models.

The main contributions are as follows:
\begin{itemize}

\item[$\bullet$] We propose NOVI for DGPs, a novel variational framework based on Stein discrepancy and operator variational inference with a neural generator. It minimizes Regularized Stein Discrepancy in $\mathcal{L}_2$ space between the generated  distribution and true posterior to construct a more flexible and wider class of posterior approximations overcoming previous limitations caused by mean-field Gaussian posterior assumptions and the issues of
non-mean-field minimization of KL divergence in  the context of high-dimensional inducing points scenarios.
\item[$\bullet$] We provide theoretical evidence that our training schedule is essentially optimizing the Fisher divergence between  the generated distribution and the true posterior distribution. Additionally, the bias introduced by our method can be effectively controlled by multiplying the Fisher divergence with a constant. This feature of our approach enables us to achieve robust error control, ensuring the stability and precision of the algorithm.
\item[$\bullet$] We have conducted experimental demonstrations on eight UCI regression datasets and image classification datasets, which include MNIST, Fashion-MNIST, and CIFAR-10. The results demonstrate remarkable performance and faster convergence speeds than state-of-the-art methods, validating the effectiveness of the proposed model. By employing a convolutional architecture, we have achieved a classification accuracy of 93.56$\%$ on the CIFAR-10 dataset, surpassing the performance of state-of-the-art Gaussian process methods. 
\end{itemize}

Our code which encompasses comprehensive details and the full implementation of our NOVI-DGP approach, can be accessed on our public GitHub repository \href{https://github.com/studying910/NOVI-DGP}{https://github.com/studying910/NOVI-DGP}.

\section{Preliminary} 
\label{sec: background}
In this section, we  present necessary notations and settings on single-layer Gaussian Processes (GPs) and Deep Gaussian Processes (DGPs), then we point out the flaws of current model and introduce our motivation.

\subsection{Gaussian Processes}
\label{subsec: Preliminary on single layer gaussian process}
%\vspace{-2mm}
Let a random function $f: \mathbb{R}^D \rightarrow \mathbb{R}$ map $N$ training inputs $\mathbf{X} \triangleq \{\mathbf{x}_n\}_{n=1}^{N}$ to a collection of noisy observed outputs $\mathbf{y} \triangleq \{ y_n\}_{n=1}^{N}$. In general, a zero mean GP prior is imposed on the function $f$, i.e., $f \sim \mathcal{G P}(0, k)$ where $k$ represents a covariance function $k:\mathbb{R}^D \times \mathbb{R}^D \rightarrow \mathbb{R}$.
Let $\mathbf{f} \triangleq (f(\mathbf{x}_1),...,f(\mathbf{x}_N))^\top$ represent the latent function values at the inputs $\mathbf{X}$. This assumption yields a multivariate Gaussian prior over the function values $p(\mathbf{f}) =\mathcal{N}(\mathbf{f}|\mathbf{0}, \mathbf{K}_{\mathbf{X}\mathbf{X}})$ where 
$\left[ \mathbf{K}_{\mathbf{XX}} \right] _{ij}=k\left( \mathbf{x}_i,\mathbf{x}_j \right)$. In this work, we suppose $\mathbf{y}$ is contaminated by an i.i.d  noise, thus $p(\mathbf{y}|\mathbf{f})=\mathcal{N}(\mathbf{y}|\mathbf{f}, \sigma ^2\mathbf{I})$  where $\sigma^2$ is the noise variance. %Figure \ref{fig:model}($a$) is a graphical illustration  of a typical GP . 
The GP  posterior of the latent output $p\left( \mathbf{f}|\mathbf{y} \right)$ has a closed-form solution \cite{Rasmussen06} but suffers from $\mathcal{O}(N^3)$ computational cost and $\mathcal{O}(N^2)$ storage requirement, thus limiting its scalability to big data.
 %The computational cost of the exact inference of GP is $O(N^3)$, rendering it computationally infeasible for large dataset. 

 %Owing to the reasons above, 
Advanced sparse methods have been developed to set so-called \emph{inducing points} $\mathbf{z} = \{\mathbf{z}_m\}_{m=1}^{M}$ from the input space and the associated inducing outputs known as \emph{inducing variables}: $\mathbf{u} = \{u_m = f(\mathbf{z}_m)\}_{m=1}^{M}$ \cite{Titsias09,Snelson06,candela05}, with a time complexity of $\mathcal{O}(NM^2)$. In this \emph{Sparse GPs} (SGPs) paradigm \cite{Titsias09}, %as shown in Figure \ref{fig:model}($b$), 
\emph{inducing variables} $\mathbf{u}$ share a joint multivariate Gaussian distribution with $\mathbf{f}$: $p(\mathbf{f}, \mathbf{u}) = p(\mathbf{f}|\mathbf{u})p(\mathbf{u})$
where the condition is specified as, 
\begin{equation} 
\label{condi}
p(\mathbf{f}|\mathbf{u}) = \mathcal{N}(\mathbf{f}|\mathbf{K}_{\mathbf{X}\mathbf{Z}}\mathbf{K}_{\mathbf{Z}\mathbf{Z}}^{-1}\mathbf{u}, \mathbf{K}_{\mathbf{X}\mathbf{X}} - \mathbf{K}_{\mathbf{X}\mathbf{Z}}\mathbf{K}_{\mathbf{Z}\mathbf{Z}}^{-1}\mathbf{K}_{\mathbf{Z}\mathbf{X}}),
\end{equation}
and $p\left( \mathbf{u} \right) =\mathcal{N} \left(\mathbf{u}| \mathbf{0},\mathbf{K}_{\mathbf{ZZ}} \right) $ is the prior over the inducing outputs. 
              
%In SGPs the posterior distribution of the latent function $p(\rvf|\vy)$ is intractable since $\mathbf{u}$ need be marginalized, resulting in the following integral:
%\begin{equation}
 % \displaystyle
  %  p(\rvf | \vy)=\int p(\rvf | \rvu) p(\rvu|\vy) d \rvu
%\end{equation} 
%which is difficult to solve since the posterior distribution of inducing variables $p(\mathbf{u} | \mathbf{y})$ is intractable.

To solve  the intractable posterior distribution of inducing variables $p(\mathbf{u} | \mathbf{y})$, \emph{Sparse variational GPs} (SVGPs) \cite{Titsias09, pmlr-v38-hensman15} reformulate the posterior inference problem as variational inference (VI) and confine the variational distribution to be $q(\mathbf{f}, \mathbf{u}) = p(\mathbf{f}| \mathbf{u}) q({\mathbf{u}})$ \cite{Lawrence13,Titsias09,Yarin14,salimbeni2017doubly}. This method approximates $q(\mathbf{u}) = \mathcal{N}(\mathbf{m}, \mathbf{S})$ \cite{pmlr-v38-hensman15,deisenroth2015distributed,Yarin14,Lawrence13,NghiaICML16,HoangICML16,Titsias09a},  then a Gaussian marginal\footnote{The solution is given in App. A} is obtained by maximizing the evidence lower bound (ELBO) \cite{hoffman2013stochastic}. 

\subsection{Deep Gaussian Processes}
\label{subsec: deep gaussian process}
%\vspace{-2mm}

A multi-layer DGP model is a hierarchical composition of GP models constructed by stacking the muti-output SGPs together \cite{damianou2013deep}. %as shown in Figure \ref{fig:model}($c$). 
Consider a model with $L$ layers and $D_\ell$ independent random functions in layer $\ell = 1,\dots,L$ such that output of the $(\ell-1)^\mathrm{th}$ layer $\mathbf{F}_{\ell-1}$ is used as an input to the $\ell^\mathrm{th}$ layer, i.e., $\mathbf{F}_{\ell}\triangleq \{ \mathbf{F}_{\ell ,1}=f_{\ell ,1} \left( \mathbf{F}_{\ell -1}\right),\cdots ,\mathbf{F}_{\ell ,D_{\ell}}=f_{\ell, D_{\ell}}\left( \mathbf{F}_{\ell -1} \right) \}$, where $f_{\ell ,d}\sim \mathcal{G} \mathcal{P} (0,k_{\ell})$ for $d = 1,\dots,D_\ell$ and $\mathbf{F}_0 \triangleq \mathbf{X}$. 
%Compared to single-layer GPs, the output of each layer is a vector $\mathbf{F}_{\ell-1}$, and its components are independent of each other. 
The inducing points and corresponding inducing variables for  each layer are denoted by $\boldsymbol{\mathcal{Z}}\triangleq\{\mathbf{Z}_{\ell}\}_{\ell=1}^L$ and $\mathcal{U} \triangleq\{\mathbf{U}_{\ell}\}_{\ell=1}^L$ respectively where $\mathbf{U}_{\ell}\triangleq \left\{ \mathbf{U}_{\ell ,1}=f_{\ell, 1}\left( \mathbf{Z}_{\ell}\right) ,\cdots, \mathbf{U}_{\ell ,D_{\ell}} = f_{\ell, D_{\ell}} \left(\mathbf{Z}_{\ell} \right) \right\} $. Let $\mathcal{F} \triangleq\{\mathbf{F}_{\ell}\}_{\ell=1}^L$, the DGP model design yields the following joint model density, 
\begin{equation}
\label{dgplikelihood}
     p(\mathbf{y},\mathcal{F} ,\mathcal{U} )=p\left( \mathbf{y}|\mathbf{F}_L \right) \prod_{\ell =1}^L{p}(\mathbf{F}_{\ell}|\mathbf{F}_{\ell -1},\mathbf{U}_{\ell})p\left( \mathcal{U} \right). 
\end{equation}
Here we place independent GP priors within and across layers on $\mathcal{U}$: $p( \mathcal{U} ) =\prod_{l=1}^L{p( \mathbf{U}_l )}=\prod_{l=1}^L{\prod_{d=1}^{D_{\ell}}{\mathcal{N} \left(\mathbf{U}_{\ell,d}| 0,\mathbf{K}_{\mathbf{Z}_{\ell}\mathbf{Z}_{\ell}} \right)}}$ and the condition similar to Equation (\ref{condi}) is defined as follows, 
\begin{equation}
\begin{array}{l}
p\left(\mathbf{F}_{\ell} \mid \mathbf{F}_{\ell-1}, \mathbf{U}_{\ell}\right)=\prod_{d=1}^{D_{\ell}} \mathcal{N}\left(\mathbf{F}_{\ell, d} \mid \mathbf{K}_{\mathbf{F}_{\ell-1}\mathbf{Z}_{\ell}} \mathbf{K}_{\mathbf{Z}_{\ell} \mathbf{Z}_{\ell}}^{-1} \mathbf{U}_{\ell, d}\right. ,\\
\left.\mathbf{K}_{\mathbf{F}_{\ell-1}\mathbf{F}_{\ell-1}} -\mathbf{K}_{\mathbf{F}_{\ell-1}\mathbf{z}_{\ell} } \mathbf{K}_{\mathbf{Z}_{\ell} \mathbf{Z}_{\ell}}^{-1} \mathbf{K}_{\mathbf{Z}_{\ell} \mathbf{F}_{\ell-1}}\right).
\end{array}
\end{equation}
As an extension of Variational Inference with DGPs, DSVI \cite{salimbeni2017doubly} approximates the posterior by requiring the distribution across the inducing outputs to be a-posteriori Gaussian and independent amongst distinct GPs to obtain an analytical ELBO (known as the mean-field assumption \cite{opper2001advanced,hoffman2013stochastic}, %see also in Figure \ref{fig:model}($d$)), 
$q\left( \mathbf{U}_{\ell ,1:D_{\ell}} \right) =\mathcal{N} \left( \mathbf{m}_{\ell, 1:D_{\ell}},\mathbf{S}_{\ell ,1:D_{\ell}} \right) $, where $\mathbf{m}_{\ell ,1:D_{\ell}}$ and $\mathbf{S}_{\ell ,1:D_{\ell}}$ are variational parameters. By iteratively sampling the layer outputs and utilizing the reparameterisation trick \cite{kingma2013auto}, DSVI enables scalability to big datasets. 

The variational posterior distribution $q(\mathcal{U})$ in traditional approximation approaches for Deep Gaussian Process (DGP) models assumes that the distribution follows a mean-field Gaussian, which simplifies the analytical marginalization of the inducing outputs. However, this assumption is overly strict and may limit the effectiveness and expressiveness of the model. By Bayes' Rule, the true posterior distribution can be expressed in a more complex form that is not necessarily Gaussian,
\begin{equation}
\label{posterior}
p\left( \mathcal{U} |\mathbf{y} \right) =\frac{p\left( \mathcal{U} \right) p\left( \mathbf{y}|\mathcal{U} \right)}{p\left( \mathbf{y} \right)}=\frac{\int{p\left( \mathbf{y},\mathcal{F} ,\mathcal{U} \right) d\mathcal{F}}}{p\left( \mathbf{y} \right)},
\end{equation}
where $d\mathcal{F}=d\mathbf{F}_1d\mathbf{F}_2\cdot\cdot\cdot d \mathbf{F}_L$. In Deep Gaussian Process (DGP) models, the likelihood term $p(\mathbf{y}|\mathcal{U})$ in the posterior equation (\ref{posterior}) is difficult to compute because the latent functions $\mathbf{F}_1,\cdots, \mathbf{F}_{L-1}$ are inputs to a non-linear kernel function. Additionally, empirical evidence suggests that the true posterior distribution $p(\mathcal{U}|\mathbf{y})$ is often non-Gaussian, which makes it even more challenging to compute. 

To address this issue, we introduce a novel variational family that balances computational efficiency and improved expressiveness, while also ensuring accurate error control, based on the concept of Operator Variational Inference (OVI) \cite{ranganath2016operator}. Furthermore, our approach includes the learning of preservable transformations and the generation of approximate posterior samples through neural networks, as detailed in Section \ref{sec:sd} and Section \ref{training}.

\section{OVI and Stein Discrepancy}
\label{sec:sd}
\vspace{-2mm}
Before using OVI and Stein Discrepancy to develop a unique inference strategy for DGP model, we provide a quick introduction to these concepts that form the foundation of our method. 

\begin{definition}
\label{def:stein_eqn}
\cite{ranganath2016operator} Let $p(\boldsymbol{x})$ be a probability density supported on $\mathcal{X} \subseteq \mathbb{R}^d$ and $\boldsymbol{\phi }: \mathcal{X} \rightarrow \mathbb{R}^d$ be  a  differentiable function, we define Langevin-Stein Operator as:
\begin{equation}
\displaystyle
\mathcal{A}_p \boldsymbol{\phi }( \boldsymbol{x} ) \triangleq (\nabla_{\boldsymbol{x}} \log p( \boldsymbol{x} )) ^T\boldsymbol{\phi } ( \boldsymbol{x}) +\mathrm{Tr}( \nabla _{\boldsymbol{x}} \boldsymbol{\phi }( \boldsymbol{x})).
\end{equation}
\end{definition}
%\begin{comment}
%\begin{comment}
\begin{lemma}
\label{lemma1}
\cite{liu2016stein} Let $p(\boldsymbol{x})$ be a probability density function supported on $\mathcal{X} \subseteq \mathbb{R}^d$, and $\boldsymbol{\phi}: \mathcal{X} \rightarrow \mathbb{R}^d$ be a differentiable function. Suppose that $\int_{\partial \mathcal{X}} p(\boldsymbol{x}) \boldsymbol{\phi}(\boldsymbol{x}) d\boldsymbol{x} = \boldsymbol{0}$, where $\partial \mathcal{X}$ represents the boundary of $\mathcal{X}$. Under these conditions, Stein's identity can be expressed as
\begin{align}
\label{equ:steq1}
\mathbb{E} _{\boldsymbol{x}\sim p}\left[ \mathcal{A} _p\boldsymbol{\phi }( \boldsymbol{x} ) \right] =0.
\end{align}
\end{lemma}

When considering the expectation of $\mathcal{A} _p\boldsymbol{\phi}(\boldsymbol{x})$ under $\boldsymbol{x}\sim q$, where $q(\boldsymbol{x})$ is another probability density supported on $\mathcal{X} \subseteq \mathbb{R}^d$, the implication of Lemma \ref{lemma1} is that for arbitrary $\boldsymbol{\phi}$, the expectation will not be necessarily equal to zero. Instead, the magnitude of $\mathcal{A} _p\boldsymbol{\phi}(\boldsymbol{x})$ under $\boldsymbol{x}\sim q$ reflects the difference between probability distributions $p$ and $q$. Thus, we can define a discrepancy measure, referred to as Stein discrepancy, to capture the difference between the target distribution and its approximation.
%\end{comment}

%\begin{comment}

\begin{definition}
(Stein's Discrepancy) \cite{liu2016stein} Let $p(\boldsymbol{x})$, $q(\boldsymbol{x})$ be probability densities supported on $\mathcal{X} \subseteq \mathbb{R}^d$. Stein discrepancy is defined as the maximum violation of Stein's identity in a proper function set $\mathcal{G}$ for any differentiable function $\boldsymbol{\phi}:\mathcal{X} \rightarrow \mathbb{R}^d$, i.e.,
\begin{equation}
\mathcal{S} \left( q,p \right) \triangleq \underset{\boldsymbol{\phi }\in \mathcal{G}}{\mathrm{sup}}\; \mathbb{E} _{\boldsymbol{x}\sim q}[ \mathcal{A} _p\boldsymbol{\phi }(\boldsymbol{x})].
\label{eq:sd}
\end{equation}
\end{definition}

The selection of the function set $\mathcal{G}$ is crucial here, as it determines the discriminative power and computational feasibility of the Stein discrepancy. Traditionally, $\mathcal{G}$ consists of functions with bounded Lipschitz norms. However, this approach poses a challenging functional optimization problem that is computationally intractable or demands special considerations. Similar to prior approaches \cite{grathwohl2020learning}, we adopt the $\mathcal{L}_2$ space as the function space $\mathcal{F}$ in the Stein discrepancy (\ref{eq:sd}) and represent $\boldsymbol{\phi}$ with a neural network $\boldsymbol{\phi}_{\boldsymbol{\eta}}$ as a discriminator to maximize
\begin{equation}
\label{eq:lsd}
\begin{split}
&\mathrm{LSD}\left( q,p;\boldsymbol{\eta} \right) \triangleq \mathbb{E} _{\boldsymbol{x}\sim q}[ (\nabla _{\boldsymbol{x}}\log p\left( \boldsymbol{x} \right)) ^T\boldsymbol{\phi }_{\boldsymbol{\eta}}\left( \boldsymbol{x} \right) \\&+\mathrm{Tr}\left( \nabla _{\boldsymbol{x}}\boldsymbol{\phi }_{\boldsymbol{\eta}}\left( \boldsymbol{x} \right) \right)],
\end{split}
\end{equation}
with respect to the parameters ${\boldsymbol{\eta}}$ of the neural network. This approach, known as  the Learned Stein Discrepancy (LSD) \cite{grathwohl2020learning}, uses neural networks as discriminators to parameterize $\boldsymbol{\phi}$ in the Stein discrepancy equation (\ref{eq:sd}). However, neural networks are not inherently square integrable and do not vanish at infinity. In order to satisfy the conditions of Stein's identity \cite{liu2016stein}, an $\mathcal{L}_2$ regularizer is applied to the LSD with a regularization strength $\lambda \in \mathbb{R}^+$, resulting in a Regularized Stein Discrepancy (RSD)
\begin{equation}
\label{rsd}
\begin{split}
&\mathrm{RSD}\left( q,p;\boldsymbol{\eta} \right)\triangleq \mathbb{E} _{\boldsymbol{x}\sim q}[ (\nabla _{\boldsymbol{x}}\log p\left( \boldsymbol{x} \right)) ^T\boldsymbol{\phi }_{\boldsymbol{\eta}}\left( \boldsymbol{x} \right)\\ &+\mathrm{Tr}\left( \nabla _{\boldsymbol{x}}\boldsymbol{\phi }_{\boldsymbol{\eta}}\left( \boldsymbol{x} \right) \right)]-\lambda \mathbb{E} _{\boldsymbol{x}\sim q}[ \boldsymbol{\phi }_{\boldsymbol{\eta}}( \boldsymbol{x} ) ^{T}\boldsymbol{\phi }_{\boldsymbol{\eta}}\left( \boldsymbol{x} \right)] .
\end{split}
\end{equation}

In Bayesian posterior inference, we aim to approximate the true posterior $p$ using an approximate posterior $q_{\boldsymbol{\theta}}$, parameterized by variational parameters ${\boldsymbol{\theta}} \in \varTheta$, where $\varTheta$ is a set of possible parameterizations. Stein divergence, as defined in equation (\ref{eq:sd}), is often used as the objective function for the Operator Variational Inference (OVI) algorithm \cite{ranganath2016operator}. OVI is a black-box algorithm that leverages operators to optimize any operator-based objective, with the benefits of data subsampling and the capability to operate with a wider class of posterior approximations that do not require tractable densities. By combining the parameterizations of the set $\varTheta$ and the discriminator $\boldsymbol{\phi}_{\boldsymbol{\eta }}$, OVI  solves a minimax problem to find the optimal variational parameters $\boldsymbol{\theta}^\star$, i.e.,
\begin{equation}
\boldsymbol{\theta} ^\star=
\mathrm{arg}
\mathop {\mathrm{inf}} \limits_{\boldsymbol{\theta} \in \varTheta}(\mathop {\mathrm{sup}} \limits_{\boldsymbol{\eta } }\,\, \mathbb{E} _{\boldsymbol{x}\sim q_{\boldsymbol{\theta}}}\left[ \mathcal{A} _p\boldsymbol{\phi}_{\boldsymbol{\eta} }\left( \boldsymbol{x} \right) \right]). 
\end{equation}

\section{Deep Gaussian Processes with Neural Operator Variational Inference}
\label{training}
In this section, we present the algorithmic design for performing Bayesian inference on the posterior $p\left( \mathcal{U}|\mathcal{D} \right)$ of DGPs. We adopt the notation introduced in Section \ref{subsec: deep gaussian process}, where $\mathcal{D} =\left\{ \boldsymbol{x}_n,y_n \right\} _{n=1}^{N} $  denotes the training dataset, $ \mathcal{U} \triangleq \left\{ \mathbf{U}_{\ell} \right\} _{\ell =1}^{L}
$ represents the inducing variables, and $\boldsymbol{\nu}$ denotes the hyperparameters of the DGP model, including the inducing point locations, kernel hyperparameters, and noise variance.
\subsection{Neural network as Generators}
Consider a reference distribution $q_0(\boldsymbol{\epsilon})$ that generates noise $\boldsymbol{\epsilon} \in \mathbb{R}^{d_0}$. We represent the neural network that generates the posterior distribution, along with its parameters, as $g_{\boldsymbol{\theta}}$. If a noise vector is passed through this network, the resulting distribution of generated samples can be expressed as $\mathcal{U} = g_{\boldsymbol{\theta}}(\boldsymbol{\epsilon})$.
In summary, our setup is shown as follows,
\begin{equation}
\boldsymbol{\epsilon }\sim q_0\left( \boldsymbol{\epsilon } \right) ,\qquad     g_{\boldsymbol{\theta}}\left( \boldsymbol{\epsilon } \right) =\mathcal{U} \sim q_{\boldsymbol{\theta}}\left( \mathcal{U} \right). 
\end{equation}
Compared to traditional machine learning methods, such as grid-based or orthogonal designs, neural networks are recognized for their superior capability to model distributions, enabling them to learn implicit posterior distributions from data. As generators, neural networks can transform simple distributions such as Gaussian or uniform distributions, making them highly versatile and widely used in deep generative models \cite{huszar2017variational,mescheder2017adversarial,titsias2019unbiased,cybenko1989approximation,lu2020universal,perekrestenko2020constructive,yang2022capacity,yang2020potential,xie2021semisupervised}.As mentioned earlier, the high-dimensional and non-mean-field nature of the posterior distribution in deep generative models makes KL divergence unsuitable as a measure for the fit between the generative distribution $q_{\boldsymbol{\theta}}\left( \mathcal{U} \right)$ and the true posterior $p(\mathcal{U} |\mathcal{D})$.Therefore, using OVI and RSD to construct a better objective is a reasonable approach.

\subsection{Training Schedule} 
In Section \ref{sec:sd}, we provided a review of OVI, a method that uses Langevin-Stein operator to enable  a more flexible representation of the posterior geometry beyond the commonly used Gaussian distribution in vanilla VI. We extend this technique to the context of inducing points posterior inference for DGP models by iteratively training the discriminator and generator parameters to optimize the fit of the posterior to the data. Using the definition of RSD, we can construct an objective whose expectation value is zero if and only if the true posterior $p(\mathcal{U} |\mathcal{D})$ and the approximate distribution $q(\mathcal{U})$ are equivalent. During training, our goal is to minimize this objective
\begin{equation}
\mathcal{L} (\boldsymbol{\theta },\boldsymbol{\nu })=\underset{\boldsymbol{\eta }}{\mathrm{max}}(\text{RSD}\,(q_{\boldsymbol{\theta }}(\mathcal{U} ),p(\mathcal{U} |\mathcal{D} ,\boldsymbol{\nu });\boldsymbol{\phi }_{\boldsymbol{\eta }})),
\label{eq:l}
\end{equation}
with respective to $\boldsymbol{\theta},\boldsymbol{\nu}$. Based on equation (\ref{rsd}), we observe that equation (\ref{eq:l}) is a minmax problem. To solve it, we need to find the supremum on the right-hand side of the equation while jointly optimizing the inducing points posterior distribution and 
other model parameters $\boldsymbol{\nu}$ such as the Gaussian process kernel parameters. To achieve this, we separate the optimization of the discriminator $\boldsymbol{{\phi }_{\eta}}$ and generator $g_{\boldsymbol{\theta}}$ to enable optimal estimation of these parameters. Since the other model 
parameters $\boldsymbol{\nu}$ are point estimates, we utilize Monte Carlo sampling and maximum likelihood estimation to optimize them after optimizing the discriminator and generator. We present the main idea of our algorithm and its pseudocode in Algorithm \ref{alg:DGP-NS} and refer to as \emph{Neural Operator Variational Inference} (NOVI) for DGP models.

In our implementation, we utilize Monte Carlo method to estimate the objective (\ref{eq:l}) and RSD (\ref{rsd}), 
\begin{small}
\begin{equation}
\label{Hutchinson}
\begin{array}{l}
\widehat{\operatorname{RSD}}\left( q_{\boldsymbol{\theta}},p ; \boldsymbol{\phi}_{\boldsymbol{\eta}}\right)=\frac{1}{K} \sum_{k=1}^{K}\left(\left.\nabla_{\mathcal{U}} \log p(\mathcal{U} \mid \mathcal{D}, \boldsymbol{\nu})^{T}\right|_{\mathcal{U}=\mathcal{U}^{k}} \boldsymbol{\phi}_{\boldsymbol{\eta}}\left(\mathcal{U}^{k}\right)\right. \\
\left.+\mathbb{E}_{\omega \sim \mathcal{N}(0, \mathrm{I})}\left(\left.\omega^{T} \nabla_{\mathcal{U}} \phi_{\boldsymbol{\eta}}(\mathcal{U})\right|_{\mathcal{U}=\mathcal{U}^{k}} \omega\right)\right)\\-\lambda \frac{1}{K} \sum_{k=1}^{K}\left(\boldsymbol{\phi}_{\boldsymbol{\eta}}\left(\mathcal{U}^{k}\right)^{T} \boldsymbol{\phi}_{\boldsymbol{\eta}}\left(\mathcal{U}^{k}\right)\right), 
\end{array}
\end{equation}
\end{small}
where $\boldsymbol\phi_{\boldsymbol\eta^\star}$ is the supremum of RSD estimate and the gradient with $\boldsymbol\theta$ and $\boldsymbol\nu$ is computed via automatic differentiation. To compute the expensive divergence of $\boldsymbol{\phi}_{\boldsymbol{\eta}}$ in Equation (\ref{Hutchinson}), we use the Hutchinson estimator \cite{hutchinson1989stochastic}, which provides a stochastic estimate of the trace of a matrix and reduces the time complexity from $\mathcal{O}(D^2)$ to $\mathcal{O}(D)$, where $D$ is the dimensionality of the matrix. In \textbf{Theorem} \ref{thm:1}, we prove that the score function $\nabla_{\mathcal{U}}\log p(\mathcal{U}|\mathcal{D},\boldsymbol{\nu})$ can be evaluated by Monte Carlo method, which demonstrates that RSD is a suitable objective for updating the parameters of the generator network. 
\begin{algorithm}[ht!]
\caption{NOVI for DGP models}
\label{alg:DGP-NS}
\begin{algorithmic}
   \STATE {\bfseries Input:} training data $\mathcal{D} =\left\{ \boldsymbol{x}_n,y_n \right\} _{n=1}^{N} $, penalty parameter $\lambda$, $n_c$ number of iterations for training the discriminator, learning rate $
\alpha,\beta, \gamma 
$, M batch size, sample number K
   \STATE {\bfseries Initialize} discriminator $\boldsymbol{\eta}$, generator $\boldsymbol{\theta}$, DGP hyperparameters $\boldsymbol{\nu}$ 
   \REPEAT
   \FOR{$j=1$ {\bfseries to} $n_c$}
   \STATE Sample  a minibatch $
\left\{ \boldsymbol{x}_i,y_i \right\} _{i=1}^{M}\sim \mathcal{D}$ 
   \STATE Generate i.i.d. standard normal distribution noise $
\boldsymbol{\epsilon} _1 \dots \boldsymbol{\epsilon} _K$ from $q_0
$
   
   \STATE Generate sample  $
g_{\boldsymbol{\theta}}\left( \boldsymbol{\epsilon} _1 \right) \dots g_{\boldsymbol{\theta}}\left( \boldsymbol{\epsilon} _K \right) 
$ from the generator
   \STATE Compute empirical loss $\widehat{\mathrm{RSD}}( q_{\boldsymbol{\theta}},p;\boldsymbol{\phi }_{\boldsymbol{\eta}} ) $
  
   \STATE $\boldsymbol{\eta} \gets \boldsymbol{\eta}-\alpha 
\nabla _{\eta}\widehat{\mathrm{RSD}}( q_{\boldsymbol{\theta}},p;\boldsymbol{{\phi }_{\eta}}) 
$

   \ENDFOR
    \STATE Compute empirical loss $
\widehat{\operatorname{RSD}}\left( q_{\boldsymbol{\theta}},p ; \phi_{\boldsymbol{\eta}^{*}}\right) $
    \STATE $
\boldsymbol{\theta} \gets \boldsymbol{\theta} -\beta \,\,\nabla _{\boldsymbol{\theta}}\widehat{\operatorname{RSD}}\left( q_{\boldsymbol{\theta}},p ; \phi_{\boldsymbol{\eta}^{*}}\right)
$

    \STATE $ \boldsymbol{\nu} \gets \boldsymbol{\nu} -\gamma \,\,\frac{1}{K}\sum\nolimits_{k=1}^K{\nabla _{\boldsymbol{\nu}}\log p(\mathbf{y},\mathcal{U} ^k|\boldsymbol{\nu} )}
$
    \UNTIL{$\boldsymbol{\theta},\boldsymbol{\nu}$  converge}
    \end{algorithmic}
\end{algorithm}

\begin{theorem}
\label{thm:1}
Assuming that $\mathcal{U} \in \varOmega$, $\boldsymbol{\nu} \in \varUpsilon$ where $\varOmega$ and $\varUpsilon$ are both compact spaces. We can obtain an asymptotically unbiased estimator for the score function $\nabla_{\mathcal{U}}\log p(\mathcal{U} |\mathcal{D},\boldsymbol{\nu})$ in equation \ref{Hutchinson}, which converges in probability to the true value. (detailed proof can be seen in App. B): %\ref{theorem-1}
\begin{equation}
\label{score func}
\begin{split}
&\nabla _{\mathcal{U}}\log p(\mathcal{U} |\mathcal{D}, \boldsymbol{\nu}) \approx -\left( \mathbf{\Delta}_1,\dots,\mathbf{\Delta }_{\ell},\dots,\mathbf{\Delta }_L \right)^\top \\&+\mathbf{\nabla}_{\mathcal{U}}\log \sum\nolimits_{s=1}^S{p(\mathbf{y}|\widehat{\mathbf{F}}_{L}^{\left( s \right)})},
\end{split}
\end{equation}
where $\boldsymbol{\Delta}_{\ell}=$\\
\begin{small}$((\mathbf{K}_{\mathbf{Z}_{\ell} \mathbf{Z}_{\ell}}{ }^{-1} \mathbf{U}_{\ell, 1})^\top,...,(\mathbf{K}_{\mathbf{Z}_{\ell} \mathbf{Z}_{\ell}}{ }^{-1} \mathbf{U}_{\ell, d})^\top,...,(\mathbf{K}_{\mathbf{Z}_{\ell} \mathbf{Z}_{\ell}}{ }^{-1} \mathbf{U}_{\ell, D_{\ell}})^\top)$ \end{small} and \begin{small} $\widehat{\mathbf{F}}_{\ell,d}^{( s )} \sim
\mathcal{N} (\mathbf{K}_{\widehat{\mathbf{F}}_{\ell -1}\mathbf{Z}_{\ell}}\mathbf{K}_{\mathbf{Z}_{\ell}\mathbf{Z}_{\ell}}^{-1}\mathbf{U}_{\ell,d},\mathbf{K}_{\widehat{\mathbf{F}}_{\ell-1}\widehat{\mathbf{F}}_{\ell -1}}-\mathbf{K}_{\widehat{\mathbf{F}}_{\ell -1}\mathbf{Z}_{\ell}}\mathbf{K}_{\mathbf{Z}_{\ell}\mathbf{Z}_{\ell}}^{-1}\mathbf{K}_{\mathbf{Z}_{\ell}\widehat{\mathbf{F}}_{\ell -1}}) $ \end{small}  for $\ell = 1 ,\dots, L$, $S$ is the number of samples involved in estimation.
\end{theorem}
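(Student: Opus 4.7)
\textbf{Proof proposal for Theorem \ref{thm:1}.}
The plan is to split the log posterior via Bayes' rule, compute the gradient of the Gaussian prior exactly, and then handle the intractable likelihood through a Monte Carlo sampler obtained by ancestral sampling through the layer conditionals. Compactness will be used at the end to upgrade pointwise convergence of the score estimator to convergence in probability.

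First, using (\ref{posterior}) and the fact that the evidence $p(\mathcal{D}\mid\boldsymbol{\nu})$ does not depend on $\mathcal{U}$, I would write
\begin{equation}
\nabla_{\mathcal{U}}\log p(\mathcal{U}\mid\mathcal{D},\boldsymbol{\nu})
=\nabla_{\mathcal{U}}\log p(\mathcal{U})+\nabla_{\mathcal{U}}\log p(\mathcal{D}\mid\mathcal{U},\boldsymbol{\nu}).
\end{equation}
Since the prior $p(\mathcal{U})=\prod_{\ell,d}\mathcal{N}(\mathbf{U}_{\ell,d}\mid\mathbf{0},\mathbf{K}_{\mathbf{Z}_{\ell}\mathbf{Z}_{\ell}})$ factorises into independent Gaussians, differentiating the log density with respect to each $\mathbf{U}_{\ell,d}$ gives exactly $-\mathbf{K}_{\mathbf{Z}_{\ell}\mathbf{Z}_{\ell}}^{-1}\mathbf{U}_{\ell,d}$, which is the block assembled into $\boldsymbol{\Delta}_\ell$. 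This immediately supplies the first term of (\ref{score func}).

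Second, for the likelihood term, marginalising out $\mathcal{F}$ in (\ref{dgplikelihood}) yields
\begin{equation}
p(\mathcal{D}\mid\mathcal{U},\boldsymbol{\nu})
=\mathbb{E}_{\mathbf{F}_{1},\dots,\mathbf{F}_{L}\mid\mathcal{U}}\!\left[p(\mathbf{y}\mid\mathbf{F}_{L})\right],
\end{equation}
where the expectation is taken under the chain of conditionals $p(\mathbf{F}_{\ell}\mid\mathbf{F}_{\ell-1},\mathbf{U}_{\ell})$. I would then draw $S$ independent trajectories $\widehat{\mathbf{F}}_{1}^{(s)},\dots,\widehat{\mathbf{F}}_{L}^{(s)}$ by ancestral sampling from precisely those Gaussian conditionals, which matches the expressions stated in the theorem, and form the unbiased estimator $\widehat{p}_S(\mathcal{D}\mid\mathcal{U},\boldsymbol{\nu})=\frac{1}{S}\sum_{s=1}^{S}p(\mathbf{y}\mid\widehat{\mathbf{F}}_{L}^{(s)})$. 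Taking the gradient of the log and noting that multiplicative constants vanish yields $\nabla_{\mathcal{U}}\log\sum_{s}p(\mathbf{y}\mid\widehat{\mathbf{F}}_{L}^{(s)})$, which is the second term in (\ref{score func}).

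The main obstacle is justifying that this score estimator is asymptotically unbiased and converges in probability, since the nonlinear $\log$ wrapped around the Monte Carlo average does not commute with expectation. My plan is: (i) use a reparameterisation of each $\widehat{\mathbf{F}}_{\ell,d}^{(s)}$ in terms of standard normal noise so that the map $\mathcal{U}\mapsto p(\mathbf{y}\mid\widehat{\mathbf{F}}_{L}^{(s)})$ is almost surely continuously differentiable; (ii) invoke the weak law of large numbers to get $\widehat{p}_S\to p$ and, by a dominated-convergence argument, $\nabla_{\mathcal{U}}\widehat{p}_S\to\nabla_{\mathcal{U}}p$ in probability pointwise in $(\mathcal{U},\boldsymbol{\nu})$; (iii) use the compactness of $\Omega\times\Upsilon$ together with continuity of the kernel matrices and the Gaussian likelihood to obtain a uniform positive lower bound on $\widehat{p}_S$ (for $S$ large enough, with high probability) and uniform integrability of the gradient, so the continuous mapping theorem applied to $\nabla\log\widehat{p}_S=\nabla\widehat{p}_S/\widehat{p}_S$ delivers convergence in probability to $\nabla\log p$. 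The compactness hypotheses $\mathcal{U}\in\Omega$ and $\boldsymbol{\nu}\in\Upsilon$ are precisely what allows the bounds needed in step (iii); without them the log of a sample mean could blow up when $\widehat{p}_S$ happens to be close to zero.
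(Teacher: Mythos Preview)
Your proposal is correct and follows essentially the same route as the paper: Bayes decomposition, exact Gaussian-prior gradient giving the $\boldsymbol{\Delta}_\ell$ blocks, ancestral Monte Carlo sampling through the layer conditionals for the likelihood, and then convergence in probability of $\nabla_{\mathcal{U}}\log\widehat{p}_S$ via $\widehat{p}_S\xrightarrow{\mathcal{P}}p$, $\nabla\widehat{p}_S\xrightarrow{\mathcal{P}}\nabla p$, and compactness. The only cosmetic difference is that where you invoke the continuous mapping theorem on $\nabla\widehat{p}_S/\widehat{p}_S$, the paper writes out the algebraic bound $\|\nabla\log p-\nabla\log\widehat{p}_S\|\le\frac{1}{p\,\widehat{p}_S}\bigl(\|\nabla p\|\,|\widehat{p}_S-p|+p\,\|\nabla p-\nabla\widehat{p}_S\|\bigr)$ directly and uses compactness to bound $\|\nabla p\|$; your plan to secure a uniform positive lower bound on $\widehat{p}_S$ is arguably the more careful way to handle that same denominator.
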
 

In Monte Carlo estimation of the log likelihood function, bias can arise due to the logarithmic transformation on the likelihood function, which is not explicitly defined in the DGP model. This bias can affect other DGP inference objectives, such as DSVI \cite{salimbeni2017doubly} .

However, in our method based on the score function, the gradient operator cancels out the bias introduced by the logarithmic transformation. The fact that our model is asymptotically unbiased, as proven by theorem \ref{thm:1}, is beneficial for mini-batch methods that rely on random sub-sampling of data. This property enhances both the scalability and accuracy of our method.

To ensure the convergence of this estimate, we propose to introduce a constraint function $c(\cdot)$ to restrict the parameter space of the objective function, which, to our knowledge, has not been considered in previous work. This constraint function $c(\cdot)$ is designed to confine $\mathcal{U}$ and $\boldsymbol{\nu}$ within a compact space. For instance, if we adopt a squared exponential kernel function $k_{SE}(x, x') = \sigma_f^2 \exp\left(-\frac{(x - x')^2}{2l^2}\right)
 $with length scale $l$, we can apply a clip function as a constraint for an appropriate closed interval $[P,Q]$, namely,

\begin{equation}
     clip\left( l \right) =\begin{cases}
	P\,\,  \text{if}\,\,  l<P\\
	l\,\,  \text{if}\,\,  P\leqslant l\leqslant Q\\
	Q\,\,  \text{if}\,\,  l>Q\\
    \end{cases}.
\end{equation}

As for the generated $\mathcal{U}$ by the neural network generator, we can also apply such a constraint in the last layer to ensure the convergence of the score function estimate.

\subsection{Prediction} 
To obtain the final layer density for predicting the value of the test data $\mathcal{D}^\star = \left\{\mathbf{x}_n^\star, y_n^\star \right\}_{n=1}^{T}$, we first sample from the optimized generator and transform the input locations $\mathbf{x}_n$ to the test locations $\mathbf{x}_n^\star$ using equation (\ref{dgplikelihood}). We subsequently compute the function values at the test locations, which are represented as $\mathbf{F}_{\ell}^\star$. Finally, we use  equation (\ref{QF}) to estimate the density of the final layer, which enables us to make predictions for the test data
\begin{equation}
\label{QF}
\begin{split}
&q(\mathbf{F}_{L}^{\star})=\\&\int{\prod\nolimits_{\ell =1}^L{\prod\nolimits_{d=1}^{D_{\ell}}{p(\mathbf{F}_{\ell ,d}^{\star}|\mathbf{F}_{\ell -1}^{\star},\mathbf{U}_{\ell ,d}) q_{\theta ^{\star}}\left( \mathbf{U}_{\ell ,d} \right) d\mathbf{F}_{\ell -1}^{\star}d\mathbf{U}_{\ell ,d}}}}
\end{split}
\end{equation}
where $\boldsymbol{\theta}^\star$ is the optimal of the generator and the first term of the integral $p( \mathbf{F}_{\ell,d}^\star|\mathbf{F}_{\ell -1}^\star,\mathbf{U}_{\ell ,d} )$ is conditional Gaussian. We leverage this consequence to draw samples from $q\left(\mathbf{F}_{L}^\star\right)$, and further perform the sampling using re-parameterization trick \cite{salimbeni2017doubly,rezende2014stochastic,kingma2015variational}. Specifically, we first sample $\boldsymbol\epsilon ^{\ell}\sim \mathcal{N} (0,\mathbf{I}_{D^{\ell}})$ and $\mathcal{U} \sim q_{\boldsymbol\theta ^\star}(\mathcal{U})$, then recursively draw the sampled variables $\widehat{\mathbf{F}}_{\ell,d}^\star\sim p(\mathbf{F}_{\ell,d}^\star|\widehat{\mathbf{F}}_{\ell-1}^\star,\mathbf{U}_{\ell,d} ) $ for $\ell = 1 ,\dots, L$ as,
\begin{equation}
\begin{split}
&\widehat{\mathbf{F}}_{\ell,d}^{\star}=\mathbf{K}_{\widehat{\mathbf{F}}_{\ell-1}^{\star}\mathbf{Z}_{\ell}}\mathbf{K}_{\mathbf{Z}_{\ell}\mathbf{Z}_{\ell}}^{-1}\mathbf{U}_{\ell,d}
\\&+\boldsymbol{\epsilon} _{\ell}\odot \sqrt{\mathrm{diag}\,(\mathbf{K}_{\widehat{\mathbf{F}}_{\ell -1}^{\star}\widehat{\mathbf{F}}_{\ell -1}^{\star}}-\mathbf{K}_{\widehat{\mathbf{F}}_{\ell -1}^{\star}\mathbf{Z}_{\ell}}\mathbf{K}_{\mathbf{Z}_{\ell}\mathbf{Z}_{\ell}}^{-1}\mathbf{K}_{\mathbf{Z}_{\ell}\widehat{\mathbf{F}}_{\ell -1}^{\star}})},
\label{pre}
\end{split}
\end{equation}
where `$\odot$' represents the Hadamard product, and the square root is element-wise, and  we define $\mathbf{F}_0^\star \triangleq \mathbf{X}^\star$ for the first layer and use $\mathrm{diag}\left(\cdot\right)$ to denote the vector of diagonal elements of a matrix. The diagonal approximation in equation (\ref{pre}) holds since in DGP model, the $i^\mathrm{th}$ marginal of approximate posterior $q( \mathbf{F}_{\left( \ell ,d \right)}^i)$ depends only on the corresponding inputs $\boldsymbol{x}_i$ \cite{candela05}.  
\section{Convergence Guarantees and Bias Control}
\label{section5}

In this section, we provide convergence guarantees and error control for NOVI, detailed proof can be seen in App. C.
%\ref{theorem-23}.

\begin{definition}
The Fisher divergence \cite{sriperumbudur2017density} between two suitably smooth density functions is defined as 
\begin{equation}
FD\left(q, p \right) =\int_{\mathbb{R} ^d}{\left\| \mathbf{\nabla }\log q\left( \boldsymbol{x}  \right) -\mathbf{\nabla }\log p\left( \boldsymbol{x} \right) \right\| _{2}^{2}q\left( \boldsymbol{x}  \right) d\boldsymbol{x} }.
\end{equation} 
\end{definition}

\begin{theorem}
\label{thm:2}
Training the generator with the optimal discriminator corresponds to minimizing the Fisher divergence between $q_{\boldsymbol{\theta}}$ and $p$, and the corresponding optimal loss for equation (\ref{eq:l}) is
\begin{equation}
    \mathcal{L} \left( \boldsymbol{\theta} ,\boldsymbol{\nu} \right) =\frac{1}{4\lambda}FD\left( q_{\boldsymbol{\theta}}\left( \mathcal{U} \right) ,p\left( \mathcal{U} |\mathcal{D} ,\boldsymbol{\nu} \right) \right), 
\end{equation}
where $\lambda \in \mathbb{R}^+$ is a regularization strength defined in equation (\ref{rsd}). 
\end{theorem}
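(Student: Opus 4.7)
The plan is to reduce the maximization over $\boldsymbol{\phi}_{\boldsymbol{\eta}}$ inside the RSD to a tractable pointwise quadratic optimization by first rewriting the operator term using Stein's identity, and then performing the remaining step by completing the square.

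First, I would apply integration by parts (equivalently, Stein's identity from Lemma~\ref{lemma1} with $q$ in place of $p$), under the assumption that the appropriate boundary terms vanish, to rewrite the trace term as
\begin{equation*}
\mathbb{E}_{\mathcal{U}\sim q_{\boldsymbol{\theta}}}\bigl[\mathrm{Tr}(\nabla_{\mathcal{U}}\boldsymbol{\phi}_{\boldsymbol{\eta}}(\mathcal{U}))\bigr]
= -\mathbb{E}_{\mathcal{U}\sim q_{\boldsymbol{\theta}}}\bigl[(\nabla_{\mathcal{U}}\log q_{\boldsymbol{\theta}}(\mathcal{U}))^\top \boldsymbol{\phi}_{\boldsymbol{\eta}}(\mathcal{U})\bigr].
\end{equation*}
Substituting this into the definition of RSD in equation~(\ref{rsd}) collapses the first two terms into the score-difference inner product, giving
\begin{equation*}
\mathrm{RSD}(q_{\boldsymbol{\theta}},p;\boldsymbol{\eta})
= \mathbb{E}_{q_{\boldsymbol{\theta}}}\bigl[\boldsymbol{s}(\mathcal{U})^\top\boldsymbol{\phi}_{\boldsymbol{\eta}}(\mathcal{U}) - \lambda\,\boldsymbol{\phi}_{\boldsymbol{\eta}}(\mathcal{U})^\top\boldsymbol{\phi}_{\boldsymbol{\eta}}(\mathcal{U})\bigr],
\end{equation*}
where $\boldsymbol{s}(\mathcal{U}) \triangleq \nabla_{\mathcal{U}}\log p(\mathcal{U}\mid\mathcal{D},\boldsymbol{\nu}) - \nabla_{\mathcal{U}}\log q_{\boldsymbol{\theta}}(\mathcal{U})$.

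Second, because $\boldsymbol{\phi}_{\boldsymbol{\eta}}$ is taken from $\mathcal{L}_2(q_{\boldsymbol{\theta}})$ and the neural network family is assumed rich enough to realize any target in this space, I would perform the supremum pointwise. The integrand is a concave quadratic in $\boldsymbol{\phi}_{\boldsymbol{\eta}}(\mathcal{U})$, so completing the square yields the optimal discriminator
\begin{equation*}
\boldsymbol{\phi}_{\boldsymbol{\eta}^\star}(\mathcal{U}) = \frac{1}{2\lambda}\,\boldsymbol{s}(\mathcal{U}),
\end{equation*}
and plugging this back in gives the optimal value $\frac{1}{4\lambda}\mathbb{E}_{q_{\boldsymbol{\theta}}}[\|\boldsymbol{s}(\mathcal{U})\|_2^2]$, which by the definition of Fisher divergence equals $\frac{1}{4\lambda}FD(q_{\boldsymbol{\theta}}(\mathcal{U}),p(\mathcal{U}\mid\mathcal{D},\boldsymbol{\nu}))$. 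This is exactly the claimed identity for $\mathcal{L}(\boldsymbol{\theta},\boldsymbol{\nu})$.

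The main obstacle is the functional optimization step: justifying that the pointwise maximizer $\frac{1}{2\lambda}\boldsymbol{s}$ is actually attainable (or at least approximable) by the parameterized family $\{\boldsymbol{\phi}_{\boldsymbol{\eta}}\}_{\boldsymbol{\eta}}$, and ensuring $\boldsymbol{s}\in\mathcal{L}_2(q_{\boldsymbol{\theta}})$ so that the optimum is finite. In practice one appeals to universal approximation of neural networks in $\mathcal{L}_2$, together with the compactness assumption on $\mathcal{U}$ and $\boldsymbol{\nu}$ already invoked in Theorem~\ref{thm:1}, to make the pointwise argument rigorous. A secondary concern is the validity of the integration-by-parts step, which requires the usual vanishing boundary condition $\int_{\partial\varOmega} q_{\boldsymbol{\theta}}(\mathcal{U})\boldsymbol{\phi}_{\boldsymbol{\eta}}(\mathcal{U})\,d\mathcal{U}=\boldsymbol{0}$ as in Lemma~\ref{lemma1}; this is the precise role played by the $\mathcal{L}_2$ regularizer, which keeps $\boldsymbol{\phi}_{\boldsymbol{\eta}}$ in a class where the identity applies.
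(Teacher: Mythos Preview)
Your proposal is correct and follows essentially the same route as the paper: the paper packages your integration-by-parts step as its Lemma~\ref{lemma2} (rewriting $\mathbb{E}_{q}[\mathcal{A}_p\boldsymbol{\phi}]$ as the score-difference inner product) and your completing-the-square step as its Lemma~\ref{lemma3} (the pointwise quadratic maximum $\boldsymbol{a}^\top\boldsymbol{y}-\lambda\boldsymbol{y}^\top\boldsymbol{y}\le\frac{1}{4\lambda}\|\boldsymbol{a}\|_2^2$ at $\boldsymbol{y}=\frac{1}{2\lambda}\boldsymbol{a}$), then substitutes to obtain $\frac{1}{4\lambda}FD$. Your added discussion of attainability and boundary conditions is more explicit than the paper's treatment, which simply applies the two lemmas without further justification.
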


\begin{theorem}
\label{thm:3}
Assuming that $\mathcal{U} \in \varOmega$, $\boldsymbol{\nu} \in \varUpsilon$ where $\varOmega$ and $\varUpsilon$ are both compact spaces. The bias of the estimation for prediction $\widehat{\mathbf{F}}_{L}^\star$ in equation (\ref{pre}) from the DGPs exact evaluation can be bounded by the square root of the Fisher divergence between $q_{\boldsymbol{\theta}}( \mathcal{U})$ and $p\left( \mathcal{U} |\mathcal{D}, \boldsymbol{\nu} \right)$  up to multiplying a constant.
\end{theorem}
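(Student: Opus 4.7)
The plan is to reduce the prediction bias to a difference of expectations of a bounded test function under $q_{\boldsymbol{\theta}}(\mathcal{U})$ versus $p(\mathcal{U}\mid\mathcal{D},\boldsymbol{\nu})$, and then control that difference by $\sqrt{\mathrm{FD}}$ through a chain of classical inequalities. First, I would integrate out the auxiliary reparameterisation noise $\boldsymbol{\epsilon}^{1},\dots,\boldsymbol{\epsilon}^{L}$ appearing in equation (\ref{pre}) to obtain a deterministic map $h(\mathcal{U})=\mathbb{E}_{\boldsymbol{\epsilon}}[\widehat{\mathbf{F}}_{L}^{\star}(\mathcal{U},\boldsymbol{\epsilon})]$, so that the bias against the exact DGP evaluation reads
$$\mathrm{Bias}=\bigl|\mathbb{E}_{q_{\boldsymbol{\theta}}}[h(\mathcal{U})]-\mathbb{E}_{p(\cdot\mid\mathcal{D},\boldsymbol{\nu})}[h(\mathcal{U})]\bigr|.$$
Because $\mathcal{U}\in\varOmega$ and $\boldsymbol{\nu}\in\varUpsilon$ are compact, every kernel block in (\ref{pre}) depends continuously on its arguments, and strictly positive-definite kernels guarantee a uniform lower bound on the smallest eigenvalue of $\mathbf{K}_{\mathbf{Z}_{\ell}\mathbf{Z}_{\ell}}$, so both the predictive mean and the square-root diagonal covariance are uniformly bounded on $\varOmega$ and $\|h\|_{\infty}\le C_{1}$ for some constant depending only on the compact sets and on the kernel class.

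Second, I would invoke the dual characterisation of total variation,
$$\mathrm{Bias}\;\le\;2\,\|h\|_{\infty}\,\mathrm{TV}\bigl(q_{\boldsymbol{\theta}},\,p(\cdot\mid\mathcal{D},\boldsymbol{\nu})\bigr),$$
and then apply Pinsker's inequality $\mathrm{TV}(q,p)\le\sqrt{\tfrac{1}{2}\mathrm{KL}(q\|p)}$ to convert the problem from total variation to Kullback-Leibler divergence.

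Third, I would pass from KL to the Fisher divergence using a logarithmic Sobolev inequality for the target density $p$. Since the paper's Fisher divergence equals the relative Fisher information $\int q\,\|\nabla\log q - \nabla\log p\|_{2}^{2}$, an LSI with constant $\rho>0$ would yield
$$\mathrm{KL}(q_{\boldsymbol{\theta}}\|p)\;\le\;\tfrac{1}{2\rho}\,\mathrm{FD}(q_{\boldsymbol{\theta}},p),$$
and chaining the three bounds produces the announced estimate $\mathrm{Bias}\le C\sqrt{\mathrm{FD}(q_{\boldsymbol{\theta}},p)}$ with $C$ depending on $C_{1}$ and $\rho$.

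The main obstacle is this final log-Sobolev step: after marginalising the hidden layers $\mathbf{F}_{1},\dots,\mathbf{F}_{L-1}$, the density $p(\mathcal{U}\mid\mathcal{D},\boldsymbol{\nu})$ has no closed form, so I cannot directly verify curvature assumptions such as Bakry--\'Emery. I would handle this by leaning on the clipping/compactness mechanism introduced at the end of Section \ref{training}: on the compact $\varOmega$, $\log p$ can be shown to be uniformly bounded above and below, and a Holley--Stroock bounded-perturbation argument then reduces the LSI for $p$ to the trivial LSI for the uniform reference measure on $\varOmega$, which holds with an explicit constant depending only on the diameter of $\varOmega$. This localises all the heavy analytical work to a single, well-studied step, while Steps one and two are essentially mechanical given the compactness hypothesis.
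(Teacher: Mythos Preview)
Your reduction of the bias to a difference $\bigl|\mathbb{E}_{q_{\boldsymbol{\theta}}}[h]-\mathbb{E}_{p}[h]\bigr|$ for a bounded test function $h$ is exactly the paper's starting point (the paper writes $\boldsymbol{\psi}$ for your $h$). From there the two arguments diverge. The paper never touches total variation, Pinsker, or log-Sobolev; it stays inside the Stein calculus already built up in the appendix: Lemma~\ref{lemma2} rewrites $\mathbb{E}_q[\mathcal{A}_p\boldsymbol{\phi}]$ as $\mathbb{E}_q[(\nabla\log p-\nabla\log q)^{\top}\boldsymbol{\phi}]$, a single Cauchy--Schwarz then gives $|\mathbb{E}_q[\mathcal{A}_p\boldsymbol{\phi}]|\le\sqrt{\mathbb{E}_q\|\boldsymbol{\phi}\|_2^2}\,\sqrt{FD(q,p)}$ (Lemma~\ref{lemma4}), and choosing $\boldsymbol{\phi}$ to be a bounded solution of the Stein equation $\mathcal{A}_p\boldsymbol{\phi}=\psi-\mathbb{E}_p[\psi]$ (Lemmas~\ref{lemma5}--\ref{lemma6}) turns the left side into precisely the bias. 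This is more direct and requires no curvature hypothesis on the intractable posterior---only boundedness of $\boldsymbol{\psi}$ and the existence of a bounded Stein solution. Your chain $\mathrm{TV}\to\mathrm{KL}\to FD$ is sound and has the minor advantage that its constant depends only on $\|h\|_\infty$ and the LSI constant of $p$, whereas the paper's constant $c_{\boldsymbol{\psi}}^{p,q}=\sqrt{\sum_i\mathbb{E}_q\|\boldsymbol{\phi}_{\psi_i}^{p}\|_2^2}$ depends on $q$ as well; but you pay for this with the Holley--Stroock step, which needs a two-sided bound on $\log p$ over $\varOmega$ and an LSI for the uniform reference measure on $\varOmega$ (the latter requiring convexity or comparable regularity of $\varOmega$, which the clipping construction in Section~\ref{training} happens to deliver but is not listed among the theorem's hypotheses).
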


\textbf{Theorem} \ref{thm:2} demonstrates that our algorithm is equivalent to minimizing Fisher divergence, while \textbf{Theorem}  \ref{thm:3} guarantees a bounded bias for prediction estimation. Fisher divergence has proven to be a valuable tool in various statistics and machine learning applications, as demonstrated by its use in practical contexts such as generative models \cite{ho2020denoising}, Bayesian inference \cite{sriperumbudur2017density}, and others \cite{huggins2018practical}.

Moreover, Fisher divergence has strong connections to other distance metrics such as total variation \cite{chambolle2004algorithm}, Hellinger distance \cite{beran1977minimum}, and Wasserstein distance \cite{vallender1974calculation}. In fact, it is often a stronger distance metric than these alternatives \cite{ley2013stein}. This means that when the Fisher divergence is smaller, it implies that the other distance metrics, which are weaker than Fisher divergence, will also be smaller. Consequently, using Fisher divergence as a measure provides better error control in comparison to these alternative metrics. The stronger connection of Fisher divergence to other distance metrics allows the proposed approach to capture more subtle differences between probability distributions, leading to more accurate moment estimates and improved performance, especially in high-dimensional distribution estimation \cite{huggins2018practical}. By leveraging the advantages of Fisher divergence, our method achieves enhanced theoretical guarantees and translates them into practical gains in real-world tasks.

\begin{figure*}[ht]
\vskip 0.2in
\begin{center}
\centerline{\includegraphics[width=2\columnwidth]{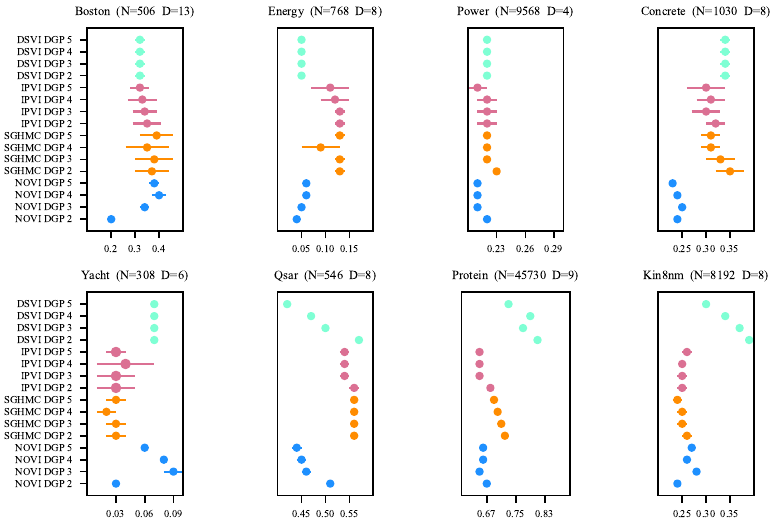}}
\caption{Regression mean test RMSE results by our NOVI method (blue), SGHMC (orange), IPVI(pink) and DSVI (cyan) for DGPs on UCI benchmark datasets. The numbers 2, 3, 4, and 5 represent the layers of DGP methods. Lower is better. The mean is shown with error bars of one standard error.}
\label{fig:regression}
\end{center}
\vskip -0.2in
\end{figure*}

\section{Related Works}

Our method for inference is inspired by two previous works, namely OVI \cite{ranganath2016operator} and Stein Discrepancy\cite{liu2016stein}. However, our approach is distinct in that it specifically focuses on the DGP posterior and develops tailored algorithms to address it. One key challenge in calculating the score function for DGPs is that the likelihood function is not explicit, and thus we propose a stochastic gradient and Monte Carlo sampling method to address this issue (see \textbf{Theorem} \ref{thm:1}). While OVI \cite{ranganath2016operator} introduces a similar objective for inference to RSD, it utilizes a different class of discriminator, and does not employ many of the state-of-the-art techniques we use for scalability, such as the Hutchinson estimator \cite{hutchinson1989stochastic}.

The computation of the term $\mathrm{Tr}\left(\mathbf{\nabla}_{\boldsymbol{x}}\boldsymbol{\phi }_{\boldsymbol{\eta}}\left( \boldsymbol{x} \right) \right)$ in Equation (\ref{eq:lsd}) is computationally expensive as it requires $\mathcal{O} \left( d \right)$ vector-Jacobian products, since each entry of the diagonal of the Jacobian requires computing a separate derivative of $\boldsymbol{\phi }_{\boldsymbol{\eta}}$. To address this issue, we can use the Hutchinson estimator, which only requires one vector-Jacobian product to compute. This estimator can be obtained by multiplying the matrix $\mathbf{\nabla}_{\boldsymbol{x}}\boldsymbol{\phi }_{\eta}\left( \boldsymbol{x} \right)$ by a noise vector twice, as shown in the following identity \cite{hutchinson1989stochastic},
\begin{equation}
\mathrm{Tr}\left( \mathbf{\nabla }_{\boldsymbol{x}}\boldsymbol{\phi }_{\boldsymbol{\eta}}\left( \boldsymbol{x} \right) \right) =\mathbb{E} _{\boldsymbol{\epsilon} \sim \mathcal{N} \left( 0,I \right)}\left[ \boldsymbol{\epsilon} ^T\mathbf{\nabla }_{\boldsymbol{x}}\boldsymbol{\phi }_{\boldsymbol{\eta}}\left( \boldsymbol{x} \right) \boldsymbol{\epsilon} \right].
\end{equation}
This single-sample Monte-Carlo estimator has been widely used in recent years in the machine learning community due to its efficiency and unbiasedness \cite{grathwohl2018ffjord,tsitsulin2019shape,han2017approximating}. The basic principle of this estimator is to approximate the trace by introducing random variables, such as Gaussian distributed vectors. Specifically, we sample the matrix using multiple random vectors and estimate the trace of the matrix-vector product by summing the products of the sampled results. The benefit of this estimation is that it only requires computing the products of the matrix-vector multiplications, rather than explicitly computing the trace of the entire matrix, thereby reducing computational complexity.

\section{Experiments}
In order to evaluate the performance of our proposed method, we conducted empirical evaluations on real-world datasets for both regression and classification tasks, with both small and large datasets. Our method was compared against several other models, including Doubly Stochastic VI (DSVI) \cite{salimbeni2017doubly}, which was used as our baseline model, Implicit Posterior VI (IPVI) \cite{yu2019implicit}, which constructs a variational lower bound using density ratio estimates, and the state-of-the-art SGHMC model \cite{havasi2018inference}. All experiments were conducted with the same hyper-parameters and initializations, and we provide detailed training information in Appendix E.

\subsection{UCI Regression Benchmark}
\label{7A}
In our experiments, we evaluated the performance of the NOVI model on eight UCI regression datasets, which varied in size from 308 to 45,730 data points. We used the average RMSE of the test data as the performance metric, and the results are presented in Figure \ref{fig:regression}. The tabular version of the results can be found in Appendix D.C.

As shown in the figure \ref{fig:regression}, our NOVI method consistently achieves competitive results compared to baselines on the majority of datasets. This is attributed to the key advantages of our approach, which overcomes limitations present in previous methods such as the restrictive nature of mean-field posterior distributions and provides stronger guarantees in terms of convergence and error control using Fisher divergence, as discussed in Section \ref{section5}. 

In the analysis of the 'Boston', 'Energy', 'Yacht', and 'Kin8nm' datasets, our method demonstrates impressive performance with two-layer DGPs. As the number of layers increases, there is some uncertainty observed. However, when compared to other baseline methods, NOVI remains competitive, as illustrated in Table \ref{tab:regression-tabular}. Notably, for larger datasets like 'Power' and 'Protein', deeper NOVI models exhibit superior performance compared to other methods. These results suggest that the effectiveness of our method may vary depending on the dataset characteristics and the optimal model hyperparameters. It is essential to note that performance fluctuations with increasing layers are not exclusive to our model but are also evident in other models such as SGHMC and IPVI.

We have also included additional results for real-world regression datasets in Appendix D.E, further demonstrating the effectiveness of the NOVI model.

\begin{table*}
\centering
\small
% \resizebox{\linewidth}{!}{ %< auto-adjusts font size to fill line
\centering
\begin{tabular}{llcccccc}   % @{}lccccc@{}
\toprule
Data Set & Model & Time (L=3) & Iter(L=3) & Acc (L=3) & Time (L=4) & Iter (L=4) & Acc (L=4) \\
\midrule
 & DSVI & 0.34s/iter & 20K & 97.17 & 0.54s/iter & 20K & 97.41 \\
MNIST & IPVI & 0.49s/iter & 20K & 97.58 & 0.62s/iter & 20K & 97.80 \\
 & SGHMC & 1.14s/iter & 20K & 97.25 &  1.22s/iter & 20K & 97.55 \\
 & NOVI (ours) & 0.38s/iter & 10K & \textbf{98.04} & 0.50s/iter & 10K & \textbf{98.21} \\
\hline

 & DSVI & 0.34s/iter & 20K & 87.45 & 0.50s/iter & 20K & 87.99 \\
Fashion-MNIST & IPVI & 0.48s/iter & 20K & 88.23 & 0.61s/iter & 20K & 88.90 \\
 & SGHMC & 1.21s/iter & 20K & 86.88 & 1.25s/iter & 20K & 87.08 \\
 & NOVI (ours) & 0.40s/iter & 10K & \textbf{89.36} & 0.55s/iter & 10K & \textbf{89.65} \\

\hline

 & DSVI & 0.43s/iter & 20K & 51.33 & 0.66s/iter & 20K & 51.79 \\
CIFAR-10 & IPVI & 0.62s/iter & 20K & 52.79 & 0.78s/iter & 20K & 53.27 \\
 & SGHMC & 8.04s/iter & 20K & 52.34 & 8.61s/iter & 20K & 52.81 \\
 & NOVI (ours) & 0.43s/iter & 10K & \textbf{53.42} & 0.52s/iter & 10K & \textbf{53.62} \\

\bottomrule
\end{tabular}
% \resizebox
\caption{Mean test accuracy ($\%$) and training details achieved by DSVI, SGHMC and NOVI (ours) DGP model for three image classification datasets. Batch size is set to $256$ for all methods. L denotes the number of hidden layers. Our proposed method can also be combined with convolution kernels \cite{kumar2018deep} to obtain a better result, for a fair comparison, we have not implemented here but in next part.}
\label{tab:image-classification}
\end{table*}
\begin{figure*}[t]
    \centering
    \includegraphics[width=0.99\linewidth]{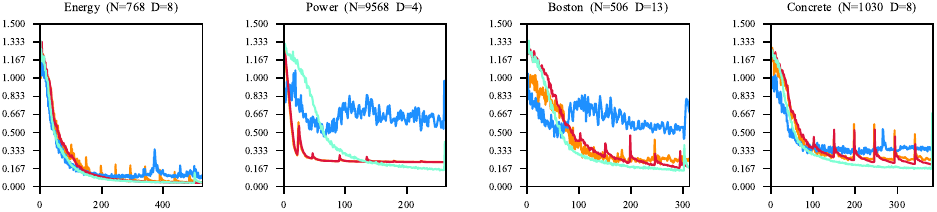}
    \caption{The mean RMSE comparison of NOVI (test: orange, train: red) with Monte Carlo log-likelihood maximization method (test: blue, train: cyan) using $2$-layer DGP model on four UCI regression datasets.}
    \label{fig:ablation-regression}
\end{figure*}

\subsection{Large-Scale Regression}
 Using mini-batch algorithm, our method can also be extended to datasets at the million level. Our evaluation of the performance of NOVI is  conducted on two real-world large-scale regression datasets: the YearMSD dataset and the Airline dataset. The YearMSD dataset has a large input dimension of D = 90 and a data size of approximately 500,000. The Airline dataset, on the other hand, has an input dimension of D = 8 and a large data size of approximately 2 million. For the YearMSD dataset, we split the data into training and test sets, using the first 463,810 examples as training data and the last 51,725 examples as test data. Similarly, for the Airline dataset, we take the first 700K points for training and next 100K for testing.

In these regression tasks, the performance metric used is the RMSE of the test data. Table \ref{tab:regression-large} presents the results of the test RMSE and the standard deviation over 10 runs. Notably, it can be observed that NOVI generally outperforms SGHMC, DSVI, and IPVI. Particularly for large-scale regression tasks, the performance of NOVI consistently improves with increasing depth. This observation signifies that our NOVI model maintains its superiority even for very large datasets with millions of data points.

\begin{table*}[t]
    %\centering
    \resizebox{\textwidth}{!}{
    \begin{tabular}{|c|c|c|c|c|c|c|c|c|c|c|c|c|c|c|c|c|}
    \hline
    \multicolumn{1}{|c|}{\multirow{1}{*}{\textbf{Data}}} &\multicolumn{1}{c|}{\multirow{1}{*}{\textbf{DSVI 2}}} &\multicolumn{1}{c|}{\multirow{1}{*}{\textbf{DSVI 3}}}
    &\multicolumn{1}{c|}{\multirow{1}{*}{\textbf{DSVI 4}}}
    &\multicolumn{1}{c|}{\multirow{1}{*}{\textbf{DSVI 5}}}
    &\multicolumn{1}{c|}{\multirow{1}{*}{\textbf{SGHMC 2}}}
    &\multicolumn{1}{c|}{\multirow{1}{*}{\textbf{SGHMC 3}}}
    &\multicolumn{1}{c|}{\multirow{1}{*}{\textbf{SGHMC 4}}}
    &\multicolumn{1}{c|}{\multirow{1}{*}{\textbf{SGHMC 5}}}
    &\multicolumn{1}{c|}{\multirow{1}{*}{\textbf{IPVI 2}}}
    &\multicolumn{1}{c|}{\multirow{1}{*}{\textbf{IPVI 3}}}
    &\multicolumn{1}{c|}{\multirow{1}{*}{\textbf{IPVI 4}}}
    &\multicolumn{1}{c|}{\multirow{1}{*}{\textbf{IPVI 5}}}
    &\multicolumn{1}{c|}{\multirow{1}{*}{\textbf{NOVI 2}}}
    &\multicolumn{1}{c|}{\multirow{1}{*}{\textbf{NOVI 3}}}
    &\multicolumn{1}{c|}{\multirow{1}{*}{\textbf{NOVI 4}}}
    &\multicolumn{1}{c|}{\multirow{1}{*}{\textbf{NOVI 5}}}\\
    \hline
    
    % year
    \multicolumn{1}{|c|}{\multirow{1}{*}{\textbf{Year}}}
    % DSVI
    &\multicolumn{1}{c|}{\multirow{1}{*}{9.58 }} 
    &\multicolumn{1}{c|}{\multirow{1}{*}{8.98 }} 
    &\multicolumn{1}{c|}{\multirow{1}{*}{8.93 }} 
    &\multicolumn{1}{c|}{\multirow{1}{*}{8.87 }}
    % SGHMC
    &\multicolumn{1}{c|}{\multirow{1}{*}{9.05 }}
    &\multicolumn{1}{c|}{\multirow{1}{*}{8.91 }}
    &\multicolumn{1}{c|}{\multirow{1}{*}{8.85 }}
    &\multicolumn{1}{c|}{\multirow{1}{*}{8.81 }}
     % IPVI
    &\multicolumn{1}{c|}{\multirow{1}{*}{8.95}}
    &\multicolumn{1}{c|}{\multirow{1}{*}{8.84}}
    &\multicolumn{1}{c|}{\multirow{1}{*}{8.80}}
    &\multicolumn{1}{c|}{\multirow{1}{*}{8.79}}
    % NOVI
    &\multicolumn{1}{c|}{\multirow{1}{*}{\textbf{8.84 }}}
    &\multicolumn{1}{c|}{\multirow{1}{*}{\textbf{8.79} }}
    &\multicolumn{1}{c|}{\multirow{1}{*}{\textbf{8.76} }}
    &\multicolumn{1}{c|}{\multirow{1}{*}{\textbf{8.74} }}\\
    \hline
    
    % Airline
    \multicolumn{1}{|c|}{\multirow{1}{*}{\textbf{Airline}}}
    % DSVI
    &\multicolumn{1}{c|}{\multirow{1}{*}{24.6 }} &\multicolumn{1}{c|}{\multirow{1}{*}{24.3 }} 
    &\multicolumn{1}{c|}{\multirow{1}{*}{24.2}} 
    &\multicolumn{1}{c|}{\multirow{1}{*}{24.1 }}
    % SGHMC
    &\multicolumn{1}{c|}{\multirow{1}{*}{24.1 }}
    &\multicolumn{1}{c|}{\multirow{1}{*}{23.8}}
    &\multicolumn{1}{c|}{\multirow{1}{*}{23.7}}
    &\multicolumn{1}{c|}{\multirow{1}{*}{23.6 }}
    % IPVI
    &\multicolumn{1}{c|}{\multirow{1}{*}{24.0 }}
    &\multicolumn{1}{c|}{\multirow{1}{*}{23.7 }}
    &\multicolumn{1}{c|}{\multirow{1}{*}{23.6 }}
    &\multicolumn{1}{c|}{\multirow{1}{*}{23.6 }}
    % NOVI
    &\multicolumn{1}{c|}{\multirow{1}{*}{\textbf{23.8 }}}
    &\multicolumn{1}{c|}{\multirow{1}{*}{\textbf{23.6} }}
    &\multicolumn{1}{c|}{\multirow{1}{*}{\textbf{23.5} }}
    &\multicolumn{1}{c|}{\multirow{1}{*}{\textbf{23.5}}}\\
    \hline
    \end{tabular}}
    \caption{Regression test RMSE results for large datasets.}
    \label{tab:regression-large}
\end{table*}
\subsection{Image Classification}
We evaluate our method on multiclass classification tasks using the MNIST \cite{lecun1998mnist}, Fashion-MNIST \cite{xiao2017fashion}, and CIFAR-10 \cite{krizhevsky2009learning} datasets. The first two datasets consist of grayscale images of size $28\times28$ pixels, while CIFAR-10 comprises colored images of size $32\times32$ pixels. The results are presented in Table \ref{tab:image-classification}. We note that our method outperforms the other three methods on all three datasets, with significantly less training time and iterations. Additionally, we evaluate our approach using three UCI classification datasets, and the results are presented in Appendix D.A.

 Furthermore, we conduct supplementary experiments to achieve superior performance on the CIFAR-10 dataset. We utilize the convolutional layers of ResNet-20 \cite{he2016deep} as our feature extractor and achieve a remarkable accuracy of 93.56 on the test set under pre-training \cite{wilson2016stochastic}, surpassing the performance of all baseline methods, as detailed in Table \ref{tab:convolutional-cifar10}.
\begin{table}[t]
    \centering
    \begin{tabular}{|c|c|}
        \hline
        \multicolumn{1}{|c|}{\multirow{1}{*}{\textbf{Models}}} &\multicolumn{1}{c|}{\multirow{1}{*}{\textbf{Accuracy (\%)}}} \\

        \hline
        \multicolumn{1}{|c|}{\multirow{1}{*}{\textbf{CNF \cite{yu2021convolutional}}}} &\multicolumn{1}{c|}{\multirow{1}{*}{76.8}} \\

        \hline
        \multicolumn{1}{|c|}{\multirow{1}{*}{\textbf{BDCGP \cite{dutordoir2020bayesian}}}} &\multicolumn{1}{c|}{\multirow{1}{*}{74.6}} \\

        \hline
        \multicolumn{1}{|c|}{\multirow{1}{*}{\textbf{DCGP \cite{blomqvist2020deep}}}} &\multicolumn{1}{c|}{\multirow{1}{*}{75.9}} \\

        \hline
        \multicolumn{1}{|c|}{\multirow{1}{*}{\textbf{DKL\cite{wilson2016stochastic}}}} &\multicolumn{1}{c|}{\multirow{1}{*}{77.0}} \\

        \hline
        \multicolumn{1}{|c|}{\multirow{1}{*}{\textbf{Resnet-20}}} &\multicolumn{1}{c|}{\multirow{1}{*}{91.3}} \\
        \hline
        \multicolumn{1}{|c|}{\multirow{1}{*}{\textbf{Resnet-56}}} &\multicolumn{1}{c|}{\multirow{1}{*}{93.03}}\\
        \hline

        \hline
        \multicolumn{1}{|c|}{\multirow{1}{*}{\textbf{NOVI-DGP}}} &\multicolumn{1}{c|}{\multirow{1}{*}{\textbf{93.56}}}\\
        \hline
        
    \end{tabular}
    \caption{Convolutional results of CIFAR10 dataset compared with baseline deep learning and DGP methods. Our results indicate that our model outperforms ResNet when compared, with only an addition of less than one-tenth of the parameter count.}
    \label{tab:convolutional-cifar10}
\end{table}

\begin{table}
\centering
\normalsize
% \resizebox{\linewidth}{!}{ %< auto-adjusts font size to fill line
\centering
\resizebox{\linewidth}{!}{\begin{tabular}{llcccccc}   % @{}lccccc@{}
\toprule
Type & DSVI 2 & DSVI 3 & DSVI 4 & DSVI 5 \\
\midrule
Time (s) & 0.835 & 0.903 & 0.965 & 1.339 \\
Iteration & 20K & 20K & 20K & 20K \\
\midrule
Type & IPVI 2 & IPVI 3 & IPVI 4 & IPVI 5 \\
\midrule
Time (s) & 0.117 & 0.162 & 0.211 & 0.260 \\
Iteration & 20K & 20K & 20K & 20K \\
\midrule
Type & SGHMC 2 & SGHMC 3 & SGHMC 4 & SGHMC 5 \\
\midrule
Time (s) & 0.630 & 1.000 & 1.490 & 1.870 \\
Iteration & 20K & 20K & 20K & 20K \\
\midrule
Type & NOVI 2 & NOVI 3 & NOVI 4 & NOVI 5 \\
\midrule
Time (s) & 0.391 & 0.613 & 0.863 & 1.123 \\
Iteration & 500 & 500 & 500 & 500 \\
\bottomrule
\end{tabular}}
% \resizebox
\caption{Comparison of training time (s) of a single iteration and total training iterations on Energy dataset. Batch size is set to $1000$ for all four methods.}
\label{tab:computational-complexity}
\end{table}

\subsection{Computational complexity}
We compared the training efficiency of our model with three other methods on a single Tesla V100 GPU using the Energy dataset, and the results are presented in Table \ref{tab:computational-complexity}. It can be observed that our model achieves faster iteration times compared to DSVI and SGHMC and requires less time among the other methods. Furthermore, our method achieves convergence in less than one-tenth of the number of iterations required by the other three methods. Additionally, Table \ref{tab:image-classification} shows that NOVI requires significantly less training time and iterations to converge on high-dimensional image datasets, demonstrating the scalability of our proposed method to larger datasets. Details regarding the number of inducing points used in each method can be found in Appendix D.D.
%\ref{inducing-comparison}.

\subsection{Ablation study} 
To demonstrate the effectiveness of our proposed NOVI method, we compare it with a $2$-layer DGP model by directly maximizing the log-likelihood with randomly initialized $\mathcal{U}$ and hyperparameters $\boldsymbol\nu$. The results are presented in Figure \ref{fig:ablation-regression}. It can be observed that NOVI achieves lower test RMSE and higher train RMSE for all datasets, which indicates that our optimization method reduces overfitting. Although there is some loss fluctuation during the training of our method, it is caused by the unique adversarial training and converges to a stable value after only several hundred iterations. Additional results for ablation study can be found in Appendix.
%\ref{ablation-classification}.

%\section{Related Work}
%\subsection{Hutchinson estimator}
%The computation of the term $\mathrm{Tr}\left(\mathbf{\nabla}_{\boldsymbol{x}}\boldsymbol{\phi }_{\eta}\left( \boldsymbol{x} \right) \right)$ in Equation (\ref{eq:lsd}) takes $\mathcal{O} \left( d \right)$ vector-Jacobian products, which is expensive. Hutchinson estimator requires only one vector-Jacobian product to compute, can be constructed from the following identity by multiplying that matrix by a noise vector twice. \cite{hutchinson1989stochastic}  This single-sample Monte-Carlo estimator, has been widely used in the machine learning communityIn recent years,  \cite{grathwohl2018ffjord,tsitsulin2019shape,han2017approximating}, owing to its efficiency and unbiasedness.
%\begin{equation}
%\mathrm{Tr}\left( \mathbf{\nabla }_{\boldsymbol{x}}\boldsymbol{\phi }_{\eta}\left( x \right) \right) =\mathbb{E} _{\epsilon \sim \mathcal{N} \left( 0,I \right)}\left[ \epsilon ^T\mathbf{\nabla }_{\boldsymbol{x}}\boldsymbol{\phi }_{\eta}\left( \boldsymbol{x} \right) \epsilon \right].
%\end{equation}
%Thus, during training, we use an \textbf{E}fficient version of the LSD objective (\ref{eq:lsd}) with
%\begin{equation}
%\mathrm{LSD}\mathbf{E}\left( q,p;\eta \right) =\;\mathbb{E} _{\boldsymbol{x}\sim q,\epsilon \sim \mathcal{N} \left( 0,I \right)}\left[ \nabla _{\boldsymbol{x}}\log p\left( \boldsymbol{x} \right) ^T\boldsymbol{\phi }_{\eta}\left( \boldsymbol{x} \right) +\epsilon ^T\nabla _{\boldsymbol{x}}\boldsymbol{\phi }_{\eta}\left( \boldsymbol{x} \right) \epsilon \right] .
%\end{equation}

\section{Conclusion}
This paper introduces a novel framework called NOVI, which integrates the Stein Discrepancy with Deep Gaussian Processes (DGPs) to model non-Gaussian and hierarchical-related posteriors, thereby enhancing the flexibility of DGP models. The approach involves generating inducing variables from a neural generator and optimizing them jointly with variational parameters through adversarial training. Theoretical analysis shows that the bias introduced by our method can be bounded by Fisher divergence, enabling efficient optimization of the neural generator.

Empirical evaluation indicates that NOVI outperforms state-of-the-art approximation methods for both regression and classification tasks, while requiring significantly less training time and iterations to converge. We validated our model on 18 publicly available datasets, including 6 classification datasets and 12 regression datasets. These datasets range in sample size from hundreds (e.g., Boston) to millions (e.g., Year), and are mostly from real-world scenarios. Our model outperformed the latest 5 baseline methods on 16 of these datasets. The improvements, particularly in metrics like MSE, are considerable, indicating the effectiveness of our model from a hypothesis testing perspective. We also observed performance fluctuations in some datasets as the number of DGP layers increased during experiments. We hope that future work can address this limitation by making technical breakthroughs in this area.

Future work could also focus on  utilizing Neural Architecture Search (NAS) \cite{elsken2019neural} methods to obtain more suitable network architectures for practical applications. Overall, the proposed NOVI framework represents a significant advancement in the field of deep learning, and holds promise for a wide range of applications in both academia and industry.

%\newpage
\appendix
\appendices
\section{ The Solution to SVGP and DSVI}
%\ref{thm:1}
\label{theorem-1}

Due to the Gaussian mean-field assumptions, the solution to SVGP has an analytical solution
\begin{align}
\label{mu}
\begin{aligned}
q(\mathbf{f})&=\mathcal{N}(\mathbf{f}|\boldsymbol{\mu}, \mathbf{\Sigma})\\ \text{where} \qquad
\boldsymbol{\mu} &= {\mathbf{K}_{\mathbf{X} \mathbf{Z}}\mathbf{K}_{\mathbf{Z}\mathbf{Z}}^{-1}\mathbf{m}} \\ \mathbf{\Sigma} &= {\mathbf{K}_{\mathbf{X}\mathbf{X}} - \mathbf{K}_{\mathbf{X}\mathbf{Z}}\mathbf{K}_{\mathbf{Z}\mathbf{Z}}^{-1}(\mathbf{K}_{\mathbf{Z}\mathbf{Z}} - \mathbf{S})\mathbf{K}_{\mathbf{Z} \mathbf{Z}}^{-1}\mathbf{K}_{\mathbf{Z}\mathbf{X}}} 
\end{aligned}
\end{align}

While performing similarly in DSVI, they have a analytical form for $q(\mathbf{F})$
\begin{small}
\begin{equation}
\begin{split}
&q(\{\mathbf{F}_{\ell}\}_{\ell =1}^{L})=\\    
&\prod\nolimits_{\ell=1}^L{\prod\nolimits_{d=1}^{D_{\ell}}{\int{q\left( \mathbf{F}_{\ell, d}| \mathbf{F}_{\ell -1}, \mathbf{U}_{\ell, d} \right) q\left( \mathbf{U}_{\ell ,d} \right) d\mathbf{U}_{\ell ,d}}}}\\&=\prod\nolimits_{\ell =1}^L{\prod\nolimits_{d=1}^{D_{\ell}}{\mathcal{N} \left( \mathbf{F}_{\ell ,d}|\boldsymbol{\mu }_{\ell ,d},\mathbf{\Sigma }_{\ell, d} \right)}},
\end{split}
\end{equation}
\end{small}
where $\boldsymbol{\mu }_{\ell ,d},\mathbf{\Sigma }_{\ell, d}$ is defined as Equation (\ref{mu}).

\section{Proof of Theorem 1}
\begin{theorem}
Assuming that $\mathcal{U} \in \varOmega$, $\nu \in \varUpsilon$ where $\varOmega$ and $\varUpsilon$ are both compact spaces. We can obtain an asymptotically unbiased estimator for the score function $\nabla_{\mathcal{U}}\log p(\mathcal{U} |\mathcal{D},\nu)$ in Equation (18), which converges in probability to the true value.
%\ref{13}
\begin{equation}
\begin{split}
&\nabla _{\mathcal{U}}\log p(\mathcal{U} |\mathcal{D}, \nu) \approx -\left( \mathbf{\Delta}_1,\dots,\mathbf{\Delta }_{\ell},\dots,\mathbf{\Delta }_L \right)^\top \\&+\mathbf{\nabla}_{\mathcal{U}}\log \sum\nolimits_{s=1}^S{p(\mathbf{y}|\widehat{\mathbf{F}}_{L}^{\left( s \right)})}
\end{split}
\end{equation}
where $\boldsymbol{\Delta}_{\ell}=$\\
\begin{small}$((\mathbf{K}_{\mathbf{Z}_{\ell} \mathbf{Z}_{\ell}}{ }^{-1} \mathbf{U}_{\ell, 1})^\top,...,(\mathbf{K}_{\mathbf{Z}_{\ell} \mathbf{Z}_{\ell}}{ }^{-1} \mathbf{U}_{\ell, d})^\top,...,(\mathbf{K}_{\mathbf{Z}_{\ell} \mathbf{Z}_{\ell}}{ }^{-1} \mathbf{U}_{\ell, D_{\ell}})^\top)$ \end{small} and \begin{small} $\widehat{\mathbf{F}}_{\ell,d}^{( s )} \sim
\mathcal{N} (\mathbf{K}_{\widehat{\mathbf{F}}_{\ell -1}\mathbf{Z}_{\ell}}\mathbf{K}_{\mathbf{Z}_{\ell}\mathbf{Z}_{\ell}}^{-1}\mathbf{U}_{\ell,d},\mathbf{K}_{\widehat{\mathbf{F}}_{\ell-1}\widehat{\mathbf{F}}_{\ell -1}}-\mathbf{K}_{\widehat{\mathbf{F}}_{\ell -1}\mathbf{Z}_{\ell}}\mathbf{K}_{\mathbf{Z}_{\ell}\mathbf{Z}_{\ell}}^{-1}\mathbf{K}_{\mathbf{Z}_{\ell}\widehat{\mathbf{F}}_{\ell -1}}) $ \end{small}  for $\ell = 1 ,\dots, L$, $S$ is the number of samples involved in estimation.
\end{theorem}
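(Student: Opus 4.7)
The plan is to start from Bayes' rule and decompose the log posterior. Writing $\log p(\mathcal{U}\mid\mathcal{D},\boldsymbol{\nu}) = \log p(\mathbf{y}\mid\mathcal{U},\boldsymbol{\nu}) + \log p(\mathcal{U}\mid\boldsymbol{\nu}) - \log p(\mathbf{y}\mid\boldsymbol{\nu})$, I note that the evidence term vanishes under $\nabla_{\mathcal{U}}$. The prior contribution is straightforward: since $p(\mathcal{U}\mid\boldsymbol{\nu}) = \prod_{\ell}\prod_{d}\mathcal{N}(\mathbf{U}_{\ell,d}\mid\mathbf{0},\mathbf{K}_{\mathbf{Z}_{\ell}\mathbf{Z}_{\ell}})$, differentiating the Gaussian log-density with respect to $\mathbf{U}_{\ell,d}$ yields exactly $-\mathbf{K}_{\mathbf{Z}_{\ell}\mathbf{Z}_{\ell}}^{-1}\mathbf{U}_{\ell,d}$, and stacking these over $\ell$ and $d$ produces the $-(\boldsymbol{\Delta}_1,\dots,\boldsymbol{\Delta}_L)^{\top}$ block in the claim.

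The substantive work is showing that the marginal-likelihood score $\nabla_{\mathcal{U}}\log p(\mathbf{y}\mid\mathcal{U},\boldsymbol{\nu})$ admits the stated Monte Carlo approximation. I would begin with the integral identity
\begin{equation*}
p(\mathbf{y}\mid\mathcal{U},\boldsymbol{\nu}) = \int p(\mathbf{y}\mid\mathbf{F}_L)\prod_{\ell=1}^{L}p(\mathbf{F}_{\ell}\mid\mathbf{F}_{\ell-1},\mathbf{U}_{\ell})\,d\mathcal{F},
\end{equation*}
and then apply the reparameterization given in equation (\ref{pre}) to each conditional Gaussian, which makes the dependence of $\widehat{\mathbf{F}}_{\ell,d}^{(s)}$ on $\mathcal{U}$ explicit and smooth while pushing the randomness into auxiliary noise vectors $\boldsymbol{\epsilon}_{\ell}^{(s)}\sim\mathcal{N}(\mathbf{0},\mathbf{I})$. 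With these reparameterized samples the empirical average $\tfrac{1}{S}\sum_{s=1}^{S}p(\mathbf{y}\mid\widehat{\mathbf{F}}_L^{(s)})$ is a strongly-consistent unbiased estimator of $p(\mathbf{y}\mid\mathcal{U},\boldsymbol{\nu})$ by the strong law of large numbers, so a continuous-mapping argument with $\log$ yields almost-sure convergence of the log-estimate.

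To promote this to convergence of gradients rather than just the function, I invoke the compactness of $\Omega\times\Upsilon$. On this compact set the integrand and its $\mathcal{U}$-derivatives are continuous and bounded, so dominated convergence permits differentiation under the integral sign, and a uniform law of large numbers gives uniform convergence of the Monte Carlo averages and of their $\mathcal{U}$-gradients. Combining this with the fact that $\log$ and its derivative are Lipschitz on any compact set bounded away from zero (the likelihood is strictly positive and continuous on a compact domain, hence bounded below by a positive constant), I can interchange $\nabla_{\mathcal{U}}$ with the Monte Carlo limit and obtain
\begin{equation*}
\nabla_{\mathcal{U}}\log p(\mathbf{y}\mid\mathcal{U},\boldsymbol{\nu}) = \lim_{S\to\infty}\nabla_{\mathcal{U}}\log\!\sum_{s=1}^{S}p(\mathbf{y}\mid\widehat{\mathbf{F}}_L^{(s)}),
\end{equation*}
where the additive constant $-\log S$ vanishes under the gradient.

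The main obstacle is the Jensen bias introduced by the outer $\log$: for finite $S$ the estimator $\log\tfrac{1}{S}\sum_{s}p(\mathbf{y}\mid\widehat{\mathbf{F}}_L^{(s)})$ is biased for $\log p(\mathbf{y}\mid\mathcal{U},\boldsymbol{\nu})$, and care is needed to argue that the corresponding gradient is nevertheless asymptotically unbiased and converges in probability. This is precisely where the compactness hypothesis becomes essential, because it supplies the uniform integrability and uniform Lipschitz control required to push the continuous-mapping and delta-method style arguments through the gradient. Once this technical step is established, concatenating the prior-gradient block with the reparameterized likelihood-gradient block recovers the claimed formula.
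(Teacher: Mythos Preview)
Your proposal is correct and follows essentially the same approach as the paper: Bayes' decomposition, exact Gaussian prior gradient giving the $-\boldsymbol{\Delta}$ block, reparameterized Monte Carlo for the likelihood term, and compactness to control the passage from convergence of $\tfrac{1}{S}\sum_s p(\mathbf{y}\mid\widehat{\mathbf{F}}_L^{(s)})$ to convergence of its gradient of the log. The only minor difference is that the paper invokes the Central Limit Theorem and then bounds $\|\nabla\log p-\nabla\log\tilde{p}\|$ by an explicit algebraic quotient-rule manipulation, whereas you frame the same step via a uniform law of large numbers plus Lipschitz continuity of $\log$ away from zero; both routes rely on the compactness hypothesis in the same way and lead to the same conclusion.
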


\begin{proof} 
From Bayes Formula:
\begin{small}
\begin{align}
\begin{aligned}
\label{eq:bayes}
\log p(\mathcal{U} |\mathcal{D},\nu )&=\log \frac{p( \mathcal{U}) p( \mathcal{D} |\mathcal{U},\nu)}{p( \mathcal{D})}\\&=\log p(\mathcal{U}) +\log p( \mathcal{D} |\mathcal{U},\nu) -\log p(\mathcal{D}), 
\end{aligned}
\end{align}
\end{small}
 
since the prior term $p\left( \mathbf{U}_{\ell,d} \right) =\mathcal{N} \left( 0,\mathbf{K}_{\mathbf{Z}_{\ell}\mathbf{Z}_{\ell}} \right) $, the gradient with $ \mathcal{U}$ is a long vector and is tractable: 
\begin{small}
\begin{equation}
\begin{split}
    &\mathbf{\nabla }_{\mathcal{U}}\log p\left( \mathcal{U} \right)\\
     &=\mathbf{\nabla }_{\mathcal{U}}\log \prod\nolimits_{\ell =1}^L{\prod\nolimits_{d=1}^{D_{\ell}}{p\left( \mathbf{U}_{\ell ,d} \right)}}\\
     &=-\frac{1}{2}\sum\nolimits_{\ell =1}^L{\sum\nolimits_{d=1}^{D_{\ell}}{\mathbf{\nabla }_{\mathcal{U}}\mathbf{U}_{\ell,d}^{T}\mathbf{K}_{\mathbf{Z}_{\ell}\mathbf{Z}_{\ell}}^{-1}\mathbf{U}_{\ell,d}}}\\ 
&=-\left( \mathbf{\Delta }_1,\dots ,\mathbf{\Delta }_{\ell},\dots ,\mathbf{\Delta }_L \right)^\top 
\end{split}
\end{equation}
\end{small}
 The third term of Equation (\ref{eq:bayes}) is a constant w.r.t $\mathcal{U}$. We compute the second data likelihood term $\log p( \mathcal{D} |\mathcal{U} ,\nu )$ using re-parameterization trick and Monte Carlo method over each layer:
\begin{small}
$$
\mathbf{\nabla }_{\mathcal{U}}\log p(\mathcal{D} |\mathcal{U} ,{\nu} )=\mathbf{\nabla }_{\mathcal{U}}\log \int{p(\mathbf{y}|\mathbf{F}_L)\prod_{\ell =1}^L{p}(\mathbf{F}_{\ell}|\mathbf{F}_{\ell -1},\mathbf{U}_{\ell})d\mathbf{F}_{\ell -1}}$$
\begin{equation}
    =\mathbf{\nabla }_{\mathcal{U}}\log  \mathbb{E} _{p(\mathbf{F}_L|\mathcal{U})}\,\,p(\mathbf{y}|\mathbf{F}_L)\approx \mathbf{\nabla }_{\mathcal{U}}
     \log  \sum\nolimits_{s=1}^S{p(\mathbf{y}|\widehat{\mathbf{F}}_{L}^{( s )})}
\end{equation}
\end{small}
The last equation in the above expression can be derived from the following conditions: We denote Monte Carlo estimation $
\frac{1}{S}\sum\nolimits_{s=1}^S{p(\mathbf{y}|\widehat{\mathbf{F}}_{L}^{( s )})}$ as $\tilde{p}\left(\mathbf{y}|\mathcal{U} \right)$ and the true value as $p(\mathbf{y}|\mathcal{U} )$, respectively. By the Central Limit Theorem, $
\frac{\tilde{p}(\mathbf{y}|\mathcal{U} )-p(\mathbf{y}|\mathcal{U} )}{\sqrt{\frac{1}{S}Var\left( p(\mathbf{y}|\widehat{\mathbf{F}}_{L}^{(s)}) \right)}}\sim \mathcal{N} \left( 0,1 \right) 
$, i.e., $\tilde{p}(\mathbf{y}|\mathcal{U}) \xrightarrow[]{\mathcal{P}} p(\mathbf{y}|\mathcal{U})$ and $\nabla \tilde{p}(\mathbf{y}|\mathcal{U}) \xrightarrow[]{\mathcal{P}} \nabla p(\mathbf{y}|\mathcal{U})$, since $
\nabla \tilde{p}(\mathbf{y}|\mathcal{U} )-\nabla p(\mathbf{y}|\mathcal{U} )=\nabla \sqrt{\frac{Var\left( p(\mathbf{y}|\widehat{\mathbf{F}}_{L}^{(s)}) \right)}{S}}\epsilon \xrightarrow[]{\mathcal{P}}0$ as S increases, where $\epsilon \sim \mathcal{N} \left( 0,1 \right)$. 
%Assuming that $\log p\left( \mathbf{y}|\mathcal{U} \right)$ and $\log \tilde{p}\left( \mathbf{y}|\mathcal{U} \right)$ are bounded, since the log likelihood function log $p(\mathbf{y}|\mathcal{U},\nu)$ is  continuous on compact domain, 
The likelihood function  $p(\mathbf{y}|U,\nu)$ is a continuous bounded function defined on a compact domain $\varOmega$ and $\varUpsilon$, then uniform continuity guarantees the boundedness of its derivative, then we have:%we can use the definition of the operator norm of the $\nabla$ operator, denoted by $\left|| \nabla \right||$, and the local Lipschitz property of the logarithm function to bound the difference between $\nabla \log p\left( \mathbf{y}|\mathcal{U} \right)$ and $\nabla \log \tilde{p}\left( \mathbf{y}|\mathcal{U} \right)$ as follows: 
\begin{align}
\begin{array}{l}
\|\nabla \log p(\mathbf{y} \mid \mathcal{U})-\nabla \log \tilde{p}(\mathbf{y} \mid \mathcal{U})\| \\
=\left\|\frac{\nabla p(\mathbf{y} \mid \mathcal{U})}{p(\mathbf{y} \mid \mathcal{U})}-\frac{\nabla \bar{p}(\mathbf{y} \mid \mathcal{U})}{\tilde{p}(\mathbf{y} \mid \mathcal{U})}\right\| \\
=\left\|\frac{\tilde{p}(\mathbf{y} \mid \mathcal{U}) \nabla p(\mathbf{y} \mid \mathcal{U})-p(\mathbf{y} \mid \mathcal{U}) \nabla \tilde{p}(\mathbf{y} \mid \mathcal{U})}{p(\mathbf{y} \mid \mathcal{U}) \tilde{p}(\mathbf{y} \mid \mathcal{U})}\right\| \\
=\left\|\frac{\tilde{p}(\mathbf{y} \mid \mathcal{U}) \nabla p(\mathbf{y} \mid \mathcal{U})-p(\mathbf{y} \mid \mathcal{U}) \nabla p(\mathbf{y} \mid \mathcal{U})+p(\mathbf{y} \mid \mathcal{U}) \nabla p(\mathbf{y} \mid \mathcal{U})-p(\mathbf{y} \mid \mathcal{U}) \nabla \tilde{p}(\mathbf{y} \mid \mathcal{U})}{p(\mathbf{y} \mid \mathcal{U}) \tilde{p}(\mathbf{y} \mid \mathcal{U})}\right\| \\
=\left\|\frac{(\tilde{p}(\mathbf{y} \mid \mathcal{U})-p(\mathbf{y} \mid \mathcal{U})) \nabla p(\mathbf{y} \mid \mathcal{U})+p(\mathbf{y} \mid \mathcal{U})(\nabla p(\mathbf{y} \mid \mathcal{U})-\nabla \tilde{p}(\mathbf{y} \mid \mathcal{U}))}{p(\mathbf{y} \mid \mathcal{U}) \tilde{p}(\mathbf{y} \mid \mathcal{U})}\right\| \\
\leqslant \frac{1}{p(\mathbf{y} \mid \mathcal{U}) \tilde{p}(\mathbf{y} \mid \mathcal{U})} \cdot\left(\begin{array}{c}
\|\nabla p(\mathbf{y} \mid \mathcal{U})\| \cdot\|\tilde{p}(\mathbf{y} \mid \mathcal{U})-p(\mathbf{y} \mid \mathcal{U})\| \\
+p(\mathbf{y} \mid \mathcal{U})\|(\nabla p(\mathbf{y} \mid \mathcal{U})-\nabla \tilde{p}(\mathbf{y} \mid \mathcal{U}))\|
\end{array}\right) \\
\xrightarrow[]{\mathcal{P}}0
\end{array}
\end{align}

%the log likelihood function log $p(\mathbf{y}|U,\nu)$ is a continuous bounded function defined on a compact domain $\varOmega$ and $\varUpsilon$, then its gradient must be bounded and the theorem holds true under this condition. 
It is easy to derive from the above equation that $\nabla \log \tilde{p}(\mathbf{y} \mid \mathcal{U}) \xrightarrow[]{\mathcal{P}} \nabla \log p(\mathbf{y}|\mathcal{U})$, the approximately equal sign means that the right-hand side converges to the left-hand side in probability. From the above expression, we can also conclude that this estimator is asymptotically unbiased.

we draw $S$ samples $\widehat{\mathbf{F}}_{\ell, d}^{(s)}$ from
$\widehat{\mathbf{F}}_{\ell,d}\sim p(\mathbf{F}_{\ell,d}|\widehat{\mathbf{F}}_{\ell-1},\mathbf{U}_{\ell,d} ) $ for $\ell = 1 ,\dots, L$ as
\begin{align}
\begin{aligned}
\widehat{\mathbf{F}}_{\ell,d}&=\mathbf{K}_{\widehat{\mathbf{F}}_{\ell -1}\mathbf{Z}_{\ell}}\mathbf{K}_{\mathbf{Z}_{\ell}\mathbf{Z}_{\ell}}^{-1}\mathbf{U}_{\ell,d}\\&+\epsilon _{\ell}\odot \sqrt{\mathrm{diag}\,(\mathbf{K}_{\widehat{\mathbf{F}}_{\ell -1}\widehat{\mathbf{F}}_{\ell -1}}-\mathbf{K}_{\widehat{\mathbf{F}}_{\ell -1}\mathbf{Z}_{\ell}}\mathbf{K}_{\mathbf{Z}_{\ell}\mathbf{Z}_{\ell}}^{-1}\mathbf{K}_{\mathbf{Z}_{\ell}\widehat{\mathbf{F}}_{\ell -1}})}
\end{aligned}
\end{align}
where $\epsilon ^{\ell}\sim \mathcal{N} ( 0,\mathbf{I}_{D^{\ell}} ) $. As a result, we obtain the score function via automatic differentiation:
\begin{equation}
\begin{split}
&\nabla _{\mathcal{U}}\log p(\mathcal{U} |\mathcal{D}, \boldsymbol{\nu}) \approx -\left( \mathbf{\Delta}_1,\dots,\mathbf{\Delta }_{\ell},\dots,\mathbf{\Delta }_L \right)^\top \\&+\mathbf{\nabla}_{\mathcal{U}}\log \sum\nolimits_{s=1}^S{p(\mathbf{y}|\widehat{\mathbf{F}}_{L}^{\left( s \right)})}
\end{split}
\end{equation}

%Moreover, for regression task, let $S=1$, Equation (\ref{score funca}) has a simpler form:
%\begin{equation}
    %\nabla _{\mathcal{U}}\log p\left( \mathcal{U} |\mathcal{D} ,\nu \right) =-\left( \mathbf{\Delta }_1,\dots ,\mathbf{\Delta }_{\ell},\dots ,\mathbf{\Delta }_L \right) +\frac{1}{\sigma ^2}\mathbf{\nabla }_{\mathcal{U}}\widehat{\mathbf{F}}_{L}^{\left( s \right)T}\left( \mathbf{y}-\widehat{\mathbf{F}}_{L}^{\left( s \right)} \right) 
%\end{equation}
%where $\sigma$ is the noise variance.
\end{proof}

\section{Proof of Theorem 2 and Theorem 3}
%\ref{thm:2} \ref{thm:3}
\label{theorem-23-app}
\begin{definition}
Let $p(\boldsymbol{x})$ be a probability density supported on $\mathcal{X} \subseteq \mathbb{R} ^d$ and $\boldsymbol{\phi }: \mathcal{X} \rightarrow \mathbb{R}^d$ be  a  differentiable function, we define Langevin-Stein Operator (LSO) {\cite{ranganath2016operator}}
\begin{equation}
\mathcal{A}_p\boldsymbol{\phi }( \boldsymbol{x} ) =\nabla_{\boldsymbol{x}} \log p( \boldsymbol{x} ) ^T\boldsymbol{\phi }( \boldsymbol{x} ) +\mathrm{Tr}( \nabla _{\boldsymbol{x}} \boldsymbol{\phi }( \boldsymbol{x})).
\end{equation}
\end{definition}
\begin{lemma}
Let $p(\boldsymbol{x})$ be a probability density function supported on $\mathcal{X} \subseteq \mathbb{R}^d$, and $\boldsymbol{\phi}: \mathcal{X} \rightarrow \mathbb{R}^d$ be a differentiable function. Suppose that $\int_{\partial \mathcal{X}} p(\boldsymbol{x}) \boldsymbol{\phi}(\boldsymbol{x}) d\boldsymbol{x} = \boldsymbol{0}$, where $\partial \mathcal{X}$ represents the boundary of $\mathcal{X}$. Under these conditions, Stein's identity can be expressed as
\begin{align}
\label{equ:steq11}
\mathbb{E} _{\boldsymbol{x}\sim p}\left[ \mathcal{A} _p\boldsymbol{\phi }( \boldsymbol{x} ) \right] =0.
\end{align}
\end{lemma}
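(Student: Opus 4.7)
The plan is to reduce the expectation to a total-divergence integral and then invoke the stated boundary assumption, so the whole argument rests on two elementary ingredients: the log-derivative identity and the divergence theorem. I would first expand the expectation as $\int_{\mathcal{X}} \mathcal{A}_p \boldsymbol{\phi}(\boldsymbol{x})\, p(\boldsymbol{x})\, d\boldsymbol{x}$ and then use $p(\boldsymbol{x})\nabla_{\boldsymbol{x}}\log p(\boldsymbol{x}) = \nabla_{\boldsymbol{x}} p(\boldsymbol{x})$, valid wherever $p>0$, to eliminate the score function from the first term of $\mathcal{A}_p\boldsymbol{\phi}$.

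After this substitution, the integrand becomes
\begin{equation*}
(\nabla_{\boldsymbol{x}} p(\boldsymbol{x}))^{T} \boldsymbol{\phi}(\boldsymbol{x}) \; + \; p(\boldsymbol{x})\,\mathrm{Tr}\bigl(\nabla_{\boldsymbol{x}}\boldsymbol{\phi}(\boldsymbol{x})\bigr),
\end{equation*}
which is precisely the Euclidean divergence of the vector field $\boldsymbol{x}\mapsto p(\boldsymbol{x})\boldsymbol{\phi}(\boldsymbol{x})$, by the product rule for divergence. The next step is to integrate this divergence over $\mathcal{X}$ and apply the divergence theorem (Gauss's theorem) to rewrite it as a surface integral on $\partial\mathcal{X}$, whose integrand is $p(\boldsymbol{x})\boldsymbol{\phi}(\boldsymbol{x})$ dotted with the outward unit normal. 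The hypothesis $\int_{\partial \mathcal{X}} p(\boldsymbol{x})\boldsymbol{\phi}(\boldsymbol{x})\, d\boldsymbol{x} = \boldsymbol{0}$ (understood as the vanishing of the surface flux) then forces the integral to be zero, which is exactly the claim.

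The main obstacle I anticipate is not the calculation itself but pinning down the regularity and boundary conditions that justify the divergence-theorem step. Two standard regimes cover the cases of interest: (i) $\mathcal{X}=\mathbb{R}^d$, where the hypothesis is read as the tail decay $\lim_{\|\boldsymbol{x}\|\to\infty} p(\boldsymbol{x})\boldsymbol{\phi}(\boldsymbol{x})=\boldsymbol{0}$ together with $\nabla\cdot(p\boldsymbol{\phi})\in L^{1}$, so the surface term vanishes in the limit; and (ii) a bounded domain $\mathcal{X}$ with $\mathcal{C}^{1}$ boundary on which $p\boldsymbol{\phi}$ has zero normal trace. In either setting the hypothesis kills the boundary contribution, and the conclusion follows in one line. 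I would mention that this is the standard derivation of Stein's identity and would not dwell further on the analytic technicalities, since the paper's applications only require the identity in a regime where these conditions manifestly hold.
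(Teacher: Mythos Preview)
Your proposal is correct and follows essentially the same route as the paper: recognize that the integrand $p(\boldsymbol{x})\mathcal{A}_p\boldsymbol{\phi}(\boldsymbol{x})$ equals the divergence of $p(\boldsymbol{x})\boldsymbol{\phi}(\boldsymbol{x})$ (via the log-derivative identity and product rule), then apply the divergence theorem and the boundary hypothesis. The only cosmetic difference is that the paper first passes through the matrix form $\nabla_{\boldsymbol{x}}(p(\boldsymbol{x})\boldsymbol{\phi}(\boldsymbol{x}))$ and takes the trace at the end, whereas you work directly with the scalar divergence; the substance is identical.
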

\begin{proof}
\begin{equation}
\begin{split}
 \mathbb{E} _{\boldsymbol{x}\sim p}[ \mathcal{A} _p\boldsymbol{\phi }( \boldsymbol{x}) ] &=\mathbb{E} _{\boldsymbol{x}\sim p}[ \nabla _{\boldsymbol{x}}\log p( \boldsymbol{x} ) ^T\boldsymbol{\phi }( \boldsymbol{x} ) +\mathrm{Tr}( \nabla _{\boldsymbol{x}}\boldsymbol{\phi }(\boldsymbol{x}) )]\\ &=\mathrm{Tr}( \mathbb{E} _{\boldsymbol{x}\sim p}[ \boldsymbol{\phi }( \boldsymbol{x} ) \nabla _{\boldsymbol{x}}\log p(\boldsymbol{x})^T+\nabla_{\boldsymbol{x}}\boldsymbol{\phi }(\boldsymbol{x})])   
\end{split}
\end{equation}

\begin{align}
\begin{aligned}
    &\mathbb{E} _{\boldsymbol{x}\sim p}[ \boldsymbol{\phi}(\boldsymbol{x} )\nabla _{\boldsymbol{x}}\log p(\boldsymbol{x}) ^T+\nabla_{\boldsymbol{x}}\boldsymbol{\phi }(\boldsymbol{x} ) ]\\&=\int_{\mathcal{X}}{p( \boldsymbol{x}) \boldsymbol{\phi }( \boldsymbol{x}) \nabla _{\boldsymbol{x}}\log p(\boldsymbol{x} )^T+p(\boldsymbol{x} ) \nabla _{\boldsymbol{x}}\boldsymbol{\phi}(\boldsymbol{x})d\boldsymbol{x}}\\&=\int_{\mathcal{X}}{\nabla_{\boldsymbol{x}}( p(\boldsymbol{x})\boldsymbol{\phi}( \boldsymbol{x} )) d\boldsymbol{x}} 
\end{aligned}
\end{align}

From Divergence Theorem:
\begin{align}
\begin{aligned}
   &\mathrm{Tr(}\int_{\mathcal{X}}{\nabla _{\boldsymbol{x}}(p(\boldsymbol{x})\boldsymbol{\phi }(\boldsymbol{x}))d\boldsymbol{x}})\\&=\int_{\mathcal{X}}{\mathrm{div(}p(\boldsymbol{x})\boldsymbol{\phi }(\boldsymbol{x}))d\boldsymbol{x}}\\&=\int_{\partial \mathcal{X}}{p(\boldsymbol{x})\boldsymbol{\phi }(\boldsymbol{x})^T\boldsymbol{n}(\boldsymbol{x})}d\boldsymbol{x}=0
\end{aligned} 
\end{align}
where $\boldsymbol{n}(\boldsymbol{x})$ is the outward-pointing unit vector on the boundary of $\mathcal{X}$.
\end{proof} 
\begin{lemma} 
\label{lemma2}
Suppose $p(\boldsymbol{x})$, $q(\boldsymbol{x})$ are probability densities supported on $\mathcal{X} \subseteq \mathbb{R}^d$ and $\boldsymbol{\phi }: \mathcal{X} \rightarrow \mathbb{R}^d$ is a differentiable function satisfying $\int_{\partial\mathcal{X}}{p(\boldsymbol{x}) \boldsymbol{\phi }(\boldsymbol{x})}d\boldsymbol{x}=\boldsymbol{0}$ and $\int_{\partial\mathcal{X}}{q(\boldsymbol{x}) \boldsymbol{\phi }(\boldsymbol{x})}d\boldsymbol{x}=\boldsymbol{0}$, then
\begin{equation}
\mathbb{E} _{\boldsymbol{x}\sim q}\left[ \mathcal{A} _p\boldsymbol{\phi }(\boldsymbol{x}) \right] =\mathbb{E} _{\boldsymbol{x}\sim q}[ ( \nabla _{\boldsymbol{x}}\log p(\boldsymbol{x})-\nabla _{\boldsymbol{x}}\log q(\boldsymbol{x}) ) ^T\boldsymbol{\phi }(\boldsymbol{x})]
\end{equation}
\end{lemma}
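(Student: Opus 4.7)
The plan is to reduce this to an application of Lemma 1 (Stein's identity) to $q$ itself, via the classical add-and-subtract trick. First I would expand the left-hand side using the definition of the Langevin-Stein operator:
\begin{equation*}
\mathbb{E}_{\boldsymbol{x}\sim q}[\mathcal{A}_p \boldsymbol{\phi}(\boldsymbol{x})] = \mathbb{E}_{\boldsymbol{x}\sim q}\bigl[(\nabla_{\boldsymbol{x}}\log p(\boldsymbol{x}))^T \boldsymbol{\phi}(\boldsymbol{x}) + \mathrm{Tr}(\nabla_{\boldsymbol{x}}\boldsymbol{\phi}(\boldsymbol{x}))\bigr].
\end{equation*}
The idea is to split this into a piece that matches $\mathcal{A}_q\boldsymbol{\phi}$ (and hence vanishes by Lemma 1) and a residual that is exactly the desired difference of score functions.

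Next I would add and subtract $(\nabla_{\boldsymbol{x}}\log q(\boldsymbol{x}))^T \boldsymbol{\phi}(\boldsymbol{x})$ inside the expectation. This splits the right-hand side into
\begin{equation*}
\mathbb{E}_{\boldsymbol{x}\sim q}\bigl[(\nabla_{\boldsymbol{x}}\log q(\boldsymbol{x}))^T \boldsymbol{\phi}(\boldsymbol{x}) + \mathrm{Tr}(\nabla_{\boldsymbol{x}}\boldsymbol{\phi}(\boldsymbol{x}))\bigr] + \mathbb{E}_{\boldsymbol{x}\sim q}\bigl[(\nabla_{\boldsymbol{x}}\log p(\boldsymbol{x}) - \nabla_{\boldsymbol{x}}\log q(\boldsymbol{x}))^T \boldsymbol{\phi}(\boldsymbol{x})\bigr].
\end{equation*}
The first summand is precisely $\mathbb{E}_{\boldsymbol{x}\sim q}[\mathcal{A}_q \boldsymbol{\phi}(\boldsymbol{x})]$.

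Then I would invoke Lemma 1 with the distribution $q$ in place of $p$, which is legitimate because the hypothesis $\int_{\partial \mathcal{X}} q(\boldsymbol{x})\boldsymbol{\phi}(\boldsymbol{x})d\boldsymbol{x} = \boldsymbol{0}$ is exactly the boundary condition needed for the divergence-theorem step in that lemma's proof. This forces the first summand to equal zero, leaving only
\begin{equation*}
\mathbb{E}_{\boldsymbol{x}\sim q}\bigl[(\nabla_{\boldsymbol{x}}\log p(\boldsymbol{x}) - \nabla_{\boldsymbol{x}}\log q(\boldsymbol{x}))^T \boldsymbol{\phi}(\boldsymbol{x})\bigr],
\end{equation*}
which is the claimed identity.

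There is essentially no hard step here; the only subtlety is making sure that the boundary hypothesis on $q$ is used (so that Stein's identity applies to $q$ and not merely to $p$), which is precisely why both boundary conditions appear in the statement. No integrability or smoothness issues arise beyond what is already assumed for $\boldsymbol{\phi}$ and the densities. The result is the standard bridge identity that will later be used to connect the Stein discrepancy objective to the Fisher divergence in Theorem 2.
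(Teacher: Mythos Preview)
Your proposal is correct and essentially identical to the paper's proof: both apply Lemma~1 to $q$ (using the boundary hypothesis on $q$) to eliminate the trace term, leaving only the score-difference inner product. The paper phrases it as solving Stein's identity for $\mathbb{E}_{\boldsymbol{x}\sim q}[\mathrm{Tr}(\nabla_{\boldsymbol{x}}\boldsymbol{\phi}(\boldsymbol{x}))]$ and substituting, while you phrase it as add-and-subtract, but the underlying argument is the same.
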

\begin{proof}
By Lemma \ref{lemma1},
\begin{equation}
\begin{split}
&\mathbb{E} _{\boldsymbol{x}\sim q}\left[ \nabla _{\boldsymbol{x}}\log q(\boldsymbol{x})^T\boldsymbol{\phi }(\boldsymbol{x})+\mathrm{Tr}\left( \nabla _{\boldsymbol{x}}\boldsymbol{\phi }(\boldsymbol{x}) \right) \right] =0\\
&\Rightarrow \mathbb{E} _{\boldsymbol{x}\sim q}\left[ \mathrm{Tr}\left( \nabla _{\boldsymbol{x}}\boldsymbol{\phi }(\boldsymbol{x}) \right) \right] =
-\mathbb{E} _{\boldsymbol{x}\sim q}\left[ \nabla _{\boldsymbol{x}}\log q(\boldsymbol{x})^T\boldsymbol{\phi }(\boldsymbol{x}) \right]     
\end{split}
\end{equation}

\begin{equation}
\begin{split}
\mathbb{E} _{\boldsymbol{x}\sim q}\left[ \mathcal{A} _p\boldsymbol{\phi }(\boldsymbol{x}) \right]&=\mathbb{E} _{\boldsymbol{x}\sim q}[\nabla _{\boldsymbol{x}}\log p(\boldsymbol{x})^T\boldsymbol{\phi }(\boldsymbol{x})+\mathrm{Tr}( \nabla _{\boldsymbol{x}}\boldsymbol{\phi }(\boldsymbol{x}) )]\\&
=\mathbb{E} _{\boldsymbol{x}\sim q}[ ( \nabla _{\boldsymbol{x}}\log p(\boldsymbol{x})-\nabla _{\boldsymbol{x}}\log q(\boldsymbol{x}) ) ^T\boldsymbol{\phi }(\boldsymbol{x}) ]     
\end{split}
\end{equation}
\end{proof}
\begin{lemma}
\label{lemma3}
For any $\boldsymbol{a}, \boldsymbol{y}\in \mathbb{R} ^d$ and $\lambda >0$, the function $\boldsymbol{y}\mapsto \boldsymbol{a}^T\boldsymbol{y}-\lambda \boldsymbol{y}^T\boldsymbol{y}$ achieves its maximum $\frac{1}{4\lambda}\boldsymbol{a}^T\boldsymbol{a}$ if and only if $\boldsymbol{y}=\frac{1}{2\lambda}\boldsymbol{a}$.
\end{lemma}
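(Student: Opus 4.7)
The plan is to prove Lemma 3 by completing the square, since this approach simultaneously yields the maximum value, identifies the unique maximizer, and automatically produces the "if and only if" equivalence without any extra argument. An alternative would be to compute the gradient, solve $\nabla_{\boldsymbol{y}}(\boldsymbol{a}^T\boldsymbol{y} - \lambda\boldsymbol{y}^T\boldsymbol{y}) = \boldsymbol{a} - 2\lambda\boldsymbol{y} = 0$, verify the Hessian $-2\lambda I$ is negative definite (using $\lambda>0$), and substitute back, but that requires a separate uniqueness argument.

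First, I would rewrite the objective as
\[
\boldsymbol{a}^T\boldsymbol{y} - \lambda \boldsymbol{y}^T\boldsymbol{y} = -\lambda\left(\boldsymbol{y}^T\boldsymbol{y} - \tfrac{1}{\lambda}\boldsymbol{a}^T\boldsymbol{y}\right),
\]
and then complete the square inside the parentheses by adding and subtracting $\frac{1}{4\lambda^2}\boldsymbol{a}^T\boldsymbol{a}$, which gives
\[
\boldsymbol{a}^T\boldsymbol{y} - \lambda \boldsymbol{y}^T\boldsymbol{y} = -\lambda\left\|\boldsymbol{y} - \tfrac{1}{2\lambda}\boldsymbol{a}\right\|_2^2 + \tfrac{1}{4\lambda}\boldsymbol{a}^T\boldsymbol{a}.
\]

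From this identity the conclusion is immediate: since $\lambda>0$, the first term is nonpositive and equals zero precisely when $\boldsymbol{y} = \frac{1}{2\lambda}\boldsymbol{a}$, so the expression is bounded above by $\frac{1}{4\lambda}\boldsymbol{a}^T\boldsymbol{a}$, with equality achieved if and only if $\boldsymbol{y} = \frac{1}{2\lambda}\boldsymbol{a}$. There is essentially no obstacle here; the only care needed is getting the algebra of the completed square correct (the sign of $-\lambda$ in front and the factor $\frac{1}{2\lambda}$ in the shift), after which both directions of the biconditional fall out of the single identity without any additional calculus or convexity argument.
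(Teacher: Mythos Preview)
Your proof is correct. The paper takes a slightly different route: it first applies Cauchy--Schwarz to bound $\boldsymbol{a}^T\boldsymbol{y}\le\|\boldsymbol{a}\|_2\|\boldsymbol{y}\|_2$, then completes the square in the \emph{scalar} variable $\|\boldsymbol{y}\|_2$ to obtain
\[
\boldsymbol{a}^T\boldsymbol{y}-\lambda\boldsymbol{y}^T\boldsymbol{y}\le \tfrac{1}{4\lambda}\|\boldsymbol{a}\|_2^2-\lambda\bigl(\|\boldsymbol{y}\|_2-\tfrac{1}{2\lambda}\|\boldsymbol{a}\|_2\bigr)^2\le \tfrac{1}{4\lambda}\|\boldsymbol{a}\|_2^2.
\]
Your approach completes the square directly in the \emph{vector} variable, producing the exact identity $\boldsymbol{a}^T\boldsymbol{y}-\lambda\boldsymbol{y}^T\boldsymbol{y}=-\lambda\|\boldsymbol{y}-\tfrac{1}{2\lambda}\boldsymbol{a}\|_2^2+\tfrac{1}{4\lambda}\boldsymbol{a}^T\boldsymbol{a}$. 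The advantage of your route is that the ``if and only if'' is immediate from a single equality, whereas the paper's chain involves two inequalities and hence two equality conditions (Cauchy--Schwarz equality, forcing $\boldsymbol{y}$ to be a nonnegative multiple of $\boldsymbol{a}$, and the scalar square vanishing, fixing the multiple) that must be combined to pin down the unique maximizer---a step the paper states but does not spell out.
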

\begin{proof}
From Cauchy-Schwarz inequality:
\begin{align}
\begin{aligned}
    &\boldsymbol{a}^T\boldsymbol{y}-\lambda \boldsymbol{y}^T\boldsymbol{y}\leqslant \| \boldsymbol{a} \| _2\| \boldsymbol{y} \| _2-\lambda \left\| \boldsymbol{y} \right\| _{2}^{2}\\&=\frac{1}{4\lambda}\| \boldsymbol{a}\| _{2}^{2}-{\lambda}( \|\boldsymbol{y} \| _2-\frac{1}{2\lambda}\| \boldsymbol{a} \| _2 ) ^2
\leqslant \frac{1}{4\lambda}\| \boldsymbol{a}\| _{2}^{2}.
\end{aligned}
\end{align}

The equality holds iff  $\boldsymbol{y}=\frac{1}{2\lambda}\boldsymbol{a}$.
\end{proof}
\begin{definition}
The Fisher divergence {\cite{sriperumbudur2017density}} between two suitably smooth density functions is defined as 
\begin{equation}
FD\left( q,p \right) =\int_{\mathbb{R} ^d}{\left\| \mathbf{\nabla }\log q\left( \theta \right) -\mathbf{\nabla }\log p\left( \theta \right) \right\| _{2}^{2}q\left( \theta \right) d\theta}.
\end{equation}
\end{definition}

\begin{theorem}
Training the generator with the optimal discriminator corresponds to minimizing the Fisher divergence between $q_{\boldsymbol{\theta}}$ and $p$, and the corresponding optimal loss for equation (\ref{eq:l}) is
\begin{equation}
    \mathcal{L} \left( \boldsymbol{\theta} ,\boldsymbol{\nu} \right) =\frac{1}{4\lambda}FD\left( q_{\boldsymbol{\theta}}\left( \mathcal{U} \right) ,p\left( \mathcal{U} |\mathcal{D} ,\boldsymbol{\nu} \right) \right), 
\end{equation}
where $\lambda \in \mathbb{R}^+$ is a regularization strength defined in equation (\ref{rsd}). 
\end{theorem}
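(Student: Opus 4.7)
The plan is to combine the three preparatory lemmas already laid out in Appendix C in a way that collapses the minimax problem into a closed-form Fisher-divergence expression. First I would unfold $\mathcal{L}(\boldsymbol{\theta},\boldsymbol{\nu}) = \sup_{\boldsymbol{\eta}} \mathrm{RSD}(q_{\boldsymbol{\theta}}(\mathcal{U}), p(\mathcal{U}\mid\mathcal{D},\boldsymbol{\nu}); \boldsymbol{\phi}_{\boldsymbol{\eta}})$ using the explicit form in equation (\ref{rsd}), so that the first two terms are exactly $\mathbb{E}_{\mathcal{U}\sim q_{\boldsymbol{\theta}}}[\mathcal{A}_p \boldsymbol{\phi}_{\boldsymbol{\eta}}(\mathcal{U})]$ (with $p$ standing for the posterior) and the last term is the $\mathcal{L}_2$ penalty $-\lambda\, \mathbb{E}_{\mathcal{U}\sim q_{\boldsymbol{\theta}}}[\boldsymbol{\phi}_{\boldsymbol{\eta}}^{\top}\boldsymbol{\phi}_{\boldsymbol{\eta}}]$.

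Next I would invoke Lemma \ref{lemma2} to rewrite the Langevin--Stein term under $q_{\boldsymbol{\theta}}$ in its score-difference form, giving
\begin{equation*}
\mathrm{RSD} = \mathbb{E}_{\mathcal{U}\sim q_{\boldsymbol{\theta}}}\!\left[(\nabla \log p - \nabla \log q_{\boldsymbol{\theta}})^{\top}\boldsymbol{\phi}_{\boldsymbol{\eta}}(\mathcal{U}) - \lambda\,\boldsymbol{\phi}_{\boldsymbol{\eta}}(\mathcal{U})^{\top}\boldsymbol{\phi}_{\boldsymbol{\eta}}(\mathcal{U})\right].
\end{equation*}
Because the integrand depends on $\boldsymbol{\phi}_{\boldsymbol{\eta}}$ only through its pointwise value at each $\mathcal{U}$, and because the neural network $\boldsymbol{\phi}_{\boldsymbol{\eta}}$ is taken to range over (a dense subset of) $\mathcal{L}_2$, the supremum over $\boldsymbol{\eta}$ can be pushed inside the expectation and reduces to a pointwise optimization of $\boldsymbol{a}^{\top}\boldsymbol{y} - \lambda\,\boldsymbol{y}^{\top}\boldsymbol{y}$ in $\boldsymbol{y} = \boldsymbol{\phi}_{\boldsymbol{\eta}}(\mathcal{U})$ with $\boldsymbol{a} = \nabla \log p - \nabla \log q_{\boldsymbol{\theta}}$.

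At this stage Lemma \ref{lemma3} supplies the unique pointwise maximizer $\boldsymbol{\phi}^{\star}(\mathcal{U}) = \tfrac{1}{2\lambda}(\nabla \log p - \nabla \log q_{\boldsymbol{\theta}})$ and the attained value $\tfrac{1}{4\lambda}\,\lVert\nabla \log p - \nabla \log q_{\boldsymbol{\theta}}\rVert_{2}^{2}$. Taking expectation with respect to $q_{\boldsymbol{\theta}}$ and recognizing the result as the Fisher divergence yields
\begin{equation*}
\mathcal{L}(\boldsymbol{\theta},\boldsymbol{\nu}) = \tfrac{1}{4\lambda}\,\mathbb{E}_{\mathcal{U}\sim q_{\boldsymbol{\theta}}}\!\left[\lVert \nabla \log q_{\boldsymbol{\theta}}(\mathcal{U}) - \nabla \log p(\mathcal{U}\mid\mathcal{D},\boldsymbol{\nu})\rVert_{2}^{2}\right] = \tfrac{1}{4\lambda} FD\bigl(q_{\boldsymbol{\theta}}(\mathcal{U}),\, p(\mathcal{U}\mid\mathcal{D},\boldsymbol{\nu})\bigr),
\end{equation*}
as claimed.

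The main obstacle is the interchange of the supremum over the parametric family $\{\boldsymbol{\phi}_{\boldsymbol{\eta}}\}$ and the expectation: strictly speaking, the pointwise optimizer $\tfrac{1}{2\lambda}(\nabla \log p - \nabla \log q_{\boldsymbol{\theta}})$ need not lie in the image of the neural network. I would address this either by appealing to a universal-approximation assumption on $\boldsymbol{\phi}_{\boldsymbol{\eta}}$ in $\mathcal{L}_2(q_{\boldsymbol{\theta}})$ (so that the supremum is achieved in the limit) or by assuming both $\nabla \log p$ and $\nabla \log q_{\boldsymbol{\theta}}$ lie in a sufficiently rich hypothesis space. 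The boundary conditions required by Lemma \ref{lemma2} (that $p\,\boldsymbol{\phi}$ and $q_{\boldsymbol{\theta}}\,\boldsymbol{\phi}$ vanish on $\partial\mathcal{X}$) are precisely what the $\mathcal{L}_2$ regularizer is designed to enforce, which I would flag explicitly when invoking the lemma.
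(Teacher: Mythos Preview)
Your proposal is correct and follows essentially the same route as the paper: unfold the RSD, apply Lemma \ref{lemma2} to convert the Langevin--Stein term into the score-difference inner product, then apply Lemma \ref{lemma3} pointwise to identify the optimal discriminator $\boldsymbol{\phi}^{\star}(\mathcal{U}) = \tfrac{1}{2\lambda}(\nabla \log p - \nabla \log q_{\boldsymbol{\theta}})$ and the attained value $\tfrac{1}{4\lambda}\,FD(q_{\boldsymbol{\theta}},p)$. Your explicit flagging of the sup--expectation interchange and of the boundary conditions needed for Lemma \ref{lemma2} is in fact more careful than the paper's own treatment, which simply applies Lemma \ref{lemma3} without commenting on whether the pointwise optimizer is realizable by the parametric family.
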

\begin{proof}
 Let our loss function be $\mathcal{L} (\theta, \nu)$, by Lemma \ref{lemma2},
\begin{equation}
\begin{aligned}
\begin{aligned}
\mathcal{L}(\theta, \nu) & =\sup _{\eta} \mathbb{E}_{q_{\vartheta}(\mathcal{U})}\left[\mathcal{A}_{p} \boldsymbol{\phi}_{\eta}(\mathcal{U})-\lambda \boldsymbol{\phi}_{\eta}(\mathcal{U})^{T} \boldsymbol{\phi}_{\eta}(\mathcal{U})\right] \\
& =\sup _{\eta} \mathbb{E}_{q_{\vartheta}(\mathcal{U})}\left[\left(\nabla_{\mathcal{U}} \log p(\mathcal{U} \mid \mathcal{D}, \nu)\right.\right. \\
& \left.\left.-\nabla_{\mathcal{U}} q_{\theta}(\mathcal{U})\right)^{T} \boldsymbol{\phi}_{\eta}(\mathcal{U})-\lambda \boldsymbol{\phi}_{\eta}(\mathcal{U})^{T} \boldsymbol{\phi}_{\eta}(\mathcal{U})\right]
\end{aligned}
\end{aligned}
\end{equation}

According to Lemma \ref{lemma3}, the above equation attains its maximum value when the function $\boldsymbol{\phi }_{\eta}\left( \mathcal{U} \right) =\nabla _\mathcal{U}\log p(\mathcal{U} |\mathcal{D} ,\nu )-\nabla _\mathcal{U}q_{\theta}\left( \mathcal{U} \right)$,

\begin{equation}
\begin{split}
    \mathcal{L} (\theta,\nu ) &=\frac{1}{4\lambda}\mathbb{E} _{q_{\theta}\left( \mathcal{U} \right)}[\left\| \nabla _\mathcal{U}\log p(\mathcal{U} |\mathcal{D} ,\nu )-\nabla _\mathcal{U}q_{\theta}\left( \mathcal{U} \right) \right\| _{2}^{2} ]\\&=\frac{1}{4\lambda}FD\left( q_{\theta}\left( \mathcal{U} \right) ,p\left( \mathcal{U} |\mathcal{D} ,\nu \right) \right)
\end{split}
\end{equation}
The optimal discriminator is:
\begin{equation}
\boldsymbol{\phi }_{\eta ^\star}(\mathcal{U})=\frac{1}{2\lambda}\left( \nabla _\mathcal{U}\log p(\mathcal{U} |\mathcal{D} ,\nu )-\nabla_\mathcal{U} q_{\theta}\left( \mathcal{U} \right) \right)
\end{equation}
\end{proof}

\begin{lemma}
\label{lemma4}
Suppose $p(\boldsymbol{x})$, $q(\boldsymbol{x})$ are probability densities  on $ \mathbb{R} ^d$ and $\boldsymbol{\phi }: \mathbb{R} ^d \rightarrow \mathbb{R}^d$ is a  differentiable function that satisfies  $\underset{\left\| \boldsymbol{x} \right\| \rightarrow \infty}{\lim}\,\,q\left( \boldsymbol{x} \right) \boldsymbol{\phi }\left( \boldsymbol{x} \right) =\boldsymbol{0}$, we have 
\begin{equation}
\left| \mathbb{E} _{\boldsymbol{x}\sim q}\left[ \mathcal{A} _p\boldsymbol{\phi }(\boldsymbol{x}) \right] \right|\leqslant \sqrt{\mathbb{E} _{\boldsymbol{x}\sim q}\left\| \boldsymbol{\phi }(\boldsymbol{x}) \right\| _{2}^{2}}\sqrt{FD\left( q,p \right)}
\end{equation}
\end{lemma}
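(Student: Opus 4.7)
The plan is to reduce this bound to a Cauchy--Schwarz inequality, using a version of Lemma \ref{lemma2} adapted to $\mathbb{R}^d$ with the stated decay condition in place of the compact-support hypothesis. The argument splits into three steps.

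First, I would establish Stein's identity for $q$ on $\mathbb{R}^d$ under the decay hypothesis $\lim_{\|\boldsymbol{x}\|\to\infty} q(\boldsymbol{x})\boldsymbol{\phi}(\boldsymbol{x}) = \boldsymbol{0}$. Mimicking the proof of Lemma \ref{lemma1}, one writes $q(\boldsymbol{x})\nabla_{\boldsymbol{x}}\log q(\boldsymbol{x})^T\boldsymbol{\phi}(\boldsymbol{x}) + q(\boldsymbol{x})\mathrm{Tr}(\nabla_{\boldsymbol{x}}\boldsymbol{\phi}(\boldsymbol{x})) = \mathrm{div}(q(\boldsymbol{x})\boldsymbol{\phi}(\boldsymbol{x}))$ and integrates over a ball of radius $R$, applying the divergence theorem. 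The surface term $\int_{\partial B_R} q(\boldsymbol{x})\boldsymbol{\phi}(\boldsymbol{x})^T\boldsymbol{n}(\boldsymbol{x})\,dS$ vanishes as $R\to\infty$ by the decay hypothesis, giving
\begin{equation}
\mathbb{E}_{\boldsymbol{x}\sim q}[\nabla_{\boldsymbol{x}}\log q(\boldsymbol{x})^T\boldsymbol{\phi}(\boldsymbol{x}) + \mathrm{Tr}(\nabla_{\boldsymbol{x}}\boldsymbol{\phi}(\boldsymbol{x}))] = 0.
\end{equation}

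Second, I would subtract this identity from the definition of $\mathbb{E}_{\boldsymbol{x}\sim q}[\mathcal{A}_p\boldsymbol{\phi}(\boldsymbol{x})]$, exactly as in Lemma \ref{lemma2}, to cancel the trace term and obtain the score-difference representation
\begin{equation}
\mathbb{E}_{\boldsymbol{x}\sim q}[\mathcal{A}_p\boldsymbol{\phi}(\boldsymbol{x})] = \mathbb{E}_{\boldsymbol{x}\sim q}\bigl[(\nabla_{\boldsymbol{x}}\log p(\boldsymbol{x}) - \nabla_{\boldsymbol{x}}\log q(\boldsymbol{x}))^T\boldsymbol{\phi}(\boldsymbol{x})\bigr].
\end{equation}

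Third, I would apply the Cauchy--Schwarz inequality in the Hilbert space $L^2(q;\mathbb{R}^d)$, viewing $\nabla_{\boldsymbol{x}}\log p - \nabla_{\boldsymbol{x}}\log q$ and $\boldsymbol{\phi}$ as the two vectors. This yields
\begin{equation}
\bigl|\mathbb{E}_{\boldsymbol{x}\sim q}[(\nabla_{\boldsymbol{x}}\log p - \nabla_{\boldsymbol{x}}\log q)^T\boldsymbol{\phi}]\bigr| \leq \sqrt{\mathbb{E}_{\boldsymbol{x}\sim q}\|\nabla_{\boldsymbol{x}}\log p - \nabla_{\boldsymbol{x}}\log q\|_2^2}\cdot\sqrt{\mathbb{E}_{\boldsymbol{x}\sim q}\|\boldsymbol{\phi}(\boldsymbol{x})\|_2^2},
\end{equation}
and the first factor is $\sqrt{FD(q,p)}$ by the definition of Fisher divergence, completing the proof.

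The only nontrivial point is the first step: verifying that the boundary-integral argument of Lemma \ref{lemma1} still goes through on unbounded $\mathbb{R}^d$. The compact-support version used an exact vanishing of the boundary integral, whereas here one needs the integrability of $\mathrm{div}(q\boldsymbol{\phi})$ together with the pointwise limit hypothesis to push the ball radius to infinity. Once this technical detail is handled (it is routine, relying only on the stated decay), the rest is a direct Cauchy--Schwarz computation.
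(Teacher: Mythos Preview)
Your proposal is correct and follows essentially the same route as the paper: invoke the score-difference representation of Lemma~\ref{lemma2} (you re-derive it under the decay hypothesis rather than citing it), then bound the resulting inner product by Cauchy--Schwarz. The only cosmetic difference is that the paper splits the final bound into a pointwise Cauchy--Schwarz followed by H\"older's inequality on the expectation, whereas you apply Cauchy--Schwarz once in $L^2(q;\mathbb{R}^d)$; these are equivalent, and your single-step version is arguably cleaner.
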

\begin{proof}
By Lemma \ref{lemma2}, we have:
\begin{small}
\begin{equation}
\left| \mathbb{E} _{\boldsymbol{x}\sim q}\left[ \mathcal{A} _p\boldsymbol{\phi }(\boldsymbol{x}) \right] \right|=| \mathbb{E} _{\boldsymbol{x}\sim q}[ \left( \mathbf{\nabla }_{\boldsymbol{x}}\log p\left( x \right) -\mathbf{\nabla }_{\boldsymbol{x}}\log q\left( x \right) \right) ^T\boldsymbol{\phi }(\boldsymbol{x}) ]|.
\end{equation}
\end{small}

From Cauchy-Schwarz inequality and Hölder's inequality:
\begin{align}
\begin{aligned}
 &|\mathbb{E}_{\boldsymbol{x}\sim q}[ \left( \mathbf{\nabla}_{\boldsymbol{x}}\log p\left( x \right) -\mathbf{\nabla }_{\boldsymbol{x}}\log q\left( x \right) \right)^T\boldsymbol{\phi }(\boldsymbol{x}) ]|\\& \leqslant \mathbb{E}_{\boldsymbol{x}\sim q}\left[ \left\| \mathbf{\nabla }_{\boldsymbol{x}}\log p\left( x \right) -\mathbf{\nabla }_{\boldsymbol{x}}\log q\left( x \right) \right\| _2\left\| \boldsymbol{\phi }(\boldsymbol{x}) \right\| _2 \right]\\&\leqslant \sqrt{\mathbb{E} _{\boldsymbol{x}\sim q}\left\| \mathbf{\nabla }_{\boldsymbol{x}}\log p\left( x \right) -\mathbf{\nabla }_{\boldsymbol{x}}\log q\left( x \right) \right\| _{2}^{2}}\sqrt{\mathbb{E} _{\boldsymbol{x}\sim q}\left\| \boldsymbol{\phi }(\boldsymbol{x}) \right\| _{2}^{2}} \\&= \sqrt{\mathbb{E} _{\boldsymbol{x}\sim q}\left\| \boldsymbol{\phi }(\boldsymbol{x}) \right\| _{2}^{2}}\sqrt{FD\left( q,p \right)}
\end{aligned}   
\end{align}
\end{proof}

\begin{definition}
\label{mydef3}
Suppose $p(\boldsymbol{x})$ is probability densities on $\mathbb{R}^d$ and $\psi: \mathbb{R}^d\rightarrow \mathbb{R}$ is a function, we define $\boldsymbol{\phi }_{\psi}^{p}\left( \boldsymbol{x} \right)$ as a solution of the Stein equation $\mathcal{A}_p\boldsymbol{\phi }\left( \boldsymbol{x} \right) =\psi(\boldsymbol{x} )-\mathbb{E} _{\boldsymbol{x}\sim p}[\psi(\boldsymbol{x})]$.
\end{definition}
\begin{lemma}
\label{lemma5}
Suppose $\psi: \mathbb{R} ^d\rightarrow \mathbb{R}$ is a bounded function, there exists a bounded solution of the Stein equation.
\end{lemma}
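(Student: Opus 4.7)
The plan is to exploit the fact that $\mathcal{A}_p$, when restricted to gradient vector fields, coincides with the infinitesimal generator of the overdamped Langevin diffusion associated with $p$. Specifically, for any scalar $u\colon\mathbb{R}^d\to\mathbb{R}$ one has $\mathcal{A}_p(\nabla u) = \nabla\log p\cdot\nabla u + \Delta u =: Lu$, where $L$ is the generator of $dX_t = \nabla\log p(X_t)\,dt + \sqrt{2}\,dW_t$, an ergodic diffusion with stationary density $p$ under mild regularity on $\log p$. So the Stein equation $\mathcal{A}_p\boldsymbol{\phi} = \psi - \bar\psi$, with $\bar\psi \triangleq \mathbb{E}_{\boldsymbol{x}\sim p}[\psi(\boldsymbol{x})]$, is reduced to the scalar Poisson problem $Lu = \psi - \bar\psi$; the desired $\boldsymbol{\phi}_\psi^p$ is then recovered as $\nabla u$.

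To solve this Poisson problem I would use the Markov semigroup representation. Let $P_t\psi(\boldsymbol{x}) \triangleq \mathbb{E}[\psi(X_t)\mid X_0 = \boldsymbol{x}]$. Assuming $p$ satisfies a Poincar\'e (or log-Sobolev) inequality, which is standard in the Stein's-method literature, exponential ergodicity gives $\|P_t\psi - \bar\psi\|_\infty \leq C e^{-\lambda t}\|\psi\|_\infty$, so
\begin{equation*}
u(\boldsymbol{x}) \;\triangleq\; -\int_0^\infty \bigl(P_t\psi(\boldsymbol{x}) - \bar\psi\bigr)\,dt
\end{equation*}
is well-defined with $\|u\|_\infty \leq C\lambda^{-1}\|\psi\|_\infty$. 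Differentiating under the integral and using $\tfrac{d}{dt}P_t\psi = LP_t\psi$ together with Fubini verifies $Lu = \psi - \bar\psi$, so $\boldsymbol{\phi}_\psi^p \triangleq \nabla u$ indeed satisfies the Stein equation of Definition~\ref{mydef3}.

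The principal obstacle is upgrading boundedness of $u$ to the boundedness of $\nabla u$ actually required by the lemma. The key technical ingredient is a gradient estimate on the semigroup, $\|\nabla P_t\psi\|_\infty \leq \kappa(t)\|\psi\|_\infty$, with $\kappa$ integrable on $(0,\infty)$. A Bismut--Elworthy--Li type formula supplies $\kappa(t) \lesssim t^{-1/2}$ for small $t$ via Malliavin integration by parts, and the spectral-gap assumption on $p$ promotes this to exponential decay for large $t$; the two regimes combine to give $\int_0^\infty\|\nabla P_t\psi\|_\infty\,dt < \infty$, hence $\|\nabla u\|_\infty \lesssim \|\psi\|_\infty$. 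This is the step where the regularity of $\log p$ (e.g., semi-convexity or bounded second derivatives) enters essentially. Downstream, this uniform bound is exactly what is needed to pair $\boldsymbol{\phi}_\psi^p$ with Lemma~\ref{lemma4} and convert Fisher-divergence control into test-function bounds of the form $|\mathbb{E}_q\psi - \mathbb{E}_p\psi| \leq C\|\psi\|_\infty \sqrt{FD(q,p)}$, which is presumably the route the authors take toward \textbf{Theorem}~\ref{thm:3}.
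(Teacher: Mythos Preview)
Your route through the Langevin semigroup and Bismut--Elworthy--Li gradient estimates is sound under the additional hypotheses you name (a Poincar\'e inequality for $p$, semi-convexity or bounded Hessian of $-\log p$), and it delivers the quantitative control $\|\boldsymbol{\phi}_\psi^p\|_\infty \lesssim \|\psi\|_\infty$ that plugs straight into Lemma~\ref{lemma4} and on to Theorem~\ref{thm:3}, exactly as you anticipate. The paper, however, takes a completely different and far more elementary route: it sets every coordinate of $\boldsymbol{\phi}$ except the first to zero and defines
\[
\phi_1(\boldsymbol{x}) \;=\; \frac{1}{p(\boldsymbol{x})}\int_{-\infty}^{x_1} p(t,x_2,\dots,x_d)\,h(t,x_2,\dots,x_d)\,dt,\qquad h=\psi-\mathbb{E}_p[\psi],
\]
which one checks solves $\mathcal{A}_p\boldsymbol{\phi}=h$ by direct differentiation in $x_1$. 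No semigroup, no diffusion, no spectral gap. The trade-off is that the paper simply asserts this $\phi_1$ is bounded without argument; in dimension $d\geq 2$ the conditional integral $\int_{-\infty}^{x_1} p(t,x_{2:d})h(t,x_{2:d})\,dt$ need not vanish as $x_1\to\infty$ (the mean-zero property of $h$ is global, not conditional), so boundedness of $\phi_1$ actually requires tail or Mills-ratio assumptions on $p$ that are never stated. Your approach is heavier machinery but yields an honest bound with explicit constants; the paper's approach is a one-line formula whose boundedness is plausible in the DGP setting but not established in the proof as written.
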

\begin{proof}
Let $h(\boldsymbol{x})=\psi(\boldsymbol{x})-\mathbb{E}_{\boldsymbol{x}\sim p}[ \psi(\boldsymbol{x})]$, $h(\boldsymbol{x})$ is obviously bounded, then
\begin{equation}
\begin{aligned}
    &\phi _1\left( \boldsymbol{x} \right) =\frac{1}{p\left( \boldsymbol{x} \right)}\int_{-\infty}^{x_1}{p\left( t,x_2,...,x_d \right) h\left( t,x_2,...,x_d \right)}dt,\\& \phi _2\left( \boldsymbol{x} \right) =\cdots =\phi _d\left( \boldsymbol{x} \right) =0
\end{aligned}
\end{equation}
is a bounded solution. 
\end{proof}
\begin{lemma}
\label{lemma6}
Suppose $p(\boldsymbol{x})$, $q(\boldsymbol{x})$ are probability densities on $\mathbb{R} ^d$ and $\boldsymbol{\psi}: \mathbb{R}^d\rightarrow \mathbb{R}^n$ is a bounded function. $\forall i\in ( 1,\dots, n)$, let $\boldsymbol{\phi }_{\psi_i}^{p}\left( \boldsymbol{x} \right)$ be a solution of the Stein equation, then we have 
\begin{equation}
\left\| \mathbb{E} _{\boldsymbol{x}\sim q}[\boldsymbol{\psi }\left( \boldsymbol{x} \right) ]-\mathbb{E} _{\boldsymbol{x}\sim p}[\boldsymbol{\psi }\left( \boldsymbol{x} \right) ] \right\| _2\leqslant c_{\boldsymbol{\psi }}^{p,q}\sqrt{FD\left( q,p \right)}
\end{equation}
where $c_{\boldsymbol{\psi }}^{p,q}\triangleq\sqrt{\sum\nolimits_{i=1}^n{\mathbb{E} _{\boldsymbol{x}\sim q}\left\| \boldsymbol{\phi }_{\psi _i}^{p}\left( \boldsymbol{x} \right) \right\| _{2}^{2}}}$ is bounded.
\end{lemma}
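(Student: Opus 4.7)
The plan is to derive the vector inequality by reducing it to a componentwise application of Lemma \ref{lemma4}. For each coordinate $i \in \{1,\dots,n\}$, the Stein equation of Definition \ref{mydef3} guarantees that $\boldsymbol{\phi}_{\psi_i}^{p}$ satisfies
\begin{equation*}
\mathcal{A}_p \boldsymbol{\phi}_{\psi_i}^{p}(\boldsymbol{x}) = \psi_i(\boldsymbol{x}) - \mathbb{E}_{\boldsymbol{x}\sim p}[\psi_i(\boldsymbol{x})].
\end{equation*}
Taking expectations with respect to $q$ on both sides converts the scalar moment difference $\mathbb{E}_q[\psi_i] - \mathbb{E}_p[\psi_i]$ into the operator expectation $\mathbb{E}_{\boldsymbol{x}\sim q}[\mathcal{A}_p \boldsymbol{\phi}_{\psi_i}^{p}(\boldsymbol{x})]$, which is exactly the object bounded by Lemma \ref{lemma4}.

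Next, I would expand the left-hand side using the Euclidean norm,
\begin{equation*}
\bigl\| \mathbb{E}_{\boldsymbol{x}\sim q}[\boldsymbol{\psi}(\boldsymbol{x})] - \mathbb{E}_{\boldsymbol{x}\sim p}[\boldsymbol{\psi}(\boldsymbol{x})] \bigr\|_2^2 = \sum_{i=1}^n \bigl| \mathbb{E}_{\boldsymbol{x}\sim q}[\mathcal{A}_p \boldsymbol{\phi}_{\psi_i}^{p}(\boldsymbol{x})] \bigr|^2,
\end{equation*}
and then apply Lemma \ref{lemma4} to each summand, yielding the bound $\mathbb{E}_{\boldsymbol{x}\sim q}\|\boldsymbol{\phi}_{\psi_i}^{p}(\boldsymbol{x})\|_2^2 \cdot FD(q,p)$ for the $i$-th term. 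Factoring $FD(q,p)$ out of the sum and taking a square root then produces the constant $c_{\boldsymbol{\psi}}^{p,q}$ exactly as defined, completing the inequality.

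The remaining subtlety, and the main obstacle I expect, is verifying the qualifier that $c_{\boldsymbol{\psi}}^{p,q}$ is bounded and that Lemma \ref{lemma4} is actually applicable to $\boldsymbol{\phi}_{\psi_i}^{p}$. For this I would invoke Lemma \ref{lemma5}, which under the assumption that each $\psi_i$ is bounded explicitly constructs a bounded solution of the Stein equation. With a bounded $\boldsymbol{\phi}_{\psi_i}^{p}$ in hand, the vanishing-at-infinity hypothesis $\lim_{\|\boldsymbol{x}\|\to\infty} q(\boldsymbol{x})\boldsymbol{\phi}_{\psi_i}^{p}(\boldsymbol{x}) = \boldsymbol{0}$ of Lemma \ref{lemma4} follows from the integrability of $q$, and $\mathbb{E}_{\boldsymbol{x}\sim q}\|\boldsymbol{\phi}_{\psi_i}^{p}(\boldsymbol{x})\|_2^2$ is finite, so that the finite sum defining $c_{\boldsymbol{\psi}}^{p,q}$ is finite as claimed. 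The only delicate point is that the Stein equation may have multiple solutions; selecting the bounded one from Lemma \ref{lemma5} resolves this ambiguity and makes both the inequality and the finiteness statement rigorous.
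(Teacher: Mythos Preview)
Your proposal is correct and follows essentially the same route as the paper: rewrite each coordinate difference via the Stein equation, apply Lemma~\ref{lemma4} componentwise, sum and take a square root to recover $c_{\boldsymbol{\psi}}^{p,q}$, then invoke Lemma~\ref{lemma5} for boundedness. If anything, you are slightly more careful than the paper in checking that the bounded solution from Lemma~\ref{lemma5} satisfies the vanishing-at-infinity hypothesis of Lemma~\ref{lemma4} and in flagging the non-uniqueness of Stein solutions, points the paper leaves implicit.
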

\begin{proof}
By Lemma \ref{lemma4}, we have 
\begin{equation}
\begin{aligned}
    &\left| \mathbb{E} _{\boldsymbol{x}\sim q}\left[ \psi_i(\boldsymbol{x}) \right] -\mathbb{E} _{\boldsymbol{x}\sim p}\left[ \psi_i(\boldsymbol{x}) \right] \right|\\&=\left| \mathbb{E} _{\boldsymbol{x}\sim q}\left[ \psi_i(\boldsymbol{x})-\mathbb{E} _{\boldsymbol{x}\sim p}\left[\psi_i(\boldsymbol{x}) \right] \right] \right|\\&=\left| \mathbb{E} _{\boldsymbol{x}\sim q}\left[ \mathcal{A} _p\boldsymbol{\phi }_{\psi_i}^{p}\left( \boldsymbol{x} \right) \right] \right| \leqslant \sqrt{\mathbb{E} _{\boldsymbol{x}\sim q}\left\| \boldsymbol{\phi }_{\psi_i}^{p}\left( \boldsymbol{x} \right) \right\| _{2}^{2}}\sqrt{FD\left( q,p \right)}.
\end{aligned}
\end{equation}

As a result, 
\begin{equation}
\begin{aligned}
    &\left\| \mathbb{E} _{\boldsymbol{x}\sim q}\left[ \boldsymbol{\psi}(\boldsymbol{x}) \right] -\mathbb{E} _{\boldsymbol{x}\sim p}\left[ \boldsymbol{\psi}(\boldsymbol{x}) \right] \right\| _2 \\&= \sqrt{\sum\nolimits_{i=1}^n{\left| \mathbb{E} _{\boldsymbol{x}\sim q}\left[ \psi_i(\boldsymbol{x}) \right] -\mathbb{E} _{\boldsymbol{x}\sim p}\left[ \psi_i(\boldsymbol{x}) \right] \right|^2}}\\&\leqslant \sqrt{\sum\nolimits_{i=1}^n{\mathbb{E} _{\boldsymbol{x}\sim q}\left\| \boldsymbol{\phi }_{x_i}^{p}\left( \boldsymbol{x} \right) \right\|}_{2}^{2}FD\left( q,p \right)}= c_{\boldsymbol{\psi}}^{p,q}\sqrt{FD\left( q,p \right)},
\end{aligned}
\end{equation} 
where 
\begin{equation}
c_{\boldsymbol{\psi}}^{p,q}\triangleq \sqrt{\sum\nolimits_{i=1}^n{\mathbb{E} _{\boldsymbol{x}\sim q}\left\| \boldsymbol{\phi }_{\psi_i}^{p}\left( \boldsymbol{x} \right) \right\| _{2}^{2}}}\leqslant \sqrt{\sum\nolimits_{i=1}^n{\left\| \boldsymbol{\phi }_{\psi_i}^{p}\left( \boldsymbol{x} \right) \right\| _{\infty}^{2}}}
\end{equation} 
is bounded by Lemma \ref{lemma5}.
\end{proof}

\begin{theorem}
The bias of the estimate of the prediction $\widehat{\mathbf{F}}_{L}^\star$ in Equation (21) from the DGPs exact evaluation can be
%\ref{pre}
bounded by the square root of the Fisher divergence between $q_{\theta}( \mathcal{U})$ and $p\left( \mathcal{U} |\mathcal{D} ,\nu \right)$  up to multiplying a constant.
\end{theorem}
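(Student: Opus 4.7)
The plan is to recast the prediction bias as a difference between expectations of a bounded statistic under $q_{\boldsymbol\theta}(\mathcal{U})$ versus the true posterior $p(\mathcal{U}\mid\mathcal{D},\boldsymbol\nu)$, and then apply the Stein-to-Fisher comparison machinery already developed in Lemmas \ref{lemma4}--\ref{lemma6}. Concretely, I would first integrate out the auxiliary standard-normal noise $\boldsymbol\epsilon^\ell$ that appears in the reparameterization (\ref{pre}), defining a deterministic function $\boldsymbol\psi(\mathcal{U}) \triangleq \mathbb{E}_{\boldsymbol\epsilon^{1:L}}[\widehat{\mathbf{F}}_L^{\star}\mid \mathcal{U}]$. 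Under the predictive law, both the NOVI estimator and the exact DGP evaluator share the same transition kernels $p(\mathbf{F}_\ell^\star\mid\mathbf{F}_{\ell-1}^\star,\mathbf{U}_\ell)$; the only source of discrepancy is the distribution placed on $\mathcal{U}$. Hence the bias of interest can be written as
\begin{equation}
\bigl\| \mathbb{E}_{\mathcal{U}\sim q_{\boldsymbol\theta}}[\boldsymbol\psi(\mathcal{U})] - \mathbb{E}_{\mathcal{U}\sim p(\cdot\mid\mathcal{D},\boldsymbol\nu)}[\boldsymbol\psi(\mathcal{U})] \bigr\|_2.
\end{equation}

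Next, I would verify that $\boldsymbol\psi$ is bounded on the feasible region. By the compactness assumptions $\mathcal{U}\in\Omega$ and $\boldsymbol\nu\in\Upsilon$ and the continuity of the kernel functions, every matrix appearing in (\ref{pre}) is continuous in $(\mathcal{U},\boldsymbol\nu)$; moreover the clipping used in Section \ref{training} guarantees $\mathbf{K}_{\mathbf{Z}_\ell\mathbf{Z}_\ell}^{-1}$ has a uniformly bounded spectrum, so that $\boldsymbol\psi$ is bounded as the composition of bounded continuous operations on a compact set. An inductive argument over $\ell = 1,\ldots,L$ shows that each conditional mean vector remains in a compact range, hence $\boldsymbol\psi\in L^\infty$.

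With $\boldsymbol\psi$ bounded, Lemma \ref{lemma5} guarantees the existence of a bounded solution $\boldsymbol\phi_{\psi_i}^{p}$ to the Stein equation (Definition \ref{mydef3}) for every coordinate $\psi_i$ of $\boldsymbol\psi$. Invoking Lemma \ref{lemma6} with $p = p(\mathcal{U}\mid\mathcal{D},\boldsymbol\nu)$ and $q = q_{\boldsymbol\theta}(\mathcal{U})$ yields directly
\begin{equation}
\bigl\|\mathbb{E}_{q_{\boldsymbol\theta}}[\boldsymbol\psi(\mathcal{U})] - \mathbb{E}_{p(\cdot\mid\mathcal{D},\boldsymbol\nu)}[\boldsymbol\psi(\mathcal{U})]\bigr\|_2 \le c_{\boldsymbol\psi}^{p,q_{\boldsymbol\theta}}\sqrt{FD\!\left(q_{\boldsymbol\theta}(\mathcal{U}),\,p(\mathcal{U}\mid\mathcal{D},\boldsymbol\nu)\right)},
\end{equation}
where $c_{\boldsymbol\psi}^{p,q_{\boldsymbol\theta}}$ is finite by Lemma \ref{lemma5} and the boundedness of $\boldsymbol\psi$. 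Combined with Theorem \ref{thm:2}, which identifies the training loss with $\tfrac{1}{4\lambda}FD(q_{\boldsymbol\theta},p)$, this delivers the claimed bound of the form $\text{bias}\le C\sqrt{FD(q_{\boldsymbol\theta},p)}$.

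The main obstacle I expect is the verification that the propagated function $\boldsymbol\psi(\mathcal{U})$, and in particular the Stein-equation solutions $\boldsymbol\phi_{\psi_i}^{p}$, stay uniformly bounded in a way that does not cause the constant $c_{\boldsymbol\psi}^{p,q_{\boldsymbol\theta}}$ to blow up with depth $L$ or with the conditioning of $\mathbf{K}_{\mathbf{Z}_\ell\mathbf{Z}_\ell}$. This is precisely where the compactness hypothesis on $\Omega,\Upsilon$ and the clipping device introduced in Section \ref{training} do the essential work: they prevent the nested kernel inversions from producing unbounded amplification along the layer recursion, so that a single finite constant suffices. The remainder of the argument is a direct appeal to previously established lemmas.
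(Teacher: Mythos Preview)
Your proposal is correct and follows essentially the same route as the paper: define the predictive mean as a function $\boldsymbol\psi(\mathcal{U})$ obtained by integrating out the intermediate layers (the paper writes this as the integral $\int \mathbf{F}_L^\star \prod_{\ell,d} p(\mathbf{F}_{\ell,d}^\star\mid\mathbf{F}_{\ell-1}^\star,\mathbf{U}_{\ell,d})\,d\mathbf{F}_{\ell-1}^\star d\mathbf{F}_L^\star$, which is exactly your $\mathbb{E}_{\boldsymbol\epsilon^{1:L}}[\widehat{\mathbf{F}}_L^\star\mid\mathcal{U}]$), observe that compactness of $\Omega,\Upsilon$ makes $\boldsymbol\psi$ bounded, and then apply Lemma~\ref{lemma6}. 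Your extra care about the inductive boundedness across layers and the role of clipping is more explicit than the paper's one-line appeal to compactness, but the argument is the same.
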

\begin{proof}
From Law of Large Numbers, we have 
\begin{equation}
\mathbf{\hat{F}}_{L}^\star=\frac{1}{S}\sum\nolimits_{s=1}^S{\mathbf{\hat{F}}_{L}^{\star\left( s \right)}}\approx \mathbb{E} _{q\left( \mathbf{F}_{L}^\star \right)}[\mathbf{F}_{L}^\star]
\end{equation},
where $S$ denotes the number of samples involved in the estimation and $q\left( \mathbf{F}_{L}^\star \right)$ is represented as:
\begin{small}
\begin{equation}
q(\mathbf{F}_{L}^{\star})=\int{\prod\nolimits_{\ell =1}^L{\prod\nolimits_{d=1}^{D_{\ell}}{p(\mathbf{F}_{\ell ,d}^{\star}|\mathbf{F}_{\ell -1}^{\star},\mathbf{U}_{\ell,d})q_{\theta ^{\star}}\left( \mathbf{U}_{\ell} \right) d\mathbf{F}_{\ell -1}^{\star}d\mathbf{U}_{\ell ,d}}}}.
\end{equation}
\end{small}
The DGPs exact evaluation can be written as:
\begin{equation}
\widetilde{\mathbf{F}}_{L}^\star=\mathbb{E} _{p( \mathbf{F}_{L}^\star|\mathcal{D} ,\nu)}[\mathbf{F}_{L}^{*}].
\end{equation}
Similarly:
\begin{small}
\begin{equation}
\begin{aligned}
    &p(\mathbf{F}_{L}^{\star}|\mathcal{D} ,\nu )\\&=\int{\prod\nolimits_{\ell =1}^L{\prod\nolimits_{d=1}^{D\ell}{p(\mathbf{F}_{\ell}^{\star}\mid \mathbf{F}_{\ell -1}^{\star},\mathbf{U}_{\ell ,d})p\left( \mathbf{U}_{\ell}|\mathcal{D} ,\nu \right) d\mathbf{F}_{\ell-1}^{\star}d\mathbf{U}_{\ell ,d}}}}.
\end{aligned}
\end{equation}
\end{small}

By Lemma \ref{lemma6}:
\begin{small}
\begin{equation}
\begin{split}
&\left\| \widehat{\mathbf{F}}_{L}^{\star}-\widetilde{\mathbf{F}}_{L}^{\star} \right\| _2\\&=\parallel \mathbb{E} _{q(\mathbf{F}_{L}^{\star})}[\mathbf{F}_{L}^{\star}]-\mathbb{E} _{p(\mathbf{F}_{L}^{\star}|\mathcal{D} ,\nu )}[\mathbf{F}_{L}^{\star}]\parallel _2\\
&=\parallel \mathbb{E} _{q(\mathcal{U} )}[\int{\mathbf{F}_{L}^{\star}\prod\nolimits_{\ell =1}^L{\prod\nolimits_{d=1}^{D_{\ell}}{p(\mathbf{F}_{\ell ,d}^{\star}|\mathbf{F}_{\ell -1}^{\star},\mathbf{U}_{\ell ,d})d\mathbf{F}_{\ell -1}^{\star}d\mathbf{F}_{L}^{\star}}}}]\\&-\mathbb{E} _{p(\mathcal{U} |\mathcal{D} ,\nu )}[\int{\mathbf{F}_{L}^{\star}\prod\nolimits_{\ell =1}^L{\prod\nolimits_{d=1}^{D_{\ell}}{p(\mathbf{F}_{\ell ,d}^{\star}|\mathbf{F}_{\ell -1}^{\star},\mathbf{U}_{\ell ,d})d\mathbf{F}_{\ell -1}^{\star}d\mathbf{F}_{L}^{\star}}}}]\parallel _2
\\
&=\left\| \mathbb{E} _{q(\mathcal{U} )}\left[ \boldsymbol{\psi }\left( \mathcal{U} \right) \right] -\mathbb{E} _{p(\mathcal{U} |\mathcal{D} ,\nu )}\left[ \boldsymbol{\psi }\left( \mathcal{U} \right) \right] \right\| _2\\&\leqslant c_{\boldsymbol{\psi }}^{p,q}\sqrt{FD\left( q\left( \mathcal{U} \right) ,p\left( \mathcal{U} |\mathcal{D} ,\nu \right) \right)},
\end{split}
\end{equation}
\end{small}

Since $\varOmega$ and $\varUpsilon$ are both compact, $\boldsymbol{\psi }\left( \mathcal{U} \right) =\int{\mathbf{F}_{L}^{\star}\prod\nolimits_{\ell=1}^L{\prod\nolimits_{d=1}^{D_{\ell}}{p(\mathbf{F}_{\ell, d}^{\star} |\mathbf{F}_{\ell-1}^{\star},\mathbf{U}_{\ell ,d})d\mathbf{F}_{\ell-1}^{\star}d\mathbf{F}_{L}^{\star}}}}$ is obviously bounded.

\end{proof}

\section{Additional Results}
\label{additional-results}

\begin{figure*}
    \centering
    \includegraphics[width=0.99\linewidth]{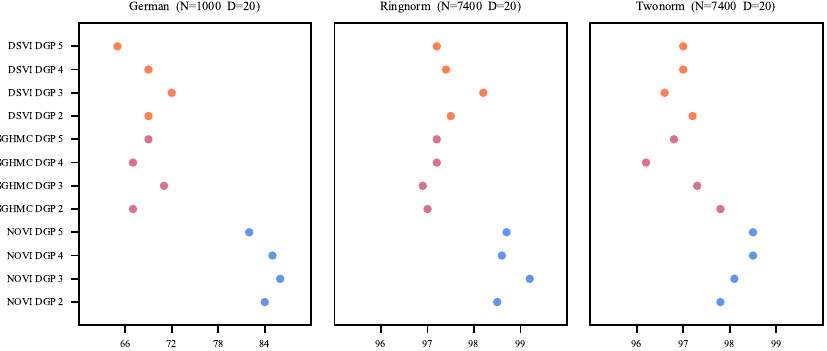}
    \caption{Classification mean test accuracy ($\%$) by our NOVI method (blue), SGHMC (pink) and DSVI (orange) for DGPs on three UCI benchmark datasets. Higher is better.}
    \label{fig:uci-classification}
\end{figure*}
\subsection{UCI Classification Benchmark}
%\label{uci-classification}
We performed classification task on three UCI benchmark datasets, with size ranging from $1000$ to $7400$. Results are reported in Figure \ref{fig:uci-classification} compared through test accuracy as performance metric. It can be observed that NOVI achieves the best results in different sizes of datasets and shows competitive performance within different layers.

\begin{figure*}[t]
    \centering
    \includegraphics[width=0.99\linewidth]{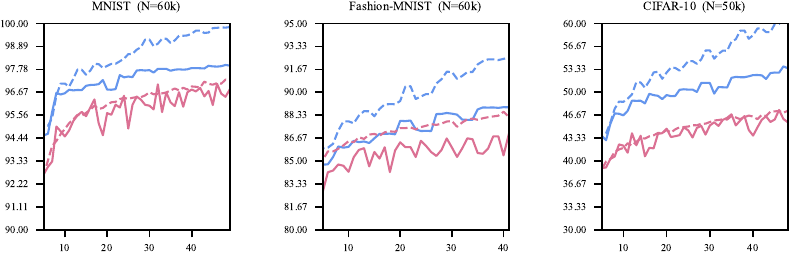}
    \caption{The mean accuracy comparison of NOVI (blue) with Monta Carlo log-likelihood maximization method (pink) using $3$-layer DGP model on three image classification datasets. The results of the training and test sets are shown by dashed and solid lines, respectively.}
    \label{fig:ablation-classification}
\end{figure*}

\begin{table*}[t]
    %\centering
    \resizebox{\textwidth}{!}{
    \begin{tabular}{|c|c|c|c|c|c|c|c|c|c|c|c|c|c|c|c|c|}
    \hline
    \multicolumn{1}{|c|}{\multirow{1}{*}{\textbf{Data}}} &\multicolumn{1}{c|}{\multirow{1}{*}{\textbf{DSVI 2}}} &\multicolumn{1}{c|}{\multirow{1}{*}{\textbf{DSVI 3}}}
    &\multicolumn{1}{c|}{\multirow{1}{*}{\textbf{DSVI 4}}}
    &\multicolumn{1}{c|}{\multirow{1}{*}{\textbf{DSVI 5}}}
    &\multicolumn{1}{c|}{\multirow{1}{*}{\textbf{SGHMC 2}}}
    &\multicolumn{1}{c|}{\multirow{1}{*}{\textbf{SGHMC 3}}}
    &\multicolumn{1}{c|}{\multirow{1}{*}{\textbf{SGHMC 4}}}
    &\multicolumn{1}{c|}{\multirow{1}{*}{\textbf{SGHMC 5}}}
    &\multicolumn{1}{c|}{\multirow{1}{*}{\textbf{IPVI 2}}}
    &\multicolumn{1}{c|}{\multirow{1}{*}{\textbf{IPVI 3}}}
    &\multicolumn{1}{c|}{\multirow{1}{*}{\textbf{IPVI 4}}}
    &\multicolumn{1}{c|}{\multirow{1}{*}{\textbf{IPVI 5}}}
    &\multicolumn{1}{c|}{\multirow{1}{*}{\textbf{NOVI 2}}}
    &\multicolumn{1}{c|}{\multirow{1}{*}{\textbf{NOVI 3}}}
    &\multicolumn{1}{c|}{\multirow{1}{*}{\textbf{NOVI 4}}}
    &\multicolumn{1}{c|}{\multirow{1}{*}{\textbf{NOVI 5}}}\\
    \hline
    
    % Boston
    \multicolumn{1}{|c|}{\multirow{1}{*}{\textbf{Boston}}}
    % DSVI
    &\multicolumn{1}{c|}{\multirow{1}{*}{0.32 (0.02)}} &\multicolumn{1}{c|}{\multirow{1}{*}{0.32 (0.02)}} 
    &\multicolumn{1}{c|}{\multirow{1}{*}{0.32 (0.02)}} 
    &\multicolumn{1}{c|}{\multirow{1}{*}{0.32 (0.02)}}
    % SGHMC
    &\multicolumn{1}{c|}{\multirow{1}{*}{0.37 (0.07)}}
    &\multicolumn{1}{c|}{\multirow{1}{*}{0.38 (0.08)}}
    &\multicolumn{1}{c|}{\multirow{1}{*}{0.35 (0.09)}}
    &\multicolumn{1}{c|}{\multirow{1}{*}{0.39 (0.07)}}
     % IPVI
    &\multicolumn{1}{c|}{\multirow{1}{*}{0.35 (0.06)}}
    &\multicolumn{1}{c|}{\multirow{1}{*}{0.34 (0.05)}}
    &\multicolumn{1}{c|}{\multirow{1}{*}{0.33 (0.06)}}
    &\multicolumn{1}{c|}{\multirow{1}{*}{0.32 (0.04)}}
    % NOVI
    &\multicolumn{1}{c|}{\multirow{1}{*}{\textbf{0.20 (0.01)}}}
    &\multicolumn{1}{c|}{\multirow{1}{*}{0.34 (0.02)}}
    &\multicolumn{1}{c|}{\multirow{1}{*}{0.40 (0.03)}}
    &\multicolumn{1}{c|}{\multirow{1}{*}{0.38 (0.02)}}\\
    \hline
    
    % Energy
    \multicolumn{1}{|c|}{\multirow{1}{*}{\textbf{Energy}}}
    % DSVI
    &\multicolumn{1}{c|}{\multirow{1}{*}{0.05 (0.00)}} &\multicolumn{1}{c|}{\multirow{1}{*}{0.05 (0.00)}} 
    &\multicolumn{1}{c|}{\multirow{1}{*}{0.05 (0.00)}} 
    &\multicolumn{1}{c|}{\multirow{1}{*}{0.05 (0.00)}}
    % SGHMC
    &\multicolumn{1}{c|}{\multirow{1}{*}{0.13 (0.01)}}
    &\multicolumn{1}{c|}{\multirow{1}{*}{0.13 (0.01)}}
    &\multicolumn{1}{c|}{\multirow{1}{*}{0.09 (0.04)}}
    &\multicolumn{1}{c|}{\multirow{1}{*}{0.13 (0.01)}}
    % IPVI
    &\multicolumn{1}{c|}{\multirow{1}{*}{0.13 (0.01)}}
    &\multicolumn{1}{c|}{\multirow{1}{*}{0.13 (0.01)}}
    &\multicolumn{1}{c|}{\multirow{1}{*}{0.12 (0.03)}}
    &\multicolumn{1}{c|}{\multirow{1}{*}{0.11 (0.04)}}
    % NOVI
    &\multicolumn{1}{c|}{\multirow{1}{*}{\textbf{0.04 (0.00)}}}
    &\multicolumn{1}{c|}{\multirow{1}{*}{0.05 (0.00)}}
    &\multicolumn{1}{c|}{\multirow{1}{*}{0.06 (0.00)}}
    &\multicolumn{1}{c|}{\multirow{1}{*}{0.06 (0.00)}}\\
    \hline
    
    % Power
    \multicolumn{1}{|c|}{\multirow{1}{*}{\textbf{Power}}}
    % DSVI
    &\multicolumn{1}{c|}{\multirow{1}{*}{0.22 (0.00)}} &\multicolumn{1}{c|}{\multirow{1}{*}{0.22 (0.00)}} 
    &\multicolumn{1}{c|}{\multirow{1}{*}{0.22 (0.00)}} 
    &\multicolumn{1}{c|}{\multirow{1}{*}{0.22 (0.00)}}
    % SGHMC
    &\multicolumn{1}{c|}{\multirow{1}{*}{0.23 (0.00)}}
    &\multicolumn{1}{c|}{\multirow{1}{*}{0.22 (0.00)}}
    &\multicolumn{1}{c|}{\multirow{1}{*}{0.22 (0.00)}}
    &\multicolumn{1}{c|}{\multirow{1}{*}{0.22 (0.00)}}
    % IPVI
    &\multicolumn{1}{c|}{\multirow{1}{*}{0.22 (0.01)}}
    &\multicolumn{1}{c|}{\multirow{1}{*}{0.22 (0.01)}}
    &\multicolumn{1}{c|}{\multirow{1}{*}{0.22 (0.01)}}
    &\multicolumn{1}{c|}{\multirow{1}{*}{0.21 (0.01)}}
    % NOVI
    &\multicolumn{1}{c|}{\multirow{1}{*}{0.22 (0.00)}}
    &\multicolumn{1}{c|}{\multirow{1}{*}{\textbf{0.21 (0.00)}}}
    &\multicolumn{1}{c|}{\multirow{1}{*}{\textbf{0.21 (0.00)}}}
    &\multicolumn{1}{c|}{\multirow{1}{*}{\textbf{0.21 (0.00)}}}\\
    \hline
    
    % Concrete
    \multicolumn{1}{|c|}{\multirow{1}{*}{\textbf{Concrete}}}
    % DSVI
    &\multicolumn{1}{c|}{\multirow{1}{*}{0.34 (0.01)}} &\multicolumn{1}{c|}{\multirow{1}{*}{0.34 (0.01)}} 
    &\multicolumn{1}{c|}{\multirow{1}{*}{0.34 (0.01)}} 
    &\multicolumn{1}{c|}{\multirow{1}{*}{0.34 (0.01)}}
    % SGHMC
    &\multicolumn{1}{c|}{\multirow{1}{*}{0.35 (0.03)}}
    &\multicolumn{1}{c|}{\multirow{1}{*}{0.33 (0.03)}}
    &\multicolumn{1}{c|}{\multirow{1}{*}{0.31 (0.02)}}
    &\multicolumn{1}{c|}{\multirow{1}{*}{0.31 (0.02)}}
    % IPVI
    &\multicolumn{1}{c|}{\multirow{1}{*}{0.32 (0.02)}}
    &\multicolumn{1}{c|}{\multirow{1}{*}{0.30 (0.03)}}
    &\multicolumn{1}{c|}{\multirow{1}{*}{0.31 (0.03)}}
    &\multicolumn{1}{c|}{\multirow{1}{*}{0.30 (0.04)}}
    % NOVI
    &\multicolumn{1}{c|}{\multirow{1}{*}{0.24 (0.00)}}
    &\multicolumn{1}{c|}{\multirow{1}{*}{0.25 (0.00)}}
    &\multicolumn{1}{c|}{\multirow{1}{*}{0.24 (0.00)}}
    &\multicolumn{1}{c|}{\multirow{1}{*}{\textbf{0.23 (0.00)}}}\\
    \hline
    
    % Yacht
    \multicolumn{1}{|c|}{\multirow{1}{*}{\textbf{Yacht}}}
    % DSVI
    &\multicolumn{1}{c|}{\multirow{1}{*}{0.07 (0.00)}} &\multicolumn{1}{c|}{\multirow{1}{*}{0.07 (0.00)}} 
    &\multicolumn{1}{c|}{\multirow{1}{*}{0.07 (0.00)}} 
    &\multicolumn{1}{c|}{\multirow{1}{*}{0.07 (0.00)}}
    % SGHMC
    &\multicolumn{1}{c|}{\multirow{1}{*}{0.03 (0.01)}}
    &\multicolumn{1}{c|}{\multirow{1}{*}{0.03 (0.01)}}
    &\multicolumn{1}{c|}{\multirow{1}{*}{\textbf{0.02 (0.01)}}}
    &\multicolumn{1}{c|}{\multirow{1}{*}{0.03 (0.01)}}
    % IPVI
    &\multicolumn{1}{c|}{\multirow{1}{*}{0.03 (0.02)}}
    &\multicolumn{1}{c|}{\multirow{1}{*}{0.03 (0.02)}}
    &\multicolumn{1}{c|}{\multirow{1}{*}{0.04 (0.03)}}
    &\multicolumn{1}{c|}{\multirow{1}{*}{0.03 (0.01)}}
    % NOVI
    &\multicolumn{1}{c|}{\multirow{1}{*}{0.03 (0.00)}}
    &\multicolumn{1}{c|}{\multirow{1}{*}{0.09 (0.01)}}
    &\multicolumn{1}{c|}{\multirow{1}{*}{0.08 (0.00)}}
    &\multicolumn{1}{c|}{\multirow{1}{*}{0.06 (0.00)}}\\
    \hline
    % Qsar
    \multicolumn{1}{|c|}{\multirow{1}{*}{\textbf{Qsar}}}
    % DSVI
    &\multicolumn{1}{c|}{\multirow{1}{*}{0.57 (0.00)}} &\multicolumn{1}{c|}{\multirow{1}{*}{0.50 (0.00)}} 
    &\multicolumn{1}{c|}{\multirow{1}{*}{0.47 (0.00)}} 
    &\multicolumn{1}{c|}{\multirow{1}{*}{\textbf{0.42 (0.00)}}}
    % SGHMC
    &\multicolumn{1}{c|}{\multirow{1}{*}{0.56 (0.00)}}
    &\multicolumn{1}{c|}{\multirow{1}{*}{0.56 (0.00)}}
    &\multicolumn{1}{c|}{\multirow{1}{*}{0.56 (0.00)}}
    &\multicolumn{1}{c|}{\multirow{1}{*}{0.56 (0.00)}}
    % IPVI
    &\multicolumn{1}{c|}{\multirow{1}{*}{0.56 (0.01)}}
    &\multicolumn{1}{c|}{\multirow{1}{*}{0.54 (0.01)}}
    &\multicolumn{1}{c|}{\multirow{1}{*}{0.54 (0.01)}}
    &\multicolumn{1}{c|}{\multirow{1}{*}{0.54 (0.01)}}
    % NOVI
    &\multicolumn{1}{c|}{\multirow{1}{*}{0.51 (0.00)}}
    &\multicolumn{1}{c|}{\multirow{1}{*}{0.46 (0.01)}}
    &\multicolumn{1}{c|}{\multirow{1}{*}{0.45 (0.01)}}
    &\multicolumn{1}{c|}{\multirow{1}{*}{0.44 (0.01)}}\\
    \hline
    
    % Protein
    \multicolumn{1}{|c|}{\multirow{1}{*}{\textbf{Protein}}}
    % DSVI
    &\multicolumn{1}{c|}{\multirow{1}{*}{0.81 (0.00)}} &\multicolumn{1}{c|}{\multirow{1}{*}{0.77 (0.00)}} 
    &\multicolumn{1}{c|}{\multirow{1}{*}{0.79 (0.00)}} 
    &\multicolumn{1}{c|}{\multirow{1}{*}{0.73 (0.00)}}
    % SGHMC
    &\multicolumn{1}{c|}{\multirow{1}{*}{0.72 (0.01)}}
    &\multicolumn{1}{c|}{\multirow{1}{*}{0.71 (0.01)}}
    &\multicolumn{1}{c|}{\multirow{1}{*}{0.70 (0.01)}}
    &\multicolumn{1}{c|}{\multirow{1}{*}{0.69 (0.00)}}
    % IPVI
    &\multicolumn{1}{c|}{\multirow{1}{*}{0.68 (0.01)}}
    &\multicolumn{1}{c|}{\multirow{1}{*}{0.65 (0.01)}}
    &\multicolumn{1}{c|}{\multirow{1}{*}{0.65 (0.01)}}
    &\multicolumn{1}{c|}{\multirow{1}{*}{0.65 (0.01)}}
    % NOVI
    &\multicolumn{1}{c|}{\multirow{1}{*}{0.67 (0.00)}}
    &\multicolumn{1}{c|}{\multirow{1}{*}{\textbf{0.65 (0.00)}}}
    &\multicolumn{1}{c|}{\multirow{1}{*}{0.66 (0.00)}}
    &\multicolumn{1}{c|}{\multirow{1}{*}{0.66 (0.00)}}\\
    \hline
    
    % Kin8nm
    \multicolumn{1}{|c|}{\multirow{1}{*}{\textbf{Kin8nm}}}
    % DSVI
    &\multicolumn{1}{c|}{\multirow{1}{*}{0.39 (0.00)}} &\multicolumn{1}{c|}{\multirow{1}{*}{0.37 (0.00)}} 
    &\multicolumn{1}{c|}{\multirow{1}{*}{0.34 (0.00)}} 
    &\multicolumn{1}{c|}{\multirow{1}{*}{0.30 (0.00)}}
    % SGHMC
    &\multicolumn{1}{c|}{\multirow{1}{*}{0.26 (0.01)}}
    &\multicolumn{1}{c|}{\multirow{1}{*}{0.25 (0.01)}}
    &\multicolumn{1}{c|}{\multirow{1}{*}{0.25 (0.01)}}
    &\multicolumn{1}{c|}{\multirow{1}{*}{0.24 (0.01)}}
    % IPVI
    &\multicolumn{1}{c|}{\multirow{1}{*}{0.25 (0.01)}}
    &\multicolumn{1}{c|}{\multirow{1}{*}{0.25 (0.01)}}
    &\multicolumn{1}{c|}{\multirow{1}{*}{0.25 (0.00)}}
    &\multicolumn{1}{c|}{\multirow{1}{*}{0.26 (0.01)}}
    % NOVI
    &\multicolumn{1}{c|}{\multirow{1}{*}{\textbf{0.24 (0.00)}}}
    &\multicolumn{1}{c|}{\multirow{1}{*}{0.28 (0.00)}}
    &\multicolumn{1}{c|}{\multirow{1}{*}{0.26 (0.00)}}
    &\multicolumn{1}{c|}{\multirow{1}{*}{0.27 (0.00)}}\\
    \hline
    
    \end{tabular}}
    \caption{Tabular version of Figure 2 in the main text.}
    \label{tab:regression-tabular}
\end{table*}

\begin{table*}
    \resizebox{\textwidth}{!}{
    \begin{tabular}{|c|c|c|c|c|c|c|c|c|c|c|c|c|}
    \hline
    \multicolumn{1}{|c|}{\multirow{2}{*}{\textbf{Method}}} &\multicolumn{4}{c|}{\multirow{1}{*}{\textbf{Estate}}} &\multicolumn{4}{c|}{\multirow{1}{*}{\textbf{Elevators}}} \\
    \cline{2-9}
    
    \multicolumn{1}{|c|}{} &\multicolumn{1}{c|}{\multirow{1}{*}{L=2}} &\multicolumn{1}{c|}{\multirow{1}{*}{L=3}} &\multicolumn{1}{c|}{\multirow{1}{*}{L=4}} &\multicolumn{1}{c|}{\multirow{1}{*}{L=5}}
    &\multicolumn{1}{c|}{\multirow{1}{*}{L=2}}
    &\multicolumn{1}{c|}{\multirow{1}{*}{L=3}}
    &\multicolumn{1}{c|}{\multirow{1}{*}{L=4}}
    &\multicolumn{1}{c|}{\multirow{1}{*}{L=5}} \\
    \hline
    
    % DSVI
    \multicolumn{1}{|c|}{\multirow{1}{*}{\textbf{DSVI}}}
    % Estate
    &\multicolumn{1}{c|}{\multirow{1}{*}{0.65 (0.02)}} &\multicolumn{1}{c|}{\multirow{1}{*}{0.66 (0.02)}} &\multicolumn{1}{c|}{\multirow{1}{*}{0.50 (0.02)}} &\multicolumn{1}{c|}{\multirow{1}{*}{0.64 (0.02)}}
    % Elevators
    &\multicolumn{1}{c|}{\multirow{1}{*}{0.37 (0.00)}} &\multicolumn{1}{c|}{\multirow{1}{*}{0.36 (0.00)}} &\multicolumn{1}{c|}{\multirow{1}{*}{0.37 (0.00)}} &\multicolumn{1}{c|}{\multirow{1}{*}{0.36 (0.00)}} \\
    \hline
    
    % SGHMC
    \multicolumn{1}{|c|}{\multirow{1}{*}{\textbf{SGHMC}}}
    % Estate
    &\multicolumn{1}{c|}{\multirow{1}{*}{0.54 (0.01)}} &\multicolumn{1}{c|}{\multirow{1}{*}{0.50 (0.01)}} &\multicolumn{1}{c|}{\multirow{1}{*}{0.53 (0.01)}} &\multicolumn{1}{c|}{\multirow{1}{*}{0.61 (0.01)}}
    % Elevators
    &\multicolumn{1}{c|}{\multirow{1}{*}{0.36 (0.00)}} &\multicolumn{1}{c|}{\multirow{1}{*}{0.36 (0.00)}} &\multicolumn{1}{c|}{\multirow{1}{*}{\textbf{0.35 (0.00)}}} &\multicolumn{1}{c|}{\multirow{1}{*}{\textbf{0.35 (0.00)}}} \\ 
    \hline
    
    % NOVI
    \multicolumn{1}{|c|}{\multirow{1}{*}{\textbf{NOVI}}}
    % Estate
    &\multicolumn{1}{c|}{\multirow{1}{*}{0.56 (0.02)}} &\multicolumn{1}{c|}{\multirow{1}{*}{0.40 (0.02)}} &\multicolumn{1}{c|}{\multirow{1}{*}{0.40 (0.01)}} &\multicolumn{1}{c|}{\multirow{1}{*}{\textbf{0.39 (0.02)}}}
    % Elevators
    &\multicolumn{1}{c|}{\multirow{1}{*}{0.36 (0.00)}} &\multicolumn{1}{c|}{\multirow{1}{*}{\textbf{0.35 (0.00)}}} &\multicolumn{1}{c|}{\multirow{1}{*}{\textbf{0.35 (0.00)}}} &\multicolumn{1}{c|}{\multirow{1}{*}{\textbf{0.35 (0.00)}}} \\
    \hline
    
    \end{tabular}}
    \caption{Additional experiments for real-world datasets. It shows regression mean test RMSE values with its standard deviation on the round bracket. L denotes the number of layers in DGP models.}
    \label{tab:additional-tabular}
\end{table*}

\subsection{Ablation Study on Classification Datasets}
\label{ablation-classification}
We also performed ablation study on clasification datasets and reported its results by test accuracy in Figure \ref{fig:ablation-classification}. From which it can be seen that NOVI not only achieves better results on large scale datasets, which demonstrates its scalability, but also the results on the test set have far exceeded the performance of the Monte Carlo log-likelihood maximization method on the training set, suggesting the feasibility of adversarial training.
\subsection{Tabular version of Figure \ref{fig:regression} in the Main Text}
%\ref{fig:regression}
\label{tabular-regression}
Tabular version of Figure \ref{fig:regression} in the main text can be seen in Table \ref{tab:regression-tabular}.

\begin{table}
    \centering
    \resizebox{\linewidth}{!}{\begin{tabular}{|c|c|c|c|c|c|}
    \hline
    \multicolumn{2}{|c|}{\multirow{1}{*}{}} &\multicolumn{1}{c|}{\multirow{1}{*}{\textbf{Concrete}}} &\multicolumn{1}{c|}{\multirow{1}{*}{\textbf{Energy}}}
    &\multicolumn{1}{c|}{\multirow{1}{*}{\textbf{Boston}}}
    &\multicolumn{1}{c|}{\multirow{1}{*}{\textbf{Kin8nm}}} \\
    \hline
    
    \multicolumn{2}{|c|}{\multirow{1}{*}{\textbf{Iteration}}} &\multicolumn{1}{c|}{\multirow{1}{*}{500}} 
    &\multicolumn{1}{c|}{\multirow{1}{*}{600}} 
    &\multicolumn{1}{c|}{\multirow{1}{*}{300}} 
    &\multicolumn{1}{c|}{\multirow{1}{*}{500}} \\
    \hline
    
    \multicolumn{2}{|c|}{\multirow{1}{*}{\textbf{RMSE (M=50)}}} &\multicolumn{1}{c|}{\multirow{1}{*}{0.28 (0.00)}} 
    &\multicolumn{1}{c|}{\multirow{1}{*}{0.04 (0.00)}} 
    &\multicolumn{1}{c|}{\multirow{1}{*}{0.23 (0.00)}} 
    &\multicolumn{1}{c|}{\multirow{1}{*}{0.26 (0.00)}} \\
    \hline
    
    \multicolumn{2}{|c|}{\multirow{1}{*}{\textbf{Time (M=50)}}} &\multicolumn{1}{c|}{\multirow{1}{*}{0.397s}} 
    &\multicolumn{1}{c|}{\multirow{1}{*}{0.404s}} 
    &\multicolumn{1}{c|}{\multirow{1}{*}{0.380s}} 
    &\multicolumn{1}{c|}{\multirow{1}{*}{0.600s}} \\
    \hline
    
    \multicolumn{2}{|c|}{\multirow{1}{*}{\textbf{RMSE (M=100)}}} &\multicolumn{1}{c|}{\multirow{1}{*}{0.24 (0.00)}} 
    &\multicolumn{1}{c|}{\multirow{1}{*}{0.04 (0.00)}} 
    &\multicolumn{1}{c|}{\multirow{1}{*}{0.20 (0.01)}} 
    &\multicolumn{1}{c|}{\multirow{1}{*}{0.24 (0.00)}} \\
    \hline
    
    \multicolumn{2}{|c|}{\multirow{1}{*}{\textbf{Time (M=100)}}} &\multicolumn{1}{c|}{\multirow{1}{*}{0.403s}} 
    &\multicolumn{1}{c|}{\multirow{1}{*}{0.420s}} 
    &\multicolumn{1}{c|}{\multirow{1}{*}{0.400s}} 
    &\multicolumn{1}{c|}{\multirow{1}{*}{0.613s}} \\
    \hline
    
    \multicolumn{2}{|c|}{\multirow{1}{*}{\textbf{RMSE (M=200)}}} &\multicolumn{1}{c|}{\multirow{1}{*}{0.20 (0.00)}} 
    &\multicolumn{1}{c|}{\multirow{1}{*}{0.03 (0.00)}} 
    &\multicolumn{1}{c|}{\multirow{1}{*}{0.20 (0.00)}} 
    &\multicolumn{1}{c|}{\multirow{1}{*}{0.24 (0.00)}} \\
    \hline
    
    \multicolumn{2}{|c|}{\multirow{1}{*}{\textbf{Time (M=200)}}} &\multicolumn{1}{c|}{\multirow{1}{*}{0.408s}} 
    &\multicolumn{1}{c|}{\multirow{1}{*}{0.450s}} 
    &\multicolumn{1}{c|}{\multirow{1}{*}{0.410s}} 
    &\multicolumn{1}{c|}{\multirow{1}{*}{0.646s}} \\
    \hline
    
    \multicolumn{2}{|c|}{\multirow{1}{*}{\textbf{RMSE (M=400)}}} &\multicolumn{1}{c|}{\multirow{1}{*}{0.19 (0.00)}} 
    &\multicolumn{1}{c|}{\multirow{1}{*}{0.03 (0.00)}} 
    &\multicolumn{1}{c|}{\multirow{1}{*}{0.18 (0.01)}} 
    &\multicolumn{1}{c|}{\multirow{1}{*}{0.23 (0.00)}} \\
    \hline
    
    \multicolumn{2}{|c|}{\multirow{1}{*}{\textbf{Time (M=400)}}} &\multicolumn{1}{c|}{\multirow{1}{*}{0.408s}} 
    &\multicolumn{1}{c|}{\multirow{1}{*}{0.450s}} 
    &\multicolumn{1}{c|}{\multirow{1}{*}{0.420s}} 
    &\multicolumn{1}{c|}{\multirow{1}{*}{0.658s}} \\
    \hline
    
    \end{tabular}}
    \caption{Comparison of number of inducing points ($50$, $100$, $200$ and $400$) using $2$-layer DGP model on $4$ UCI regression datasets. M denotes the number of inducing points per layer.}
    \label{tab:inducing}
\end{table}

\subsection{Comparison about Inducing Points}
\label{inducing-comparison}
In order to investigate the robustness of NOVI at different numbers of induced points, we have performed ablation study to compare accuracy and training time on $4$ UCI regression datasets using $2$-layer DGP model. For each dataset, number of iteration is set to be the same for fair comparison. Results are shown in Table \ref{tab:inducing}. From which it can be seen that the performance increases gradually with the number of induction points, while the time fluctuates only slightly, which shows the robustness of NOVI to the number of inducing points.

\subsection{Additional Experiments}
\label{additional-experiments}
We have performed additional regression experiments for two real-world datasets: Estate and Elevators. Results are shown in Table \ref{tab:additional-tabular}. From these two datasets, it can be seen that NOVI has achieved better RMSE value than other two methods.

In order to further demonstrate the advantages of our method compared to the state-of-the-art  approaches, we conducted comparisons with the two most recent methods for DGP posterior inference, IWVI \cite{salimbeni2019deep} and its variant (IWVI with DREG estimators) \cite{rudner2021signal}. Introducing importance weighting for posterior sampling in IWVI not only enhances the variational lower bound but also serves as a crucial variation of DSVI. In contrast, our proposed method, as explained in the text, distinguishes itself from variational approaches based on KL divergence. We presented their results on 8 UCI datasets in Table \ref{tab:IWVI}. The results show that NOVI still outperforms the latest DGP posterior inference methods.

\begin{table*}[t]
    %\centering
    \resizebox{\textwidth}{!}{
    \begin{tabular}{|c|c|c|c|c|c|c|c|c|c|c|c|c|}
    \hline
    \multicolumn{1}{|c|}{\multirow{1}{*}{\textbf{Data}}} &\multicolumn{1}{c|}{\multirow{1}{*}{\textbf{IWVI 2}}} &\multicolumn{1}{c|}{\multirow{1}{*}{\textbf{IWVI 3}}}
    &\multicolumn{1}{c|}{\multirow{1}{*}{\textbf{IWVI 4}}}
    &\multicolumn{1}{c|}{\multirow{1}{*}{\textbf{IWVI 5}}}
    &\multicolumn{1}{c|}{\multirow{1}{*}{\textbf{IWVI-DREG 2}}}
    &\multicolumn{1}{c|}{\multirow{1}{*}{\textbf{IWVI-DREG 3}}}
    &\multicolumn{1}{c|}{\multirow{1}{*}{\textbf{IWVI-DREG4}}}
    &\multicolumn{1}{c|}{\multirow{1}{*}{\textbf{IWVI-DREG 5}}}
  
    &\multicolumn{1}{c|}{\multirow{1}{*}{\textbf{NOVI 2}}}
    &\multicolumn{1}{c|}{\multirow{1}{*}{\textbf{NOVI 3}}}
    &\multicolumn{1}{c|}{\multirow{1}{*}{\textbf{NOVI 4}}}
    &\multicolumn{1}{c|}{\multirow{1}{*}{\textbf{NOVI 5}}}\\
    \hline
    
    % Boston
    \multicolumn{1}{|c|}{\multirow{1}{*}{\textbf{Boston}}}
    % IWVI
    &\multicolumn{1}{c|}{\multirow{1}{*}{0.33 (0.02)}} 
    &\multicolumn{1}{c|}{\multirow{1}{*}{0.35 (0.02)}} 
    &\multicolumn{1}{c|}{\multirow{1}{*}{0.35 (0.02)}} 
    &\multicolumn{1}{c|}{\multirow{1}{*}{0.36 (0.02)}}
    % SGHMC
    &\multicolumn{1}{c|}{\multirow{1}{*}{0.32 (0.02)}}
    &\multicolumn{1}{c|}{\multirow{1}{*}{0.33 (0.02)}}
    &\multicolumn{1}{c|}{\multirow{1}{*}{0.35 (0.02)}}
    &\multicolumn{1}{c|}{\multirow{1}{*}{0.36 (0.02)}}
 
    % NOVI
    &\multicolumn{1}{c|}{\multirow{1}{*}{\textbf{0.20 (0.01)}}}
    &\multicolumn{1}{c|}{\multirow{1}{*}{0.34 (0.02)}}
    &\multicolumn{1}{c|}{\multirow{1}{*}{0.40 (0.03)}}
    &\multicolumn{1}{c|}{\multirow{1}{*}{0.38 (0.02)}}\\
    \hline
    
    % Energy
    \multicolumn{1}{|c|}{\multirow{1}{*}{\textbf{Energy}}}
    % IWVI
    &\multicolumn{1}{c|}{\multirow{1}{*}{0.05 (0.00)}} 
    &\multicolumn{1}{c|}{\multirow{1}{*}{0.06 (0.00)}} 
    &\multicolumn{1}{c|}{\multirow{1}{*}{0.06 (0.00)}} 
    &\multicolumn{1}{c|}{\multirow{1}{*}{0.06 (0.00)}}
    % SGHMC
    &\multicolumn{1}{c|}{\multirow{1}{*}{0.05 (0.00)}}
    &\multicolumn{1}{c|}{\multirow{1}{*}{0.06 (0.00)}}
    &\multicolumn{1}{c|}{\multirow{1}{*}{0.06 (0.00)}}
    &\multicolumn{1}{c|}{\multirow{1}{*}{0.06 (0.00)}}
 
    % NOVI
    &\multicolumn{1}{c|}{\multirow{1}{*}{\textbf{0.04 (0.00)}}}
    &\multicolumn{1}{c|}{\multirow{1}{*}{0.05 (0.00)}}
    &\multicolumn{1}{c|}{\multirow{1}{*}{0.06 (0.00)}}
    &\multicolumn{1}{c|}{\multirow{1}{*}{0.06 (0.00)}}\\
    \hline
    
    % Power
    \multicolumn{1}{|c|}{\multirow{1}{*}{\textbf{Power}}}
    % IWVI
    &\multicolumn{1}{c|}{\multirow{1}{*}{0.22 (0.00)}} 
    &\multicolumn{1}{c|}{\multirow{1}{*}{0.22 (0.00)}} 
    &\multicolumn{1}{c|}{\multirow{1}{*}{0.22 (0.00)}} 
    &\multicolumn{1}{c|}{\multirow{1}{*}{0.22 (0.00)}}
    % IWVI
    &\multicolumn{1}{c|}{\multirow{1}{*}{0.22 (0.00)}}
    &\multicolumn{1}{c|}{\multirow{1}{*}{0.22 (0.00)}}
    &\multicolumn{1}{c|}{\multirow{1}{*}{0.22 (0.00)}}
    &\multicolumn{1}{c|}{\multirow{1}{*}{0.22 (0.00)}}
  
    % NOVI
    &\multicolumn{1}{c|}{\multirow{1}{*}{0.22 (0.00)}}
    &\multicolumn{1}{c|}{\multirow{1}{*}{\textbf{0.21 (0.00)}}}
    &\multicolumn{1}{c|}{\multirow{1}{*}{\textbf{0.21 (0.00)}}}
    &\multicolumn{1}{c|}{\multirow{1}{*}{\textbf{0.21 (0.00)}}}\\
    \hline
    
    % Concrete
    \multicolumn{1}{|c|}{\multirow{1}{*}{\textbf{Concrete}}}
    % DSVI
    &\multicolumn{1}{c|}{\multirow{1}{*}{0.32 (0.01)}} 
    &\multicolumn{1}{c|}{\multirow{1}{*}{0.28 (0.01)}} 
    &\multicolumn{1}{c|}{\multirow{1}{*}{0.27 (0.01)}} 
    &\multicolumn{1}{c|}{\multirow{1}{*}{0.27 (0.01)}}
    % SGHMC
    &\multicolumn{1}{c|}{\multirow{1}{*}{0.31 (0.01)}}
    &\multicolumn{1}{c|}{\multirow{1}{*}{0.27 (0.01)}}
    &\multicolumn{1}{c|}{\multirow{1}{*}{0.27 (0.01)}}
    &\multicolumn{1}{c|}{\multirow{1}{*}{0.26 (0.01)}}
  
    % NOVI
    &\multicolumn{1}{c|}{\multirow{1}{*}{0.24 (0.00)}}
    &\multicolumn{1}{c|}{\multirow{1}{*}{0.25 (0.00)}}
    &\multicolumn{1}{c|}{\multirow{1}{*}{0.24 (0.00)}}
    &\multicolumn{1}{c|}{\multirow{1}{*}{\textbf{0.23 (0.00)}}}\\
    \hline
    
    % Yacht
    \multicolumn{1}{|c|}{\multirow{1}{*}{\textbf{Yacht}}}
    % DSVI
    &\multicolumn{1}{c|}{\multirow{1}{*}{0.07 (0.00)}} 
    &\multicolumn{1}{c|}{\multirow{1}{*}{0.09 (0.00)}} 
    &\multicolumn{1}{c|}{\multirow{1}{*}{0.07 (0.00)}} 
    &\multicolumn{1}{c|}{\multirow{1}{*}{0.06 (0.00)}}
    % SGHMC
    &\multicolumn{1}{c|}{\multirow{1}{*}{0.07 (0.00)}}
    &\multicolumn{1}{c|}{\multirow{1}{*}{0.08 (0.00)}}
    &\multicolumn{1}{c|}{\multirow{1}{*}{0.07 (0.0)}}
    &\multicolumn{1}{c|}{\multirow{1}{*}{0.06 (0.00)}}

    % NOVI
    &\multicolumn{1}{c|}{\multirow{1}{*}{\textbf{0.03} (0.00)}}
    &\multicolumn{1}{c|}{\multirow{1}{*}{0.09 (0.01)}}
    &\multicolumn{1}{c|}{\multirow{1}{*}{0.08 (0.00)}}
    &\multicolumn{1}{c|}{\multirow{1}{*}{0.06 (0.00)}}\\
    \hline
    % Qsar
    \multicolumn{1}{|c|}{\multirow{1}{*}{\textbf{Qsar}}}
    % IWVI
    &\multicolumn{1}{c|}{\multirow{1}{*}{0.54 (0.00)}} 
    &\multicolumn{1}{c|}{\multirow{1}{*}{0.47 (0.00)}} 
    &\multicolumn{1}{c|}{\multirow{1}{*}{0.44 (0.00)}} 
    &\multicolumn{1}{c|}{\multirow{1}{*}{0.43 (0.00)}}
    % SGHMC
    &\multicolumn{1}{c|}{\multirow{1}{*}{0.52 (0.00)}}
    &\multicolumn{1}{c|}{\multirow{1}{*}{0.46 (0.00)}}
    &\multicolumn{1}{c|}{\multirow{1}{*}{0.43 (0.00)}}
    &\multicolumn{1}{c|}{\multirow{1}{*}{\textbf{0.42 (0.00)}}}

    % NOVI
    &\multicolumn{1}{c|}{\multirow{1}{*}{0.51 (0.00)}}
    &\multicolumn{1}{c|}{\multirow{1}{*}{0.46 (0.01)}}
    &\multicolumn{1}{c|}{\multirow{1}{*}{0.45 (0.01)}}
    &\multicolumn{1}{c|}{\multirow{1}{*}{0.44 (0.01)}}\\
    \hline
    
    % Protein
    \multicolumn{1}{|c|}{\multirow{1}{*}{\textbf{Protein}}}
    % DSVI
    &\multicolumn{1}{c|}{\multirow{1}{*}{0.72 (0.00)}} 
    &\multicolumn{1}{c|}{\multirow{1}{*}{0.67 (0.00)}} 
    &\multicolumn{1}{c|}{\multirow{1}{*}{0.67 (0.00)}} 
    &\multicolumn{1}{c|}{\multirow{1}{*}{0.67 (0.00)}}
    % SGHMC
    &\multicolumn{1}{c|}{\multirow{1}{*}{0.70 (0.01)}}
    &\multicolumn{1}{c|}{\multirow{1}{*}{0.66 (0.01)}}
    &\multicolumn{1}{c|}{\multirow{1}{*}{0.66 (0.01)}}
    &\multicolumn{1}{c|}{\multirow{1}{*}{0.66 (0.00)}}

    % NOVI
    &\multicolumn{1}{c|}{\multirow{1}{*}{0.67 (0.00)}}
    &\multicolumn{1}{c|}{\multirow{1}{*}{\textbf{0.65 (0.00)}}}
    &\multicolumn{1}{c|}{\multirow{1}{*}{0.66 (0.00)}}
    &\multicolumn{1}{c|}{\multirow{1}{*}{0.66 (0.00)}}\\
    \hline
    
    % Kin8nm
    \multicolumn{1}{|c|}{\multirow{1}{*}{\textbf{Kin8nm}}}
    % DSVI
    &\multicolumn{1}{c|}{\multirow{1}{*}{0.37 (0.00)}} 
    &\multicolumn{1}{c|}{\multirow{1}{*}{0.34 (0.00)}} 
    &\multicolumn{1}{c|}{\multirow{1}{*}{0.31 (0.00)}} 
    &\multicolumn{1}{c|}{\multirow{1}{*}{0.29 (0.00)}}
    % SGHMC
    &\multicolumn{1}{c|}{\multirow{1}{*}{0.35 (0.00)}}
    &\multicolumn{1}{c|}{\multirow{1}{*}{0.32 (0.00)}}
    &\multicolumn{1}{c|}{\multirow{1}{*}{0.31 (0.00)}}
    &\multicolumn{1}{c|}{\multirow{1}{*}{0.29 (0.00)}}
   
    % NOVI
    &\multicolumn{1}{c|}{\multirow{1}{*}{\textbf{0.24 (0.00)}}}
    &\multicolumn{1}{c|}{\multirow{1}{*}{0.28 (0.00)}}
    &\multicolumn{1}{c|}{\multirow{1}{*}{0.26 (0.00)}}
    &\multicolumn{1}{c|}{\multirow{1}{*}{0.27 (0.00)}}\\
    \hline
    
    \end{tabular}}
    \caption{comparisons with the two most recent methods; IWVI and IWVI with DREG estimators. the results of UCI test RMSE are reported   }
    \label{tab:IWVI}
\end{table*}

\section{Training Details}
\label{training-details}

\subsection{UCI Datasets}
\label{uci-details}
\paragraph{Training} We conducted a random $0.9 / 0.1$ train/test split and normalized the features to the range $[-1, 1]$. The depth $L$ of DGP models varied from $2$ to $5$, with $100$ inducing points per layer, which are initialized by sampling from isotrophic Gaussian distribution. The output dimension for each hidden layer is set to $1$ for final layer and $10$ for others. We have utilized RQ kernel for all tasks. For all datasets, we have optimized hyper-parameters and network parameters jointly and utilized different learning rate, $0.02$ for hyper-parameters and $0.001$ for network parameters using Adam optimizer \cite{kingma2014adam}. The dimension of noise $\boldsymbol{\epsilon}$ used to generate $\mathcal{U}$ is set to $200$ for all datasets. We train for almost $500$ iterations for all datasets. DSVI and SGHMC methods are initialized the same as NOVI to obtain a fair comparison.

\paragraph{Network Settings} In this study, the selection of the generator and discriminator networks is done manually. However, to further optimize the hyperparameters of these neural networks, we propose a classical grid search approach for each experimental dataset. 
Taking the energy dataset as an example, we set the generator and discriminator networks as three-layer neural networks. To explore the optimal choices for the activation function, we present the results in Table \ref{tab:net-hyper}. From the results in Table \ref{tab:net-hyper}, we can identify the relatively superior combination from the alternative choices of generator and discriminator networks. 
For the other hyperparameters of the neural networks, such as the number of hidden units in the intermediate layers, we can also fine-tune them using the same grid search method. 
As for further improvements or refinements of the algorithm, we leave it as future work. 
By adopting this systematic grid search method, we can effectively optimize the hyperparameters of the generator and discriminator networks, leading to improved performance on the specific dataset under consideration. This approach offers a structured framework for selecting and fine-tuning the neural network hyperparameters in the context of our study.

\begin{table}[t]
    \centering
    \resizebox{\linewidth}{!}{
    \begin{tabular}{|c|c|c|c|}
    \hline
    \diagbox{Discriminator}{Generator}    & Tanh & Prelu  & Sigmoid \\
    \hline
    Tanh & 0.041(0.001) & 0.042(0.001)  &  0.039(0.003)\\
    \hline
    Prelu & 0.041(0.001) & 0.038(0.001) &  0.040 (0.003) \\
    \hline
    Sigmoid & 0.038(0.001) & 0.034(0.003) &  0.040(0.002) \\
    \hline
    \end{tabular}}
    \caption{An example: When both the generator and discriminator are three-layer neural networks, and their activation functions are respectively taken as Tanh, Prelu, and Sigmoid, the RMSE of the NOVI algorithm on the energy dataset is reported.}
    \label{tab:net-hyper}
\end{table}
\paragraph{Regularization Strategies}

Based on our experimental results on the energy dataset, we tested the effect of different $\lambda$ values on the model performance. We took $\lambda$ values in increments of 10, namely 1, 10, 100, 1000, and 10000,  maintained the other hyperparameters unchanged and recorded the experimental results (RMSE on both the training and testing sets) in Table \ref{tab:lambda}. From Table\ref{tab:lambda}, it can be observed that the model performs the best when lambda is set to 10. Larger lambda values may lead to over-regularization, thereby reducing the model performance on the training set. On the other hand, too small lambda values may result in overfitting, although the model performs better on the training set, it fails to generalize on the testing set. Therefore, it is recommended to select an appropriate lambda value to adjust the regularization degree of the model. By conducting experiments on different $\lambda$ values, we can find the optimal lambda value that achieves good performance on both the training and testing sets.

In order to investigate the impact of using the Hutchinson estimator on NOVI, as proposed in our main text to significantly reduce the computational complexity of calculating the trace of the Jacobian matrix, we conducted an ablation experiment on Energy dataset and a considerably large Elevators dataset consisting of over ten thousand data points and 18-dimensional features. We compared the direct computation of the Jacobian matrix with the iterative approach using the Hutchinson estimator and reported the corresponding time consumption in Table 1. From the results presented in the table, it can be observed that the Hutchinson estimator significantly reduces the computational complexity of the NOVI method, which aligns with the theoretical predictions.
\begin{table}[t]
    \centering
    \resizebox{\linewidth}{!}{
    \begin{tabular}{|c|c|c|c|c|c|}
    \hline
    \diagbox{Dataset}{$\lambda$}    & 1 & 10  & 100& 1000& 10000 \\
    \hline
    Train & 0.025(0.001) & 0.031(0.001)  &  0.032(0.001)&0.042(0.001)&0.050(0.001)\\
    \hline
    Test & 0.050(0.001) & 0.037(0.001) &  0.038 (0.003)&  0.043 (0.003)&  0.050 (0.003) \\
    \hline
    \end{tabular}}
    \caption{An example: The RMSE values of the training and testing sets corresponding to different $\lambda$ values on the energy dataset.}
    \label{tab:lambda}
\end{table}

\begin{table}[t]
    \centering
    \resizebox{\linewidth}{!}{
    \begin{tabular}{|c|c|c|}
    \hline
          & Hutchinson Estimator &  Direct Computation  \\
    \hline
    Energy   & 0.391s/iter & 0.454s/iter\\
    
    \hline
    Elevators & 0.492s/iter & 0.587s/iter  \\
    \hline
    
    \end{tabular}}
    \caption{  the impact of utilizing the Hutchinson estimator for calculating the trace of the Jacobian matrix on the computational complexity.}
    \label{tab:hum}
\end{table}

\subsection{Image Datasets}
\label{image-details}
\paragraph{Training} We have followed the division of the original dataset and normalized pixel values to $[-1, 1]$. The depth $L$ of DGP models are varied from $3$ to $4$ with $100$ inducing points per layer, which are initialized by sampling from isotrophic Gaussian distribution. The output dimension for each hidden layer is set to be $10$ for final layer (which is the exact number of class to predict), and $60$ for others. We have utilized RQ kernel for all tasks. For all datasets, we have optimized hyper-parameters and network parameters jointly and utilized different learning rate, $0.02$ for hyper-parameters and $0.001$ for network parameters using Adam optimizer \cite{kingma2014adam}. The dimension of noise $\boldsymbol{\epsilon}$ used to generate $\mathcal{U}$ is set to $200$ for all datasets. We train for almost $10k$ iterations for all datasets. DSVI and SGHMC methods are initialized the same as NOVI to obtain a fair comparison.

\paragraph{Network Settings}  The selection of the
generator and discriminator networks is done manually, we also use a classical grid search approach for each experimental dataset.

%\begin{small}
    %\bibliographystyle{unsrt}
    %\bibliography{Reference}
%\end{small}

%%%%%%%%%%%%%%%%%%%%%%%%%%%%%%%%%%%%%%%%%%%%%%%%%%%%%%%%%%%%%%%%%%%%%%%%%%%%%%%
%%%%%%%%%%%%%%%%%%%%%%%%%%%%%%%%%%%%%%%%%%%%%%%%%%%%%%%%%%%%%%%%%%%%%%%%%%%%%%%
\bibliography{NOVI}
\bibliographystyle{IEEEtran}

\begin{IEEEbiography}[{\includegraphics[width=1in,height=1.25in,clip,keepaspectratio]{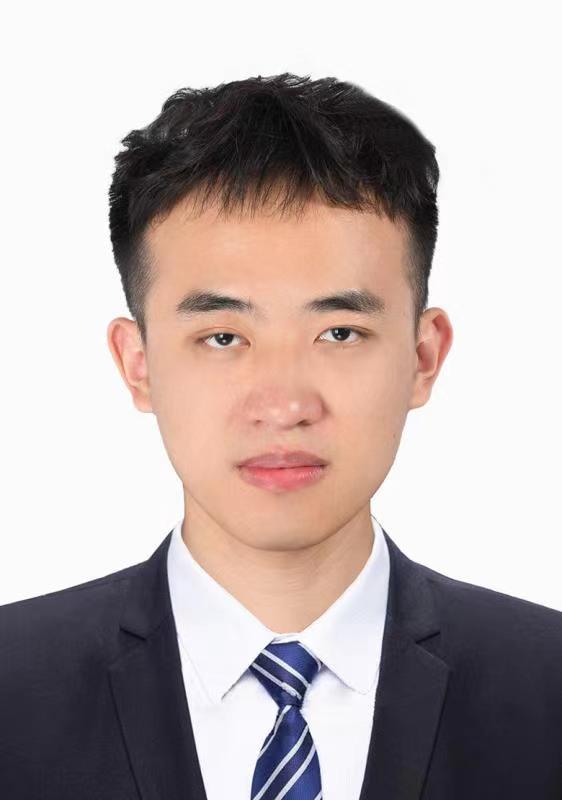}}]{Jian Xu} received the B.S. degree in the Department of Business Administration at the Communication University of China, Beijing, China in 2017. Currently, he is pursuing a Ph.D. at South China University of Technology, focusing on machine learning, stochastic processes, generative models, and their applications.
\end{IEEEbiography}

\begin{IEEEbiography}[{\includegraphics[width=1in,height=1.25in,clip,keepaspectratio]{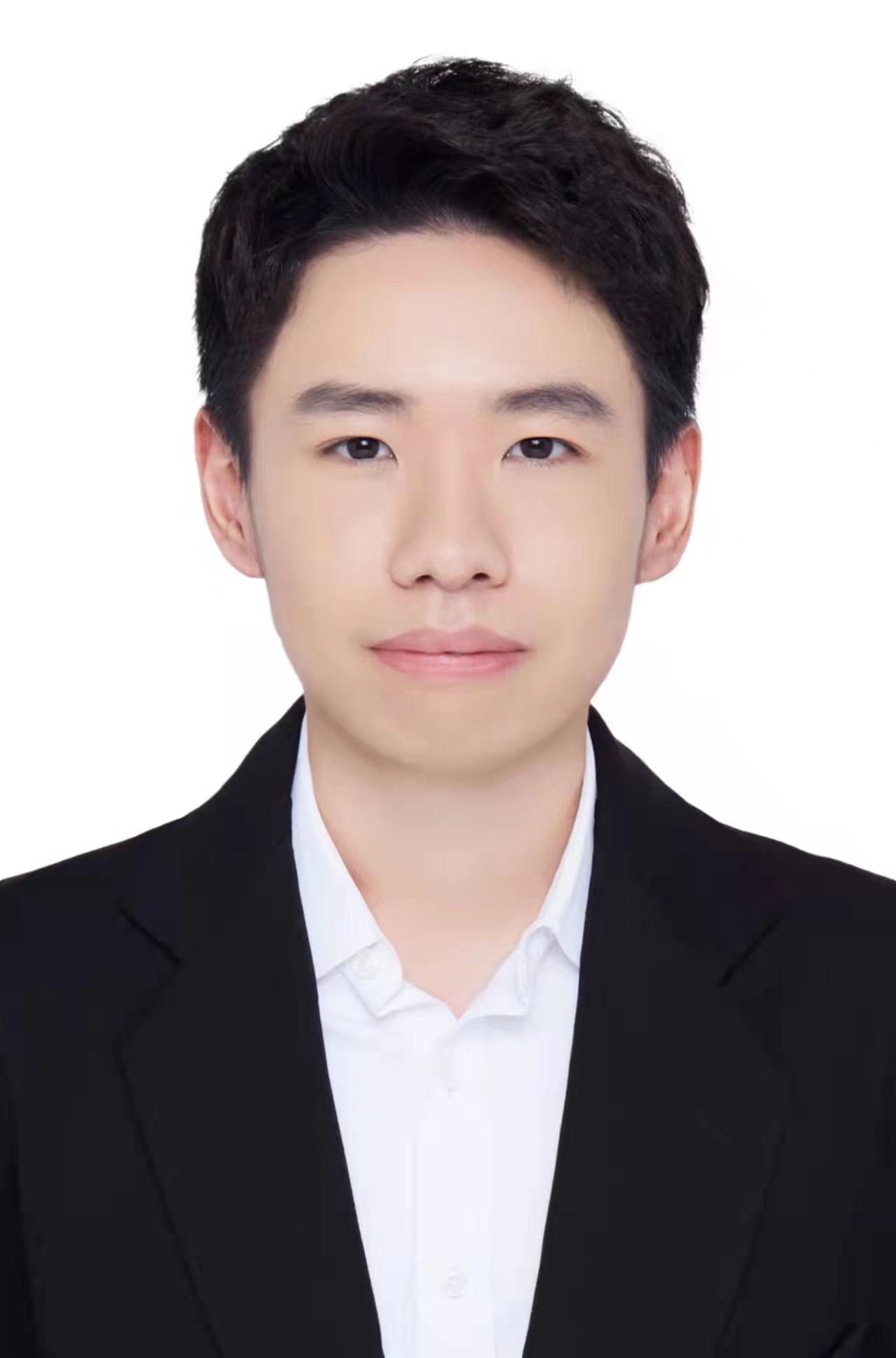}}]{Shian Du}
received the B.S. degree in applied mathematics from South China University of Technology, Guangzhou, China in 2023. He is currently a Master Student in Tsinghua University. His research interests include large-scale text-to-video and image-to-video generation. 
\end{IEEEbiography}

\begin{IEEEbiography}[{\includegraphics[width=1in,height=1.25in,clip,keepaspectratio]{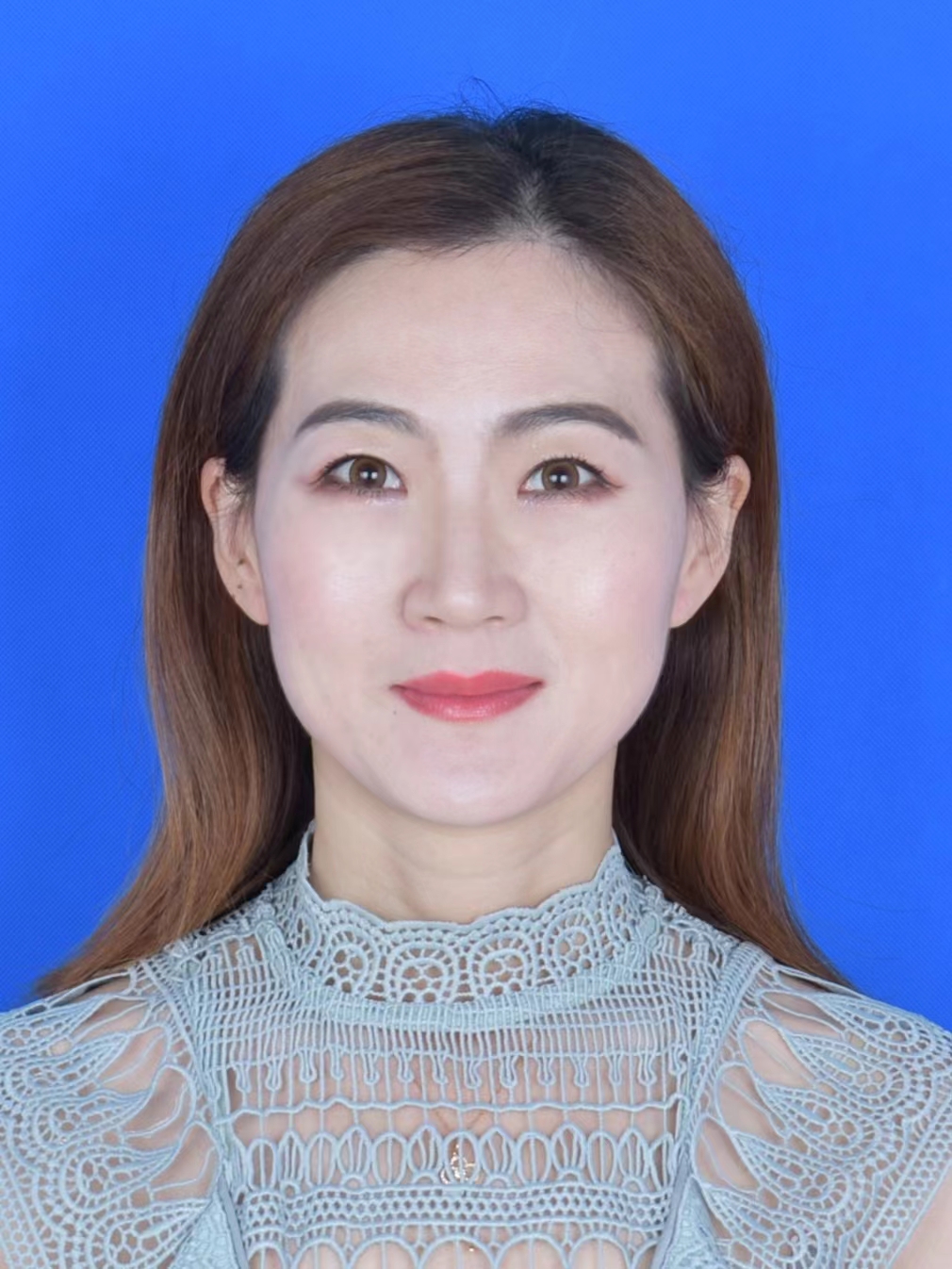}}]{Junmei Yang}
received the M.S. degree in cybernetics from the Chinese Academy of Sciences, Beijing,
China, in 2005, and the Ph.D. degree in systems
science from the Graduate School of Informatics,
Kyoto University, Kyoto, Japan, in 2008. She is currently an Associate Professor with the South China
University of Technology, Guangzhou, China. Her
current research interests focus on image processing,
speech enhancement, machine learning and artificial
intelligence
\end{IEEEbiography}
\begin{IEEEbiography}[{\includegraphics[width=1in,height=1.25in,clip,keepaspectratio]{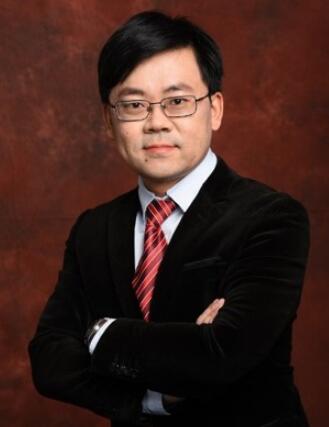}}]{Qianli Ma}
 (Member, IEEE) received the Ph.D. degree in
computer science from the South China University of Technology, Guangzhou, China, in 2008. He is a Professor with the School of Computer Science and Engineering, South China University of Technology. From 2016 to 2017, he was a Visiting Scholar with the University of California, San Diego.
His current research interests include machine
learning algorithms, data-mining methodologies,
and their applications.
\end{IEEEbiography}

\begin{IEEEbiography}[{\includegraphics[width=1in,height=1.25in,clip,keepaspectratio]{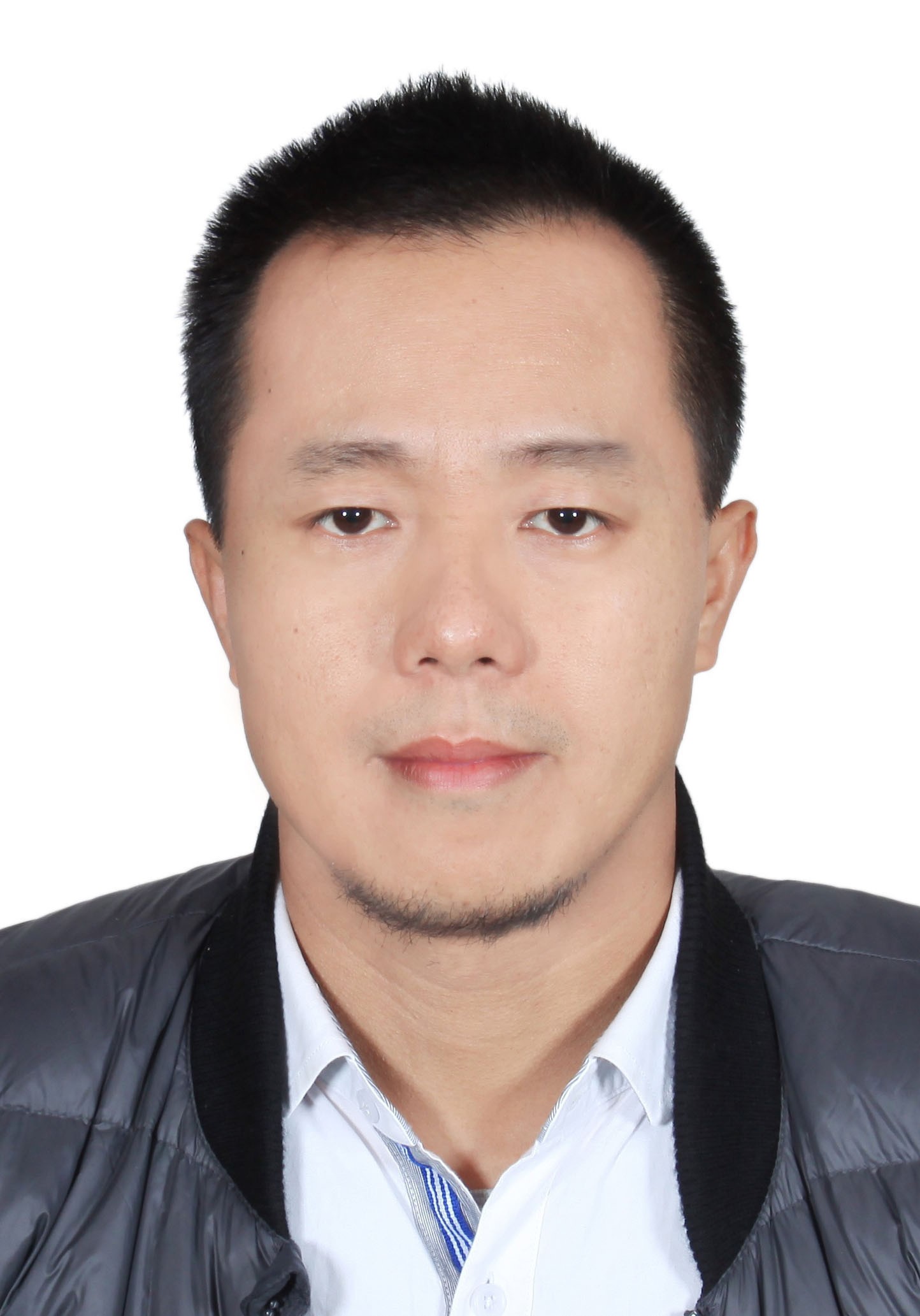}}]{Delu Zeng}
 (Member, IEEE) received his Bachelor degree in applied
mathematics and his Phd degree in information
and signal processing in South China University of
Technology (SCUT) in June 2003 and June 2010,
respectively. He is currently a full professor with the
School of Electronic and Information Engineering in
South China University of Technology (SCUT) in
Guangzhou.  He was a Visiting Scholar with Columbia
University, University of
Oulu and University of Waterloo. His current research focuses
on applied mathematics and its interdisciplinary application, including statistics
learning, image and speech processing, computational intelligence, machine learning, fitting, and
approximation and their applications to communication, industrial intelligence, etc. 
\end{IEEEbiography}

\end{document}